\newcommand\vknote[1]{{\color{blue} [[ #1 ]]}}
\def\ie{{\it i.e.}~}
\def\eg{{\it e.g.}~}
\def\F{{\mathcal F}}
\def\G{{\mathcal G}}
\def\opt{\mathrm{opt}}
\def\sign{\operatorname{sign}}
\def\NS{\operatorname{NS}}
\def\taumax{\tau_{\mbox{\footnotesize{max}}}}
\def\one{\mathds{1}}
\newcommand{\ip}[2]{\ensuremath \langle #1, #2 \rangle}
\newcommand{\twonorm}[1]{\ensuremath \Vert #1 \Vert_{2}}
\newcommand{\infinitynorm}[1]{\ensuremath \Vert #1 \Vert_{\infty}}
\newcommand{\operatornorm}[1]{\ensuremath \Vert #1 \Vert_{op}}
\newcommand{\frobenius}[1]{\ensuremath \Vert #1 \Vert_{F}}
\newcommand{\eat}[1]{}
\def\scA{{\mathcal A}}
\def\scE{{\mathcal E}}
\def\scG{{\mathcal G}}
\def\EX{\mathsf{EX}}
\def\E{\mathbb{E}}
\def\moo{{\{-1, 1\}}}
\def\reals{\mathbb{R}}
\def\one{\mathds{1}}
\def\OS{\mathsf{OS}}
\def\junta{{\mathcal J}}
\def\yy{{\mathcal Y}}
\def\poly{\mathrm{poly}}
\def\Nu{{\mathcal V}}
\def\transpose{T}
\def\th{{^{\textit{th}}}}
\newcommand{\dtv}[1]{{ \Vert #1 \Vert_{\mbox{\footnotesize{TV}}}}}
\newcommand{\pinorm}[1]{{ \Vert #1 \Vert_{\pi}}}
\newtheorem{lemma}{Lemma}
\newtheorem{proposition}{Proposition}
\newtheorem{corollary}{Corollary}
\newtheorem{theorem}{Theorem}
\newtheorem{remark}{Remark}
\newtheorem{definition}{Definition}
\title{MCMC Learning}
\author{Varun Kanade\thanks{This work was performed while the author was at the University of California, Berkeley and at the Simons Institute, Berkeley} \\ \'{E}cole normale sup\'{e}rieure \\
	\texttt{varun.kanade@ens.fr} \and Elchanan Mossel\thanks{Supported NSF grants DMS 1106999 and CCF 1320105, ONR grant number N00014-14-1-0823
and grant 328025 from the Simons Foundation} \\ University of Pennsylvania and University of California, Berkeley \\ \texttt{mossel@stat.berkeley.edu}}
\begin{document}

\maketitle

\begin{abstract}
The theory of learning under the uniform distribution is rich and deep, with
connections to cryptography, computational complexity, and the analysis of
boolean functions to name a few areas. This theory however is very limited
due to the fact that the uniform distribution and the corresponding Fourier
basis are rarely encountered as a statistical model.

A family of distributions that vastly generalizes the uniform distribution on
the Boolean cube is that of distributions represented by Markov Random Fields
(MRF). Markov Random Fields are one of the main tools for modeling high
dimensional data in many areas of statistics and machine learning.

In this paper we initiate the investigation of extending central ideas, methods
and algorithms from the theory of learning under the uniform distribution to the
setup of learning concepts given examples from MRF distributions. In particular,
our results establish a novel connection between properties of MCMC sampling of
MRFs and learning under the MRF distribution. 

\end{abstract}

\section{Introduction}
\label{sec:intro}
The theory of learning under the uniform distribution is well developed and has
rich and beautiful connections to discrete Fourier analysis, computational
complexity, cryptography and combinatorics to name a few areas. However, these
methods are very limited since they rely on the assumption that examples are
drawn from the uniform distribution over the Boolean cube or other product
distributions. In this paper we make a first step in extending ideas,
techniques and algorithms from this theory to a much broader family of
distributions, namely, to Markov Random Fields. 

\subsection{Learning Under the Uniform Distribution}

Since the seminal work of \citet{LMN:1993}, the study of learning under the
uniform distribution has developed into a major area of research; the principal
tool is the simple and explicit Fourier expansion of functions defined on the
boolean cube ($\moo^n$):
\[ f(x) = \sum_{S \subseteq [n]} \hat{f}(S) \chi_S(x), \quad \chi_S(x) =
\prod_{i \in S} x_i.  \]
This connection allows a rich class of algorithms that are based on learning
coefficients of $f$ for several classes of functions.  Moreover, this
connection allows application of sophisticated results in the theory of Boolean
functions including hyper-contractivity, number theoretic properties and
invariance, \eg~\citep{OS:2007,ShpilkaTal:11,KOS:2002}.  On the other hand, the
central role of the uniform distribution in computational complexity and
cryptography relates learning under the uniform distribution to key themes in
theoretical computer science including de-randomization, hardness and
cryptography, \eg~\citep{Kharitonov:1993,NR:2004,D-SLMSWW:2008}.

Given the elegant theoretical work in this area, it is a little disappointing
that these results and techniques impose such stringent assumptions on the
underlying distribution. The assumption of independent examples sampled from
the uniform distribution is an idealization that would rarely, if ever, be
applicable in practice. In \emph{real} distributions, features are correlated
and correlations deem the analysis of algorithms that assume independence
useless. Thus, it is worthwhile to ask the following question: \medskip \\
\noindent {\bf Question 1:}
{\em Can the Fourier Learning Theory extend to correlated features?}

\subsection{Markov Random Fields} 

Markov random fields are a standard way of representing high dimensional
distributions (see \eg~\citep{KS:1980}). Recall that a Markov random field on a
finite graph $G=(V,E)$ and taking values in a discrete set $\scA$, is a
probability distribution on $\scA^V$ of the form $\Pr[(\sigma_v)_{v \in V}] =
Z^{-1} \prod_{C} \phi_C((\sigma_v)_{v \in C})$, where the product is over all
cliques $C$ in the graph, $\phi_C$ are some non-negative valued functions and
$Z$ is the normalization constant. Here $(\sigma_v)_{v \in V}$ is an assignment
from $V \rightarrow \scA$.

Markov Random Fields are widely used in vision, computational biology,
biostatistics, spatial statistics and several other areas. The popularity of
Markov Random Fields as modeling tools is coupled with extensive algorithmic
theory studying sampling from these models, estimating their parameters and
recovering them.
However, to the best of our knowledge the following question has not been
studied. \medskip \\
\noindent {\bf Question 2:}
{\em  For an unknown function $f : \scA^V \to \{-1,1\}$ from a class $\F$ and labeled samples
from the Markov Random Field, can we learn the function?} \medskip \\ 
Of course the problem stated above is a special case of learning a function
class given a general distribution~\citep{Valiant:1984, KV:1994}. Therefore, a
learning algorithm that can be applied for a general distribution can be also
applied to MRF distributions. However, the real question that we seek to ask
above is the following: {\em Can we utilize the structure of the MRF to obtain
better learning algorithms?} 

\subsection{Our Contributions} 

In this paper we begin to provide an answer to the questions posed above. We show
how methods that have been used in the theory of learning under the uniform
distribution can be also applied for learning from certain MRF distributions. 

This may sound surprising as the theory of learning under the uniform
distribution strongly relies on the explicit Fourier representation of
functions. Given an MRF distribution, one can also imagine expanding a function
in terms of a \emph{Fourier basis} for the MRF, the eigenvectors of the
transition matrix of the Gibbs Markov Chain associated with the MRF, which are
orthogonal with respect to the MRF distribution.  It seems however that this
approach is na\"{i}ve since: 
\begin{enumerate}
\item[(a)] Each eigenvector is of size $|\scA|^{|V|}$; how does one store them?
\item[(b)] How does one find these eigenvectors?
\item[(c)] How does one find the expansion of a function in terms of these
eigenvectors? 
\end{enumerate}

\noindent {\bf MCMC Learning}: 
The main effort in this paper is to provide an answer to the questions above.
For this we use Gibbs sampling, which is a Markov chain Monte Carlo (MCMC)
algorithm that is used to sample from an MRF. We will use this MCMC method as
the main engine in our learning algorithms. The Gibbs MC is reversible and
therefore its eigenvectors are orthogonal with respect to the MRF distribution.
Also, the sampling algorithm is straightforward to implement given access to
the underlying graph and potential functions. There is a vast literature
studying the convergence rates of this sampling algorithm; our results require
that the Gibbs samplers are rapidly mixing.
%

In Section~\ref{sec:eigen}, we show how the eigenvectors of the transition
matrix of the Gibbs MC can be computed implicitly. We focus on the eigenvectors
corresponding to the higher eigenvalues.  These eigenvectors correspond to the
stable part of the spectrum, \ie the part that is not very sensitive to small
perturbation. Perhaps surprisingly, despite the exponential size of the matrix,
we show that it is possible to adapt the power iteration method to this
setting. 

A function from $\scA^{V} \rightarrow \reals$ can be viewed as a $|\scA|^{|V|}$
dimensional vector and thus applying powers of the transition matrix to it
results in another function from $\scA^V \rightarrow \reals$. Observe that the
powers of a transition matrix define distributions in time over the state space
of the the Gibbs MC. Thus, the value of the function obtained by applying
powers of a transition matrix can be approximated by sampling using the Gibbs
Markov chain. Our main technical result (see Theorem~\ref{thm:spectrum}) shows
that any function approximated by ``top'' eigenvectors of the transition matrix
of the Gibbs MC can be expressed a linear combination of powers of the the
transition matrix applied to a suitable collection of ``basis'' functions,
whenever certain technical conditions hold.

%
The reason for focusing on the part of the spectrum corresponding to stable
eigenvectors is twofold.  First, it is technically easier to access this part of
the spectrum. Furthermore, we think of eigenvectors corresponding to small
eigenvalues as unstable. Consider Gibbs sampling as the true temporal evolution
of the system and let $\nu$ be an eigenvector corresponding to a small
eigenvalue.  Then calculating $\nu(x)$ provides very little information on
$\nu(y)$ where $y$ is obtained from $x$ after a short evolution of the Gibbs
sampler. The reasoning just applied is a generalization of the classical
reasoning for concentrating on the low frequency part of the Fourier expansion
in traditional signal processing. \medskip 

\noindent{\bf Noise Sensitivity and Learning}: In the case of the uniform
distribution, the noise sensitivity (with parameter $\epsilon$) of a boolean
function $f$, is defined as the probability that $f(x) \neq f(y)$, where $x$ is
chosen uniformly at random and $y$ is obtained from $x$ by flipping each bit
with probability $\epsilon$. \citet{KOS:2002} gave an elegant characterization
of learning in terms of noise sensitivity. Using this characterization, they
showed that intersections and thresholds of halfspaces can be elegantly learned
with respect to the uniform distribution. In
Section~\ref{sec:noise-sensitivity}, we show that the notion of noise
sensitivity and the results regarding functions with low noise sensitivity can
be generalized to MRF distributions. \medskip

\noindent {\bf Learning Juntas}: We also consider the so-called junta learning
problem. A junta is a function that depends only on a small subset of the
variables. Learning juntas from i.i.d. examples is a notoriously difficult
problem, see~\citep{Blum:1992, MOS:2004}. However, if the learning algorithm has
access to \emph{labeled} examples that are received from a Gibbs sampler, these
correlated examples can be useful for learning juntas. We show that under
standard technical conditions on the Gibbs MC, juntas can be learned in
polynomial time by a very simple algorithm. These results are presented in
Section~\ref{sec:juntas}. \medskip

\noindent {\bf Relation to Structure Learning}: In this paper, we assume that
learning algorithms have the ability to sample from the Gibbs Markov Chain
corresponding to the MRF. While such data would be hard to come by in practice,
we remark that there is a vast literature regarding learning the structure and
parameters of MRFs using \emph{unlabeled} data and that it has recently been
established that this can be done efficiently under very general conditions
\citep{Bresler:2014}.  Once the structure of the underlying MRF is known, Gibbs
sampling is an extremely efficient procedure. Thus, the methods proposed in
this work could be used in conjunction with the techniques for MRF structure
learning. The eigenvectors of the transition matrix could be viewed as
\emph{features} for learning, thus the methods proposed in this paper can be
viewed as feature learning.

\subsection{Related Work} 

The idea of considering Markov Chains or Random Walks in the context of
learning is not new.  However, none of the results and models considered before
give non-trivial improvements or algorithms in the context of MRFs.  Work of
\citet{AV:1995} studies a Markov chain based model where the main interest was
in characterizing the number of new nodes visited.  \citet{Gamarnik:1999}
observed that after the mixing time a chain can simulate i.i.d. samples from
the stationary distribution and thus obtained learning results for general
Markov chains. \citet{BFH:1994} and \citet{BMOS:2005} considered random walks
on the discrete cube and showed how to utilize the random walk model to learn
functions that cannot be easily learned from i.i.d. examples from the uniform
distribution on the discrete cube. In this same model, \citet{JW:2014} showed
that agnostic learning parities and PAC-learning thresholds of parities (TOPs)
could be performed in quasi-polynomial time.

\section{Preliminaries}
\label{sec:prelim}
Let $X$ be an instance space. In this paper, we will assume that $X$ is finite
and in particular we are mostly interested in the case when $X = \scA^n$, where
$\scA$ is some finite set. For $x, x^\prime \in \scA^n$, let $d_H(x, x^\prime)$
denote the Hamming distance between $x$ and $x^\prime$, \ie $d_H(x, x^\prime) =
|\{ i ~|~ x_i \neq x^\prime_i \}|$. 

Let $M = \langle X, P \rangle$ denote a time-reversible discrete time ergodic
Markov chain with transition matrix $P$.
When $X = \scA^n$, we say that $M$ has \emph{single-site} transitions if for
any \emph{legal} transition $x \rightarrow x^\prime$ it is the case that
$d_H(x, x^\prime) \leq 1$, \ie $P(x, x^\prime) = 0$ when $d_H(x, x^\prime) >
1$. Let $X^0 = x_0$ denote the starting state of a Markov chain $M$. Let
$P^t(x_0, \cdot)$ denote the distribution over states at time $t$, when
starting from $x_0$. Let $\pi$ denote the stationary distribution of $M$.
Denote by $\tau_M(x_0)$ the quantity: 
\[ \tau_M(x_0) = \min\{t : \dtv{P^{t}(x_0, \cdot) - \pi} \leq \frac{1}{4} \} \]
Then, define the mixing time of $M$ as $\tau_M = \max_{x_0 \in X} \tau_M(x_0)$.
We say that a Markov chain with state space $X = \scA^n$ is rapidly mixing if
$\tau_M \leq \poly(n)$.

While all the results in this paper are general, we describe two basic graphical
models that will aid the discussion. 

\subsection{Ising Model}
%
Consider a collection of nodes, $[n] = \{1, \ldots, n\}$, and for each pair $i,
j$, there is an associated interaction energy, $\beta_{ij}$. Suppose $([n], E)$
denotes the graph, where $\beta_{ij} = 0$ for $(i, j) \not\in E$.  A state
$\sigma$ of the system consists of an assignment of \emph{spins}, $\sigma_i \in
\{ +1, -1 \}$, to the nodes $[n]$. The Hamiltonian of configuration $\sigma$ is
defined as
\[ 
H(\sigma) = - \sum_{(i, j) \in E} \beta_{ij} \sigma_i \sigma_j - B \sum_{i \in [n]}
\sigma_i,
\]
where $B$ is the external field. The energy of a configuration $\sigma$ is
$\exp(-H(\sigma))$.   

The Glauber dynamics on the Ising model defines the Gibbs Markov Chain $M = \langle
\moo^n, P \rangle$, where the transitions are defined as follows:\smallskip \\
$\mbox{~~}$(i)~In state $\sigma$, pick a node $i \in [n]$ uniformly at random.
With probability $1/2$ do nothing, otherwise\smallskip \\ 
$\mbox{~~}$(ii)~Let $\sigma^\prime$ be obtained by flipping the spin at node
$i$.  Then, with probability $\exp(-H(\sigma^\prime))/(\exp(-H(\sigma) +
\exp(-H(\sigma^\prime)))\}$, the state at the next time-step is $\sigma^\prime$.
Otherwise the state at the next time-step remains unchanged.

The stationary distribution of the above dynamics is the \emph{Gibbs
distribution}, where $\pi(\sigma) \propto \exp(-H(\sigma))$. It is known that
there exists a $\beta(\Delta) > 0$ such that for all graphs of maximal degree
$\Delta$, if $ \max |\beta_{i,j}| < \beta(\Delta)$ then the dynamics above is
rapidly mixing~\citep{DobrushinShlosman:85,MosselSly:13}. 

\subsection{Graph Coloring}

Let $G = ([n], E)$ be a graph. For any $q > 0$, a \emph{valid}  $q$-coloring of
the graph $G$ is a function $C: V \rightarrow [q]$ such that for every $(i, j)
\in E$, $C(i) \neq C(j)$. For a node $i$, let $N(i) = \{j ~|~ (i, j) \in E\}$
denote the set of neighbors of $i$. Consider the Markov chain defined by the
following transition: \smallskip \\
$\mbox{~~}$(i)~In state (valid coloring) $C$, choose a node $i \in [n]$
uniformly at random. With probability $1/2$ do nothing, otherwise:\smallskip\\
$\mbox{~~}$(ii)~Let $S \subseteq [q]$ be the subset of colors defined by $S =
\{C(j) ~|~ j \in N(i) \}$. Define $C^\prime$ to be the coloring obtained by
choosing a random color $c \in [q] \setminus S$ and set $C^\prime(i) = c$,
$C^\prime(j) = C(j)$ for $j \neq i$. The state at the next time-step is
$C^\prime$.

%
The stationary distribution of the above Markov chain is uniform over the valid
colorings of the graph. It is known that the above chain is rapidly mixing when
the condition $q \geq 3 \Delta$ is satisfied, where $\Delta$ is the maximal
degree of the graph (in fact much better results are
known~\citep{Jerrum:95,Vigoda:99}).

\section{Learning Models}
\label{sec:model}
Let $X$ be a finite instance space and let $M = \langle X, P \rangle$ be an
irreducible discrete-time reversible Markov chain, where $P$ is the transition
matrix. Let $\pi_M$ denote the stationary distribution of $M$, $\tau_M$ the
mixing time. We assume that the Markov chain $M$ is \emph{rapidly mixing}, \ie
$\tau_M \leq \poly(\log(|X|))$ (note that if $X = \scA^n$, $\log(|X|) = O(n)$).

We consider the problem of learning with respect to stationary distributions of
rapidly mixing Markov chains (\eg defined by an MRF). The two graphical models
described in the previous section serve as examples of such settings. The
learning algorithm has access to the \emph{one-step} oracle, $\OS(\cdot)$, that
when queried with a state $x \in X$, returns the state after one step. Thus,
$\OS(x)$ is a random variable with distribution $P(x, \cdot)$ and can be used
to simulate the Markov chain.

Let $\F$ be a class of boolean functions over $X$. The goal of the learning
algorithm is to learn an unknown function, $f \in \F$, with respect to the
stationary distribution $\pi_M$ of the Markov chain $M$. As described above, the
learning algorithm has the ability to simulate the Markov chain using the
one-step oracle.  We will consider both PAC learning and agnostic learning. Let
$L : X \rightarrow \{-1, 1\}$ be a (possibly randomized) labeling function. In
the case of PAC learning $L$ is just the target function $f$; in the case of
agnostic learning $L$ is allowed to be completely arbitrary. Let $D$ denote the
distribution over $X \times \moo$, where for any $(x, y) \sim D$, $x \sim \pi_M$
and $y = L(x)$. \medskip \\
\noindent{\bf PAC Learning}~\citep{Valiant:1984}: In PAC learning the labeling
function is the target function $f$. The goal of the learning algorithm is to
output a hypothesis, $h : X \rightarrow \moo$, which with probability at least
$1 - \delta$ satisfies $\operatorname{err}(h) = \Pr_{x \sim \pi_M} [ h(x) \neq
f(x)] \leq \epsilon$. \medskip \\
\noindent{\bf Agnostic Learning}~\citep{KSS:1994,Haussler:1992}: In agnostic the
labeling function $L$ may be completely arbitrary. Let $D$ be the distribution
as defined above.  Let $\opt = \min_{f \in \F} \Pr_{(x, y) \sim D} [f(x) \neq
y]$.  The goal of the learning algorithm is to output a hypothesis, $h : X
\rightarrow \moo$, which with probability at least $1 - \delta$ satisfies, 
\[\operatorname{err}(h) = \Pr_{(x, y) \sim D} [h(x) \neq y] \leq \opt + \epsilon \]

Typically, one requires that the learning algorithm have time and sample
complexity that is polynomial in $n$, $1/\epsilon$ and $1/\delta$. So far, we
have not mentioned what access the learning algorithm has to labeled examples.
We consider two possible settings. \medskip

\noindent{\bf Learning with i.i.d. examples only}:
In this setting, in addition to having access to the one-step oracle,
$\OS(\cdot)$, the learning algorithm has access to the standard example oracle,
which when queried returns an example $(x, L(x))$, where $x \sim \pi_M$ and $L$
is the (possibly randomized) labeling function. \smallskip

\noindent{\bf Learning with labeled examples from MC}: In this setting, the
learning algorithm has access to a \emph{labeled} random walk, $(x^1, L(x^1)),
(x^2, L(x^2)), \ldots, $ of the Markov chain. Here $x^{i+1}$ is the (random)
state one time-step after $x^i$ and $L$ is the labeling function. Thus, the
learning algorithm can potentially exploit \emph{correlations} between
consecutive examples. \smallskip

The results in Section~\ref{sec:eigen} only require access to i.i.d. examples.
Note that these are sufficient to compute inner products with respect to the
underlying distribution, a key requirement for \emph{Fourier analysis}. The
result in Section~\ref{sec:juntas} is only applicable in the stronger setting
where the learning algorithm receives examples from a labeled Markov chain.
Note that since the chain is rapidly mixing, the learning algorithm by itself
is able to (approximately) simulate i.i.d. random examples.

%

\section{Harmonic Analysis using Eigenvectors}
\label{sec:eigen}
In this section, we show that the eigenvectors of the transition matrix can be
(approximately) expressed as linear combinations of a suitable collection of
basis functions and powers of the transition matrix applied to them.

Let $M = \langle X, P \rangle$ be a time-reversible discrete Markov chain. Let
$\pi$ be the stationary distribution of $M$. We consider the set of
right-eigenvectors of the matrix $P$.  The largest eigenvalue of $P$ is $1$ and
the corresponding eigenvector has $1$ in each co-ordinate. The left-eigenvector
in this case is the stationary distribution. For simplicity of analysis we
assume that $P(x,x) \geq 1/2$ for all $x$ which implies that all the eigenvalues of
$P$ are non-negative.  We are interested in identifying as many as possible of
the remaining eigenvectors with eigenvalues less than $1$. 

For functions, $f, g : X \rightarrow \reals$, define the inner-product, $\langle
f,g \rangle = \E_{x \sim \pi}[f(x) g(x)]$, and the norm $\Vert f \Vert_2 =
\sqrt{\langle f, f \rangle}$. Throughout this section, we will always consider
inner products and norms with respect to the distribution $\pi$.

Since $M$ is reversible, the right eigenvectors of $P$ are orthogonal with
respect to $\pi$. Thus, these eigenvectors can be used as a basis to represent
functions from $X \rightarrow \reals$.  First, we briefly show that this
approach generalizes the standard Fourier analysis on the Boolean cube, which
is commonly used in uniform-distribution learning.
 
\subsection{Fourier Analysis over the Boolean Cube}

\label{sec:unif-boolean}
Let $\moo^n$ denote the boolean cube. For $S \subseteq [n]$, the parity
function over $S$ is defined as  $\chi_{S}(x) = \prod_{i \in S} x_i$. With
respect to the uniform distribution $U_n$ over $\moo^n$, the set of parity
functions $\{ \chi_S ~|~ S \subseteq [n] \}$ form an orthonormal \emph{Fourier}
basis, \ie for $S \neq T$, $\E_{x \sim U_n}[\chi_S(x) \chi_T(x)] = 0$ and
$\E_{x \sim U_n}[\chi_S(x)^2] = 1$.

We can view the uniform distribution over $\moo^n$ as arising from the
stationary distribution of the following simple Markov chain. For $x, x^\prime$,
such that $x_i \neq x_i^\prime$ and $x_j = x^\prime_j$ for $j \neq i$, let $P(x,
x^\prime) = 1/(2n)$; $P(x, x) = 1/2$. The remaining values of the matrix $P$ are
set to $0$. This chain is rapidly mixing with mixing time $O(n \log(n))$ and the
stationary distribution is the uniform distribution over $\moo^n$. It is easy to
see and well known that every parity function $\chi_S$ is an eigenvector of $P$
with eigenvalue $1 - |S|/n$. Thus, Fourier-based learning under the uniform
distribution can be seen as a special case of Harmonic analysis using
eigenvectors of the transition matrix.


\subsection{Representing Eigenvectors Implicitly}

As in the case of the uniform distribution over the boolean cube, we would like
to find the eigenvectors of the transition matrix of a general Markov chain,
$M$, and use these as an orthonormal basis for learning. Unfortunately, in most
cases of interest explicit succinct representations of eigenvectors don't
necessarily exist and the size of the set $|X|$ is likely to be prohibitively
large, typically exponential in $n$, where $n$ is the length of the vectors in
$X$.  Thus, it is not possible to use standard techniques to obtain eigenvectors
of $P$.  Here, we show how these eigenvectors may be computed implicitly.

An eigenvector of the transition matrix $P$ is a function $\nu : X \rightarrow
\reals$. Throughout this section, we will view any function $g : X \rightarrow
\reals$ as an $|X|$-dimensional vector with value $g(x)$ at position $x$.  As
such, even writing down such a vector corresponding to an eigenvector $\nu$ is
not possible in polynomial time. Instead, our goal is to show that whenever a
suitable collection of \emph{basis functions} exists, the eigenvectors have a
simple representation in terms of these \emph{basis functions} and powers of
the transition matrix applied to them, as long as the underlying Markov chain
$M$ satisfies certain conditions. The condition we require is that the
\emph{spectrum} of the transition matrix be \emph{discrete}, \ie eigenvalues
show sharp drops.  Between these drops, the eigenvalues may be quite close to
each other, and in fact even equal. Figure~\ref{fig:spectrum} shows the
spectrum of the transition matrix of the Ising model on a cycle of $10$ nodes
for various values of $\beta$, the inverse temperature parameter. The case when
$\beta = 0$ corresponds to the uniform distribution on $\{-1, 1\}^{10}$.  One
notices that the spectrum is \emph{discrete} for small values of $\beta$
(high-temperature regime).

\begin{figure}
\begin{tabular}{cc}
\includegraphics[width=0.4\textwidth]{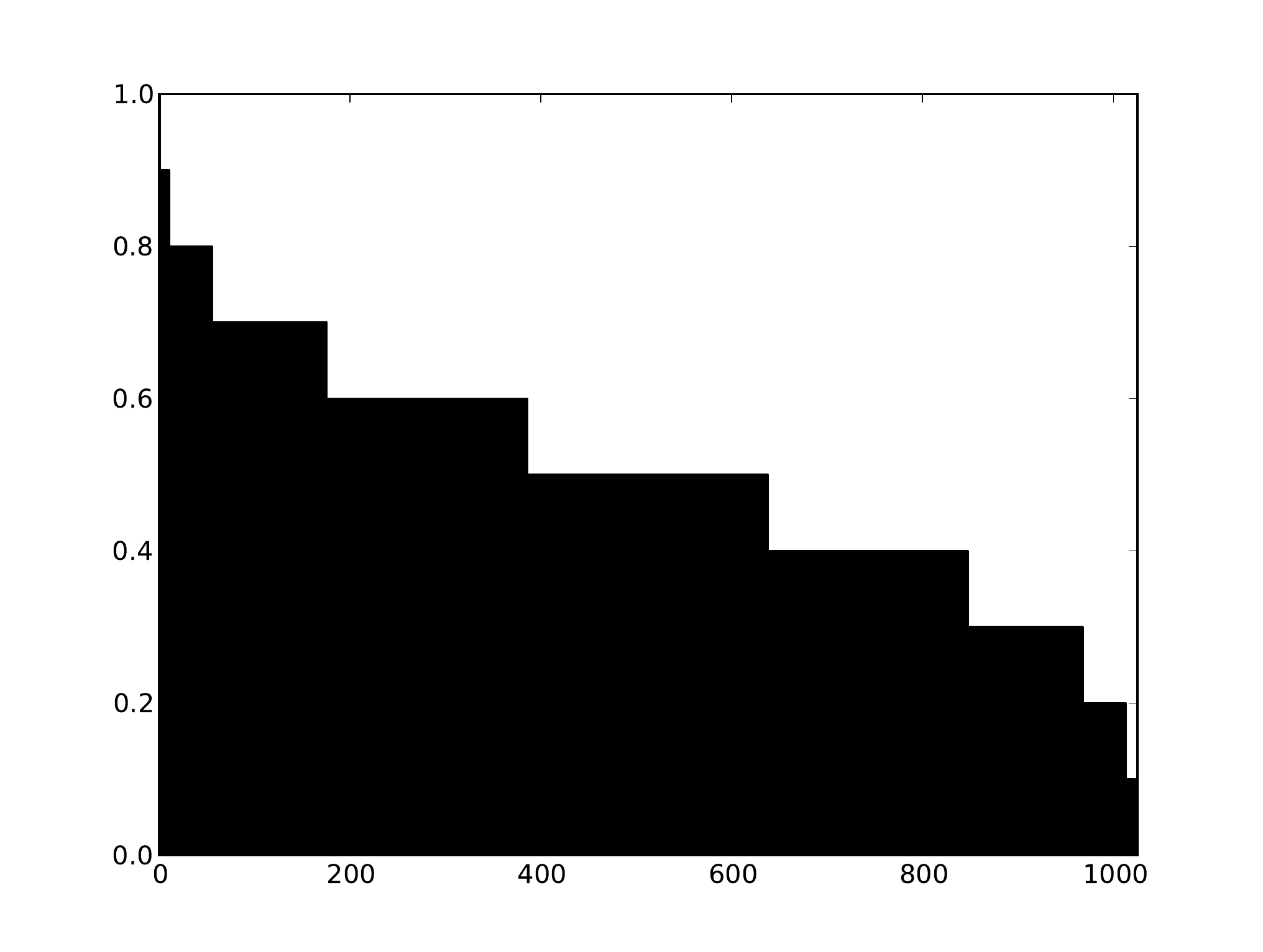} &
\includegraphics[width=0.4\textwidth]{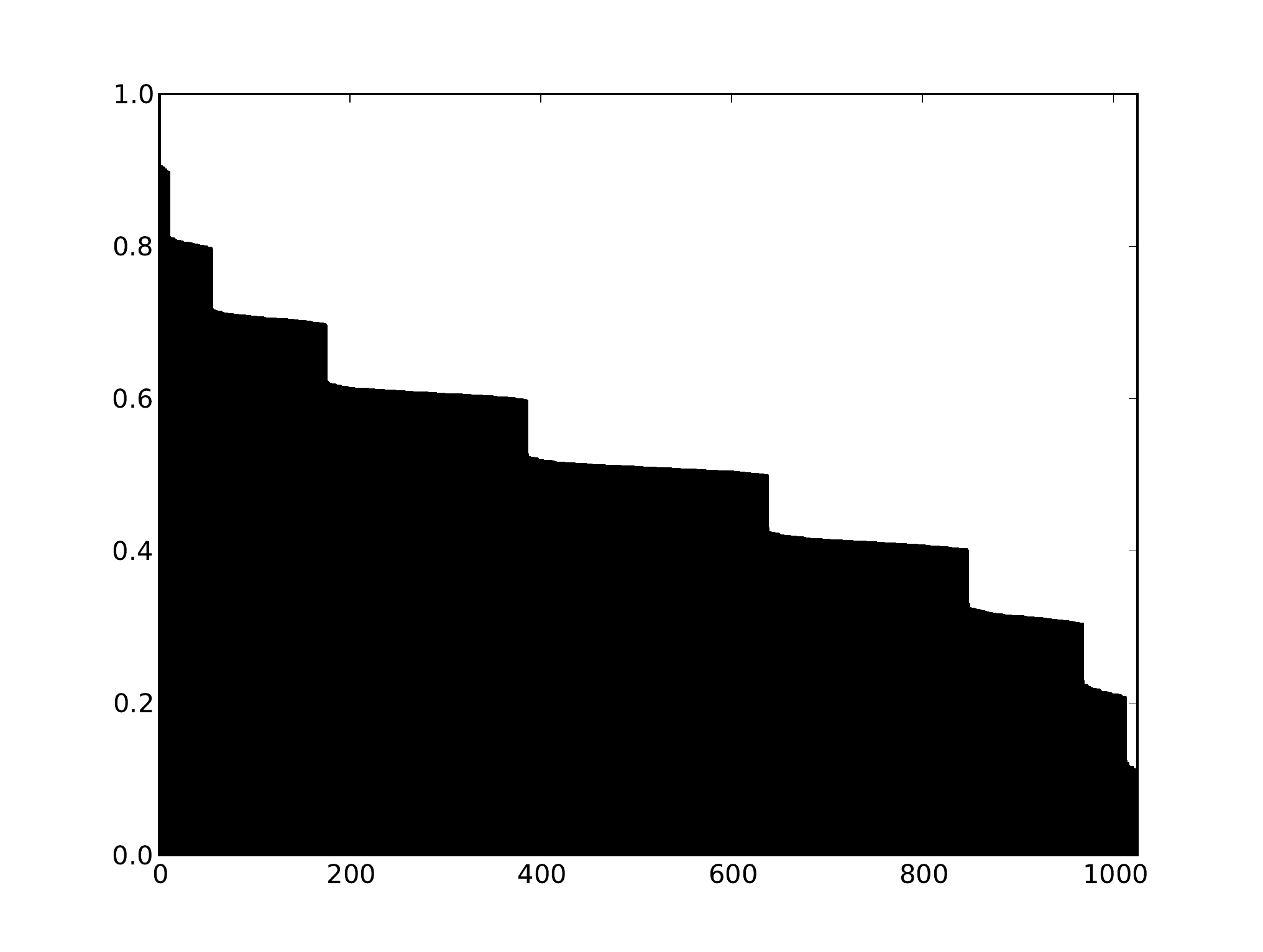} \\
$\beta = 0.00$ & $\beta=0.02$ \\
\includegraphics[width=0.4\textwidth]{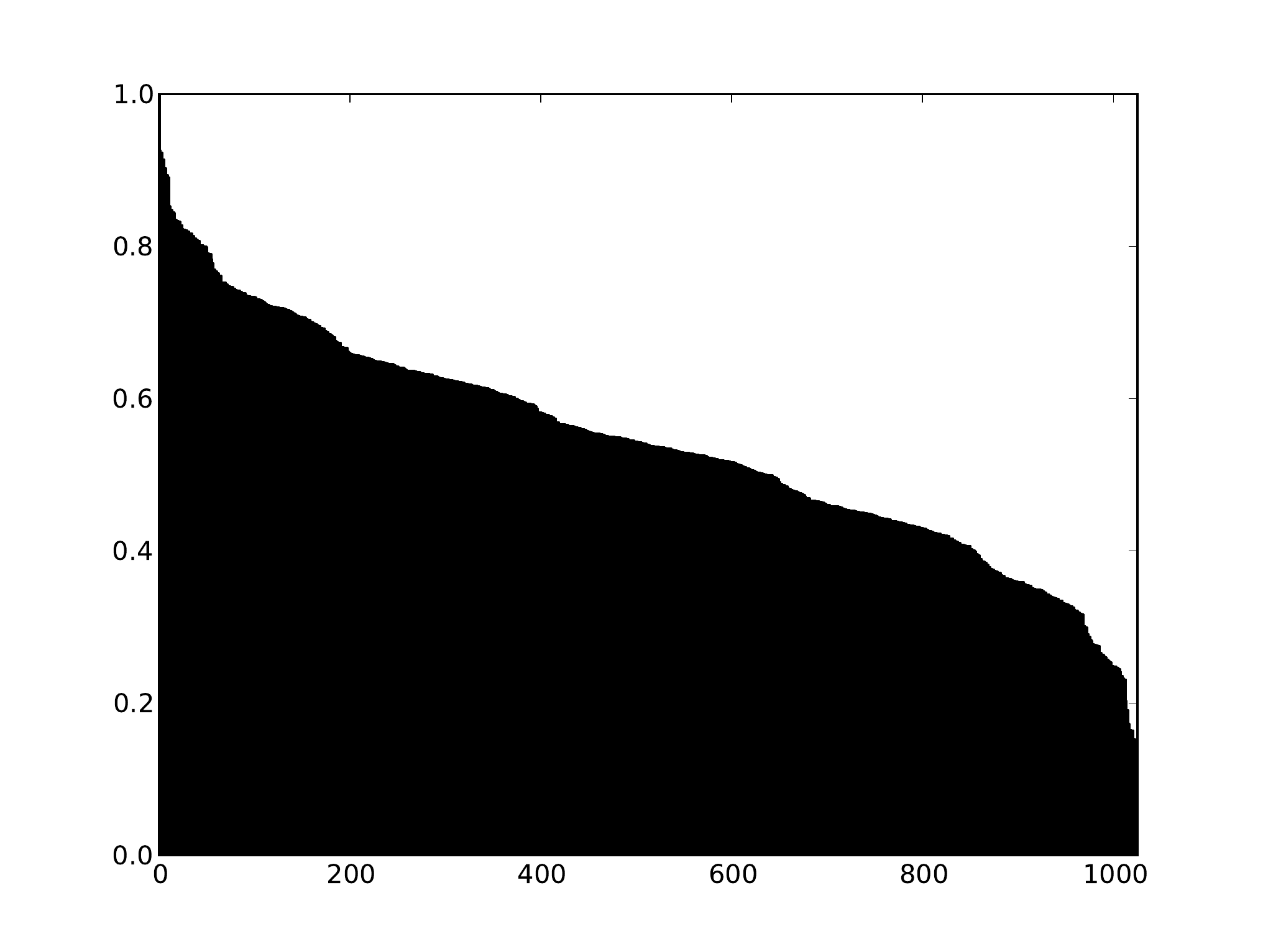} &
\includegraphics[width=0.4\textwidth]{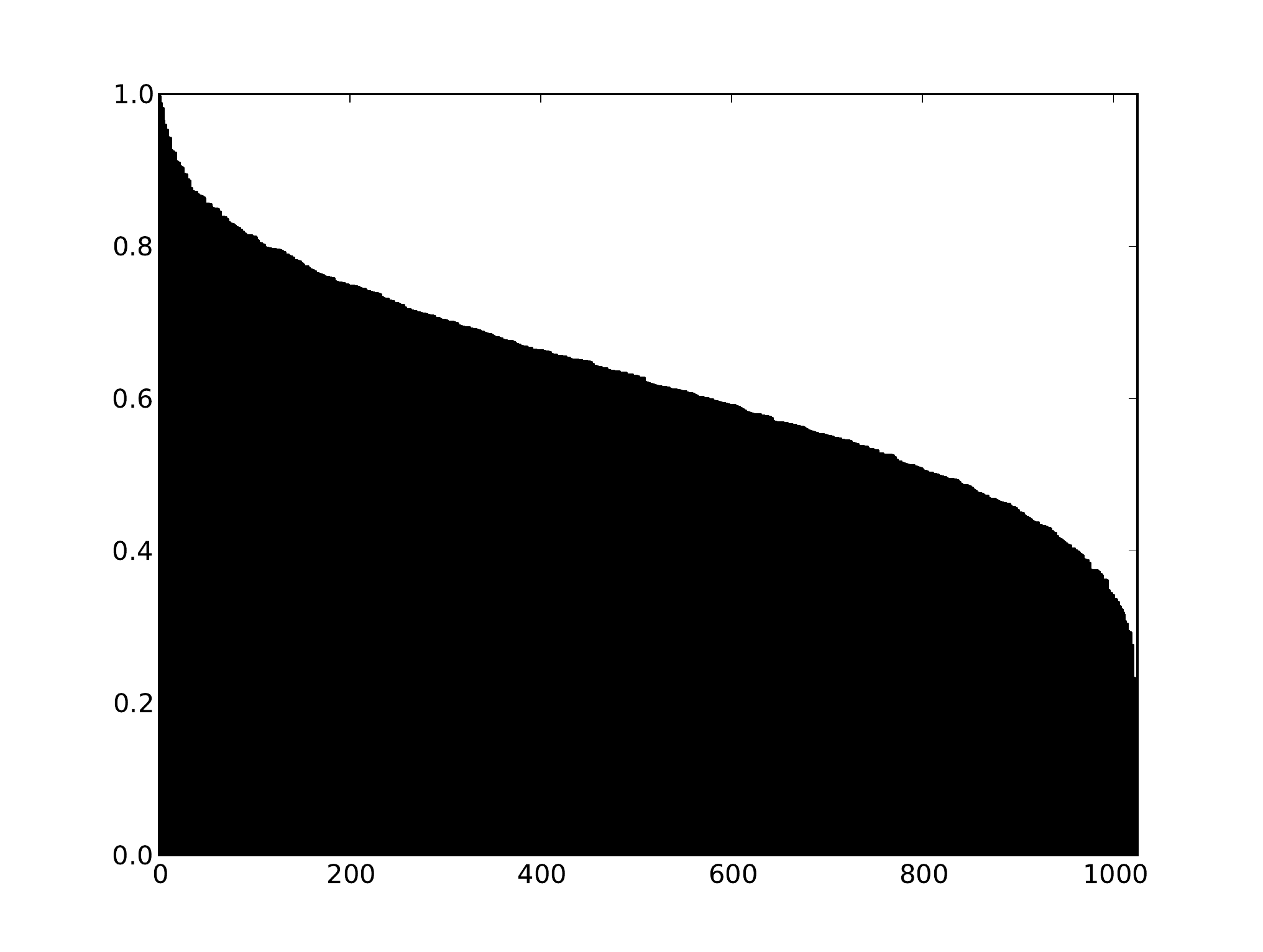} \\
$\beta = 0.1$ & $\beta=1.00$ 
\end{tabular}
\caption{\label{fig:spectrum} Spectrum of the transition matrix of the Gibbs MC
for the Ising model on a cycle of length $10$ for various values of $\beta$, the inverse temperature parameter.}
\end{figure}

Next, we formally define the requirements of a discrete spectrum.

\begin{definition}[Discrete Spectrum] \label{defn:discrete-spectrum} Let $P$ be
the transition matrix of a Markov chain and let $\lambda_1 \geq \lambda_2 \geq
\cdots \geq \lambda_i \geq \cdots \geq 0$ be the eigenvalues of $P$ in
non-increasing order. We say that $P$ has an $(N, k, \gamma, c)$-discrete
spectrum, if there exists a sequence $1 \leq i_1 \leq i_2 \leq \cdots \leq i_k
\leq |X|$ such that the following are true
\begin{enumerate}  
	\item Between $\lambda_{i_j}$ and $\lambda_{i_{j}+1}$, there is a non-trivial
	gap, \ie for $j \in \{i_1, \ldots, i_k \}$, $\frac{\lambda_{j +
	1}}{\lambda_{j}} \leq \gamma  < 1$
	\item Let $i_0 = 1$, we refer to $S_j = \{i_{j-1} + 1, \ldots, i_j\}$ as the
	$j\th$ block (of eigenvalues and eigenvectors). Then the size of each block,
	$|S_j| \leq N$
	\item The eigenvalue $\lambda_{i_k}$ is not too small (with respect to the
	gap at the end of each block), $\lambda_{i_k} \geq \gamma^{c}$
\end{enumerate}
\end{definition}

In general, the parameter $\gamma$ will depend on $n$ and we require that
$\gamma \leq 1 - 1/\poly(n)$ in order to separate eigenvectors from the various
blocks. One would expect $N$ to have dependence on both $n$ and $k$
and $c$ to have some dependence on $k$.  As an example, we note that the
spectrum corresponding to the Markov chain discussed in
Section~\ref{sec:unif-boolean} is indeed discrete with the following parameters:
$k$ can be any integer, $N = n^k$, $\gamma = 1 - (1/n)$ and $c = O(k)$. 

In order to extract eigenvectors of $P$, we start with a collection of functions
which have significant \emph{Fourier mass} on the top eigenvectors. For an
eigenvector $\nu$, it's \emph{Fourier coefficient} in any function $g : X
\rightarrow  \reals$ is simply $\ip{g}{\nu}$. Condition 2 in
Definition~\ref{defn:useful-basis} implicitly requires that the inner product
$\langle g, \nu \rangle$ be large for $\nu$ with a large eigenvalue for some $g$
in the set. In addition, since eigenvalues corresponding to different
eigenvectors may be equal or close together, we require a set of functions where
the matrix corresponding to the \emph{Fourier coefficients} of such eigenvectors is
well-conditioned.  Formally, we define the notion of a \emph{useful basis} of
functions with respect to a transition matrix $P$ which has an $(N, k, \gamma,
c)$-discrete spectrum.

\begin{definition}[Useful Basis] \label{defn:useful-basis} Let $\G$ be a
collection of functions from $X \rightarrow \reals$. We say that $\G$ is
$\alpha$-useful for an $(N, k, \gamma, c)$-discrete $P$ if the following hold:
\begin{enumerate}
\item For every $g \in \G$, $\Vert g \Vert_{\infty} \leq 1$
\item Let $i_0 = 0$, then for any $1 \leq j \leq k$, if $N_j = i_j - i_{j-1}$
(the size of the $j\th$ block), there exist $N_j$ functions $g_1, \ldots,
g_{N_j} \in \G$, such that the $N_j \times N_j$ matrix $A$ defined by
$a_{m,l} = \ip{g_m}{\nu_l}$, where $m \in \{1, \ldots, N_j\}$ and $l \in
\{i_{j-1} + 1, \ldots, i_j \}$, has smallest singular value at least $1/\alpha$.
Alternatively, the operator norm of $A^{-1}$, $\operatornorm{A^{-1}}$ is at most
$\alpha$.
\end{enumerate}
\end{definition}

The parameter $\alpha$ will have dependence on $N$---a polynomial dependence on
$N$ would result in efficient algorithms. In general, it is not known which
Markov chains admit a useful basis that has a succinct representation. In the
case of the uniform distribution, clearly the collection of parity functions
already is such a \emph{useful basis}. However, we observe that there are other
useful bases as well. For example if for some $k$, one wished to extract all
eigenvectors with eigenvalues at least $1 - k/n$ (parities of size at most $k$),
one can start with the collection of functions that is disjunctions (or
conjunctions) on at most $k$ variables. Note that in this case, there is no
contribution from eigenvectors with low eigenvalues (\ie noise) in the basis
functions. However, one would not expect to find such a \emph{useful basis}
without any contributions from eigenvectors with low eigenvalues when the
stationary distribution is not product. 

We now show how functions from a \emph{useful basis} for a transition matrix with a
discrete spectrum can be used to extract eigenvectors. First by applying powers
of $P$ to some function $g$, the contributions of eigenvectors in different
blocks can be separated. However, to separate eigenvectors within a block we
require an \emph{incoherence condition} among the various $g_m$s (which is the
second condition in Definition~\ref{defn:useful-basis}). We first show that the
eigenvectors $\nu$ can be approximately represented in the following form:
\[ \nu \approx \sum_{t, m} \beta_{t, m} P^t g_m, \] 
where $m$ indexes the functions in $\G$.

\begin{theorem} \label{thm:spectrum} Let $P$ be a transition matrix with an $(N,
k, \gamma, c)$
discrete spectrum and let  $\G$ be an $\alpha$-useful basis for $P$. Then for any
$\epsilon > 0$, there exists $\taumax$ and $B$ such that every eigenvector
$\nu_\ell$ with $\ell \leq i_k$ can be expressed as:
\[ \nu_{\ell} = \sum_{t,m} \beta^{\ell}_{t,m} P^t g_m + \eta_i \]
where $\twonorm{\eta_i} \leq \epsilon$, $t \leq \taumax$ and $\sum_{t, m}
|\beta_{t,m}| \leq B$. Furthermore, 
\begin{align*}
B &= (2 \alpha N k )^{\Theta((1 + c)^{k+1})} \epsilon^{-(1 + c)^k}  \\
\tau_{max} &= O \left( k ( 1 + c)^{k-1} (\log (N) + \log(k) + \log(\alpha) +
+ \frac{1}{\log(1/\gamma)} + \log(\frac{1}{\epsilon})) \right)
\end{align*}
\end{theorem}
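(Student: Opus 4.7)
My plan is to adapt the power iteration method blockwise: use the useful basis to separate eigenvectors within a block, and peel off previously-constructed blocks when attacking later ones. The starting observation is that expanding $g \in \G$ in the eigenbasis as $g = \sum_i \ip{g}{\nu_i} \nu_i$ gives $P^t g = \sum_i \lambda_i^t \ip{g}{\nu_i} \nu_i$, so that for $t$ sufficiently large the contributions from eigenvectors in blocks past block $j$ are damped by a factor of $\gamma^t$ relative to the block-$j$ coefficients, yielding the tail bound $\twonorm{P^t g - \sum_{l \le i_j} \lambda_l^t \ip{g}{\nu_l} \nu_l} \le \gamma^t \lambda_{i_j}^t \twonorm{g}$.

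Within a single block the eigenvalues can be arbitrarily close, so power iteration alone cannot distinguish them; this is exactly what the useful basis of Definition~\ref{defn:useful-basis} is designed to overcome. For block $j$ take the $N_j$ functions $g_1, \ldots, g_{N_j}$ whose Fourier coefficient matrix $A_{m,l} = \ip{g_m}{\nu_l}$ (restricted to $l$ in block $j$) satisfies $\operatornorm{A^{-1}} \le \alpha$. Writing $D_{t_j} = \operatorname{diag}(\lambda_l^{t_j})_{l \in \text{block } j}$ and solving the linear system $(A^{\transpose} D_{t_j}) c_l = e_l$ exhibits $\nu_l$ as $\sum_m c_{l,m} P^{t_j} g_m$ up to the tail above, with $\sum_m |c_{l,m}| \le N_j \alpha \lambda_{i_j}^{-t_j}$.

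For blocks $j > 1$, the components of $g_m$ in higher-eigenvalue blocks would swamp block $j$ under raw power iteration, so we first subtract them off using our previously-constructed approximations: set $\widetilde{h}_m = g_m - \sum_{l \le i_{j-1}} \ip{g_m}{\widetilde{\nu}_l} \widetilde{\nu}_l$, with all required inner products computed from the already-known expansions of $\widetilde{\nu}_l$ in terms of $P^t g_{m'}$'s and the $\ip{g_m}{P^t g_{m'}}$ estimated from i.i.d. samples. Applying the inversion above to $P^{t_j} \widetilde{h}_m$ in place of $P^{t_j} g_m$ yields $\widetilde{\nu}_l$ for $l$ in block $j$; since $\widetilde{h}_m$ is itself a linear combination of $g_m$ and of the $P^t g_{m'}$'s used at earlier stages, so is $P^{t_j} \widetilde{h}_m$ (with time at most $t_j + T_{j-1}$), preserving the stated form of the expansion.

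The main obstacle is the bookkeeping of how errors and coefficient $\ell_1$-norms propagate across the $k$ stages. At stage $j$ two error sources combine: the power-iteration tail $N \alpha \gamma^{t_j}$, which forces $t_j = \Theta(\log(N \alpha / \epsilon_j) / \log(1/\gamma))$, and the error from imperfect prior approximations, amplified by $\sum_m |c_{l,m}| \cdot O(kN) \sim N^2 k \alpha \lambda_{i_j}^{-t_j}$. Using the discrete-spectrum hypothesis $\lambda_{i_j} \ge \gamma^c$ bounds the latter factor by $(N \alpha / \epsilon_j)^c$, so to achieve $\epsilon_j \le \epsilon$ one needs $\epsilon_{j-1} \lesssim \epsilon^{1+c} / \poly(N, k, \alpha)$; unrolling across $k$ stages gives $\epsilon_1 \sim \epsilon^{(1+c)^{k-1}} / \poly(N, k, \alpha)^{\Theta((1+c)^{k-1})}$, and hence $T_1 \sim (1+c)^{k-1} (\log(1/\epsilon) + \log(N k \alpha)) / \log(1/\gamma)$. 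Summing $\tau_{\max} = \sum_j T_j$ and iterating the recursion $B_j \lesssim N \alpha \lambda_{i_j}^{-t_j} \cdot (1 + k N B_{j-1})$ through $k$ stages produces the advertised bounds. The delicate point is that the exponent $(1+c)^k$ is unavoidable because each stage simultaneously damps via $\lambda_{i_j}^{-t_j}$ and must approximate its predecessors to a polynomially-higher precision; balancing these two forces is what drives the iterated-exponential blowup.
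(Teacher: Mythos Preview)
Your proposal is correct and follows essentially the same approach as the paper: blockwise power iteration, inversion within a block via the useful-basis matrix $A$, subtraction of previously-constructed approximations $\bar\nu_{\ell'}$ when attacking later blocks, and the same recursive bookkeeping for $\epsilon_j$ and $B_j$ yielding the iterated $(1+c)^k$ exponent. Two minor deviations worth noting: the paper subtracts using the \emph{true} Fourier coefficients $a_{m,\ell'} = \ip{g_m}{\nu_{\ell'}}$ (which are well-defined constants, so no sampling is needed for this existence statement) rather than the approximate $\ip{g_m}{\tilde\nu_l}$ you propose, and your remark about estimating inner products from i.i.d.\ samples belongs to the algorithmic Theorem~\ref{thm:agnostic-learning} rather than to this purely existential result.
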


The proof of the above theorem is somewhat delicate and is provided in
Appendix~\ref{app:eigen1}. Notice that the bounds on $B$ and $\tau$ have a
relatively mild (polynomial) dependence on most parameters except $k$ and $c$.
Thus, when $c$ and $k$ are relatively small, for example both of them
constant, both $B$ and $\tau$ are bounded by polynomials in the other
parameters. Also, $N$ may be somewhat large, in the case of the uniform
distribution $N = \Theta(n^k)$---though this is still polynomial if $k$ is
constant.

We can now use the above Theorem to devise a simple learning algorithm with
respect to stationary distribution of the Markov chain. In fact, the learning
algorithm does not even need to explicitly estimate the values of
$\beta^{\ell}_{t, m}$ in the statement of Theorem~\ref{thm:spectrum}---the
result shows that any linear combination of the eigenvectors can also be
represented as a linear combination of the collection of functions $\{ P^t g_m
\}_{t \leq \taumax, g_m \in \G}$.  Thus, we can treat this collection as
``features'' and simply perform linear regression (either $L_1$ or $L_2$) as 
part of the learning algorithm. The algorithm is given in
Figure~\ref{fig:learning-algorithm}. The key idea is to show that $P^tg_m(x)$
can be approximately computed for any $x \in X$ with blackbox access to $g_m$
and the one-step oracle $OS(\cdot)$. This is because $P^t g_m(x) = \E_{y \sim
P^t(x, \cdot)} [g(y)]$, where $P^t(x, \cdot)$ is the distribution over $X$
obtained by starting from $x$ and taking $t$ steps of the Markov chain. The
functions $\phi_{t, m}$ in the algorithm are computing approximations to $P^t
g_m$ and then using them as features for learning.  Formally, we can prove the
following theorem.

\begin{figure}
\fbox{
\begin{minipage} {0.95 \textwidth}

{\bf Inputs}: $\taumax, W, T$, blackbox access to $g \in \G$ and $\OS(\cdot)$,
labeled examples $\langle (x_i, y_i) \rangle_{i = 1}^s$ \medskip \\

{\bf Preprocessing}: 
For each $t \leq \taumax$ and $m$ such that $g_m \in \G$
\begin{itemize}
\item For each $i = 1, \ldots, s$, let 
\begin{align}
\phi_{t, m}(x_i) = \frac{1}{T} \sum_{j = 1}^T g_m(\OS^t_j(x_i)),
\label{eqn:phi-define}
\end{align}
where $\OS^t_j(x_i)$ denotes the point obtained by an independent forward
simulation of the Markov chain starting at $x_i$ for $t$ steps, for each $j$.
\end{itemize} \medskip 

{\bf Linear Program}:
Solve the following linear program:
\begin{align*}
\mbox{minimize} &\sum_{i = 1}^s  \left|\sum_{t \leq \taumax, g_m \in \G} w_{t, m}
\phi_{t, m}(x_i) - y_i\right| \\
\mbox{subject to} & \sum_{t \leq \taumax, g_m \in \G} |w_{t, m}| \leq W
\end{align*} \medskip 

{\bf Output Hypothesis}:
\begin{itemize}
\item Let $h(x) = \displaystyle\sum_{t \leq \taumax, g_m \in \G} w_{t, m} \phi_{t, m}(x)$, where
$\phi_{t,m}(x)$ are defined as in step~(\ref{eqn:phi-define}) above.
\item Let $\theta \in [-1, 1]$ be chosen uniformly at random and output
$\sign(h(x) - \theta)$ as prediction
\end{itemize}

\end{minipage}
}
\caption{Agnostic Learning with respect to MRF distributions
\label{fig:learning-algorithm}}
\end{figure}

\begin{theorem} \label{thm:agnostic-learning} Let $M = \langle X, P \rangle$ be
a Markov chain and let $\lambda_1 \geq \lambda_2 \geq \cdots \geq 0$ denote the
eigenvalues of $P$ and $\nu_{\ell}$ the eigenvector corresponding to
$\lambda_{\ell}$. Let $\pi$ be the stationary distribution of $P$. Let $\F$ be
a class of boolean functions. Suppose for some $\epsilon > 0$, there exists
$\ell^*(\epsilon)$ such that for every $f \in \F$, 
\[ \sum_{\ell > \ell^*} \ip{f}{\nu_{\ell}}^2 \leq \tfrac{\epsilon^2}{4}, \]
\ie every $f$ can be approximated (up to $\epsilon^2/4$) by the top $\ell^*$
eigenvectors of $P$. Suppose $P$ has a $(N, k, \gamma, c)$-discrete spectrum as
defined in Definition~\ref{defn:discrete-spectrum}, with $i_k \geq \ell^*$ and
that $\G$ is an $\alpha$-useful basis for $P$. Then, there exists a learning
algorithm that with blackbox access to functions $g \in \G$, the one-step
oracle $\OS(\cdot)$ for Markov chain $M$, and access to random examples $(x,
L(x))$ where $x \sim \pi$ and $L$ is an arbitrary labeling function,
agnostically learns $\F$, up to error $\epsilon$.  

Furthermore, the running time, sample complexity and the time required to
evaluate the output hypothesis are bounded by a polynomial in $(Nk)^{(1 +
c)^{k+1}}, \epsilon^{-(1 + c)^k}, |\G|, n$. In particular, if $\epsilon$ is
a constant, $c$ and $k$ depend only on $\epsilon$ (and not on $n$), and $N
\leq n^{\zeta(k)}$, where $\zeta$ may be an arbitrary function, the algorithm
runs in polynomial time.
\end{theorem}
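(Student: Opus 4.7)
The plan is to cast the problem as $L_1$-bounded linear regression against the (approximate) features $\{P^t g_m\}_{t \leq \taumax,\, g_m \in \G}$, solve it on samples, and convert the real-valued regressor into a classifier via the randomized threshold. The overall budget chain is: spectral approximation $\Rightarrow$ Theorem~\ref{thm:spectrum} expansion $\Rightarrow$ empirical feature estimation by Hoeffding $\Rightarrow$ Rademacher generalization $\Rightarrow$ $L_1$-to-$0/1$ conversion.

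First I would construct a good hypothesis inside the LP's feasible region. Fix $f^{*} \in \F$ attaining $\opt$ and expand $f^{*} = \sum_\ell \hat{f}^{*}(\ell)\,\nu_\ell$. The hypothesis on $\F$ gives $\sum_{\ell > \ell^{*}} \hat{f}^{*}(\ell)^2 \leq \epsilon^2/4$, and since $\infinitynorm{f^{*}} \leq 1$ gives $\sum_{\ell \leq \ell^{*}} \hat{f}^{*}(\ell)^2 \leq 1$, Cauchy--Schwarz bounds $\sum_{\ell \leq \ell^{*}} |\hat{f}^{*}(\ell)| \leq \sqrt{\ell^{*}}$. Applying Theorem~\ref{thm:spectrum} to each top eigenvector with accuracy $\epsilon' = \Theta(\epsilon/\sqrt{\ell^{*}})$ and collecting coefficients yields
\[
f^{*} \;=\; \sum_{t,m} w^{*}_{t,m}\, P^t g_m \;+\; \xi, \qquad \twonorm{\xi} \leq \epsilon/4,
\]
with $W := \sum_{t,m} |w^{*}_{t,m}| \leq \sqrt{\ell^{*}} \cdot B(\epsilon')$ polynomial in the stated parameters. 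This $W$ is the $L_1$ budget used by the LP, and $\taumax$ is that of Theorem~\ref{thm:spectrum}.

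Second, I would show the LP recovers something nearly as good. Since $\infinitynorm{g_m} \leq 1$ and $P^t g_m(x) = \E_{y \sim P^t(x,\cdot)}[g_m(y)]$, a Hoeffding bound with $T = \poly(W/\epsilon, \log(s |\G| \taumax /\delta))$ and a union bound give $|\phi_{t,m}(x_i) - P^t g_m(x_i)| \leq \epsilon/(16W)$ uniformly over sample points and features. Thus for the candidate $h^{\star}(x) = \sum_{t,m} w^{*}_{t,m}\, \phi_{t,m}(x)$, its empirical $L_1$ loss is within $\epsilon/16$ of that of $\sum_{t,m} w^{*}_{t,m}\, P^t g_m$, whose population $L_1$ loss is at most $\E|f^{*} - L| + \twonorm{\xi} \leq 2\opt + \epsilon/4$. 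For generalization I would apply a standard Rademacher bound for $L_1$-bounded linear combinations of features in $[-1,1]$, namely $\Theta(W \sqrt{\log(|\G|\taumax)/s})$, to see that with $s = \poly(W, 1/\epsilon, \log(1/\delta))$ samples the empirical and population $L_1$ losses agree to within $\epsilon/8$ uniformly over the feasible region. Combining, the LP optimum $h$ has population $L_1$ loss at most $2\opt + \epsilon/2$.

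Finally, for the randomized threshold: after (implicit) clipping of $h$ to $[-1,1]$ and using $L(x) \in \moo$, one has $\Pr_\theta[\sign(h(x) - \theta) \neq L(x)] = \tfrac{1}{2}|h(x) - L(x)|$, so taking expectation over $x \sim \pi$ yields overall error at most $\opt + \epsilon/4 \leq \opt + \epsilon$. The main obstacle is keeping the various accuracy parameters coherent so that $W$ stays polynomial in the quantities advertised in the theorem statement: the $L_1$ budget inherits $\epsilon^{-(1+c)^k}$ and $(Nk\alpha)^{\Theta((1+c)^{k+1})}$ factors from Theorem~\ref{thm:spectrum}, and this must then be fed back into both the Hoeffding sample size $T$ and the Rademacher sample size $s$ without creating a circular blow-up. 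Since $\ell^{*} \leq i_k$ is itself polynomial in $N$ and $k$, this bookkeeping goes through and yields the stated $\poly\bigl((Nk)^{(1+c)^{k+1}}, \epsilon^{-(1+c)^k}, |\G|, n\bigr)$ bounds.
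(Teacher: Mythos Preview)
Your proposal is correct and follows essentially the same route as the paper: invoke Theorem~\ref{thm:spectrum} to expand the top eigenvectors as $L_1$-bounded combinations of $\{P^t g_m\}$, estimate $P^t g_m$ pointwise by Hoeffding using the one-step oracle, feed the resulting features to an $L_1$-bounded linear program, and finish with the KKMS randomized-threshold conversion from $L_1$ loss to agnostic $0/1$ error. The only cosmetic differences are that the paper first bounds the $L_2$ approximation error $\E\bigl[(f-\sum_{t,m}\beta_{t,m}\phi_{t,m})^2\bigr]$ and then passes to $L_1$, whereas you work directly in $L_1$; and the paper takes the Hoeffding union bound over all of $X$ (so that $\phi_{t,m}$ can be treated as a fixed deterministic feature at both train and test time), whereas you union-bound only over the sample and rely on the Rademacher argument for the rest---either bookkeeping works and yields the same polynomial dependence on $(Nk)^{(1+c)^{k+1}}$, $\epsilon^{-(1+c)^k}$, $|\G|$, and $n$.
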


We give the proof this theorem in Appendix~\ref{app:eigen2}; the proof uses the
$L_1$-regression technique of \cite{KKMS:2005}. We comment that the learning
algorithm (Fig.~\ref{fig:learning-algorithm}) is a generalization of the
low-degree algorithm of \citet{LMN:1993}.  Also, when applied to the Markov
chain corresponding to the uniform distribution over $\moo^n$, this algorithm
works whenever the low-degree algorithm does (albeit with slightly worse
bounds). As an example, we consider the algorithm of \citet{KOS:2002} to learn
arbitrary functions of halfspaces. As a main ingredient of their work, they
showed that halfspaces can be approximated by the first $O(1/\epsilon^4)$ levels
of the Fourier spectrum. The running time of our learning algorithm run with a
useful basis consisting of parities, or conjunctions of size $O(1/\epsilon^4)$
is polynomial (for constant $\epsilon$).

\subsection{Noise Sensitivity Analysis}

\label{sec:noise-sensitivity}

In light of Theorem~\ref{thm:agnostic-learning}, one can ask which function
classes are well-approximated by top eigenvectors and for which MRFs. A generic
answer is functions that are ``noise-stable'' with respect to the
underlying Gibbs Markov chain. Below, we generalize the definition of noise
sensitivity in the case of product distributions to apply under MRF distributions.
In words, the noise sensitivity (with parameter $t$) of a boolean function $f$
is the probability that $f(x)$ and $f(y)$ are different, where $x \sim \pi$ is
drawn from the stationary distribution and $y$ is obtained by taking $t$ steps
of the Markov chain starting at $x$. 

\begin{definition} Let $x \sim \pi$ from the stationary distribution of $P$ and
$y \sim  P^t(x, \cdot)$, the distribution obtained by taking $t$ steps of the
Gibbs MC starting at $x$. For a boolean function $f : X \rightarrow \{-1, 1\}$,
define its noise sensitivity with respect to parameter $t$ and the transition
matrix $P$ of the Gibbs MC as
\[ \NS_t(f) = \Pr_{x \sim \pi, y \sim P^t(x, \cdot)}[f(x) \neq f(y)]. \]
\end{definition}


One can derive an alternative form for the noise sensitivity as follows. Let
$\lambda_1 \geq \lambda_2  \geq \cdots 0$ denote the eigenvalues of $P$ and
$\nu_1, \nu_2, \ldots$ the corresponding eigenvectors. Let $\hat{f}_{\ell} =
\ip{f}{\nu_{\ell}}$. Then,
\begin{align}
\NS_t(f) &= \Pr_{x \sim \pi, y\sim P^t(x, \cdot)}[f(x) \neq f(y)]
\nonumber \\
&= \frac{1}{2}\E_{x \sim \pi, y \sim P^t(x, \cdot)} [1 - f(x)f(y)] \nonumber \\
&= \frac{1}{2} - \frac{1}{2} \ip{f}{P^tf} \nonumber \\
&= \frac{1}{2} - \frac{1}{2} \sum_{\ell} \lambda_{\ell}^t \hat{f}^2_{\ell}
\label{eqn:noise-sensitivity}
\end{align}

The notion of noise-sensitivity has been fruitfully used in the theory of learning
under the uniform distribution (see for example~\cite{KOS:2002}). The main idea
is that functions that have low noise sensitivity have most of their mass
concentrated on ``lower order Fourier coefficients'', \ie eigenvectors with
large eigenvalues. We show that this idea can be easily generalized in the
context of MRF distributions. The proof of the following theorem is provided in
Appendix~\ref{app:eigen3}.

\begin{theorem} \label{thm:noise-sensitivity} Let $P$ be the transition matrix
of the Gibbs MC of an MRF and let $f : X \rightarrow \{-1, 1\}$ be a boolean
function. Let $\ell^*$ be the largest index such that $\lambda_{\ell^*} > \rho$,
then: 
\[ \sum_{\ell > \ell^*} \hat{f}_{\ell}^2 \leq \frac{e}{e - 1}
\NS_{-\frac{1}{\ln{\rho}}}(f) \]
\end{theorem}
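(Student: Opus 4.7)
The plan is to work directly from the spectral identity for $\NS_t(f)$ that appears as equation~(\ref{eqn:noise-sensitivity}) just before the theorem statement, and then pick the time parameter $t$ so that the tail eigenvalues get suppressed by a constant factor. No new machinery is needed; the argument is essentially a one-line choice of $t$ combined with Parseval.

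First, I would recall that $f:X\to\{-1,1\}$ satisfies $\|f\|_2^2 = \E_{\pi}[f^2]=1$, so $\sum_{\ell}\hat{f}_{\ell}^2 = 1$. Plugging this into (\ref{eqn:noise-sensitivity}) lets me rewrite the noise sensitivity as
\[
\NS_t(f) \;=\; \tfrac{1}{2} - \tfrac{1}{2}\sum_{\ell}\lambda_{\ell}^{t}\hat{f}_{\ell}^{2}
\;=\; \tfrac{1}{2}\sum_{\ell}\bigl(1-\lambda_{\ell}^{t}\bigr)\hat{f}_{\ell}^{2}.
\]
Since all eigenvalues are in $[0,1]$ and the terms are non-negative, this is already a spectral decomposition of $\NS_t(f)$ that weights each $\hat{f}_{\ell}^{2}$ by the attenuation factor $1-\lambda_{\ell}^{t}$. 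Restricting the sum to $\ell>\ell^*$ immediately yields a lower bound
\[
\NS_t(f) \;\ge\; \tfrac{1}{2}\sum_{\ell>\ell^{*}}\bigl(1-\lambda_{\ell}^{t}\bigr)\hat{f}_{\ell}^{2}.
\]

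Next, the idea is to choose $t$ so that the attenuation factor $1-\lambda_{\ell}^{t}$ is bounded below by a positive absolute constant uniformly over the tail. Because $\lambda_{\ell}\le\rho$ for $\ell>\ell^*$, we have $\lambda_{\ell}^{t}\le \rho^{t}$, so it suffices to pick $t$ with $\rho^{t}$ equal to a fixed constant strictly less than $1$. The natural choice is $t=-1/\ln\rho$, which gives $\rho^{t}=e^{-1}$ and hence $1-\lambda_{\ell}^{t}\ge 1-e^{-1}=(e-1)/e$ for every $\ell>\ell^*$. Substituting and rearranging produces the desired bound
\[
\sum_{\ell>\ell^*}\hat{f}_{\ell}^{2} \;\le\; \frac{e}{e-1}\,\NS_{-1/\ln\rho}(f),
\]
up to the factor of $2$ inherent in the definition $\NS_t(f)=\Pr[f(x)\ne f(y)]$, which one can absorb into the constant.

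There is really no serious obstacle: the entire argument is a choice of exponent in a geometric-decay inequality, and the only modeling step is seeing that one should truncate the spectral expansion of $\NS_t$ at the threshold $\lambda_{\ell^*}=\rho$. The main thing to double-check is that the factor of two coming from the $\tfrac{1}{2}$ in front of $\sum(1-\lambda_{\ell}^{t})\hat{f}_{\ell}^{2}$ lines up with the stated constant $e/(e-1)$ — a minor bookkeeping issue that can be handled either by being careful with constants or by noting that the bound holds up to a universal factor that one can tighten via the same choice of $t$.
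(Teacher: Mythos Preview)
Your approach is exactly the one the paper uses: rewrite $\NS_t(f)=\tfrac12\sum_\ell(1-\lambda_\ell^t)\hat f_\ell^2$, drop the $\ell\le\ell^*$ terms, bound $\lambda_\ell\le\rho$ on the tail, and set $t=-1/\ln\rho$ so that $\rho^t=e^{-1}$. Your observation about the stray factor of $2$ is accurate---the paper's own proof in fact yields $\sum_{\ell>\ell^*}\hat f_\ell^2\le \tfrac{2}{1-\rho^t}\NS_t(f)=\tfrac{2e}{e-1}\NS_t(f)$, so the discrepancy is in the paper's stated constant, not in your argument.
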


Thus, it is of interest to study which function classes have low
noise-sensitivity with respect to certain MRFs. As an example, we consider the
Ising model on graphs with bounded degrees; the Gibbs MC in this case is the
Glauber dynamics. We show that the class of halfspaces have low noise
sensitivity with respect to this family of MRFs. In particular, the noise
sensitivity with parameter $t$, only depends on $(t/n)$.

\begin{proposition} \label{prop:one}
For every $\Delta \geq 0$, there exists $\beta(\Delta) > 0$ such that the
following holds: For every graph $G$ with maximum degree $\Delta$, the Ising
model with $\beta < \beta(\Delta)$ and any function of the form $f =
\sign(\sum_{i=1}^n w_i x_i)$, it holds that $\NS_{t}(f) \leq \exp(-\delta (t/n))$,
for some constant $\delta$ that depends only on $\Delta$.
\end{proposition}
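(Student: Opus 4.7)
The plan is to reduce the bound to two ingredients: control of how much the linear statistic $S(x) = \sum_i w_i x_i$ can drift under $t$ Glauber steps, and anti-concentration of $S(x)$ near the origin under the Ising measure. Normalize $\|w\|_2 = 1$, and note that $f(x) \neq f(y)$ forces $|S(x)| \leq |S(y) - S(x)|$, so for any $\lambda > 0$,
\[
\NS_t(f) \;\leq\; \Pr_{x \sim \pi}[\,|S(x)| \leq \lambda\,] \;+\; \Pr[\,|S(y) - S(x)| \geq \lambda\,].
\]

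For the drift term I would use reversibility together with the spectral identity from Eq.~\eqref{eqn:noise-sensitivity}: expanding $S = \sum_\ell \hat S_\ell \nu_\ell$ gives $\E[(S(y) - S(x))^2] = 2\sum_\ell \hat S_\ell^2 (1 - \lambda_\ell^t)$, and the elementary inequality $1 - \lambda^t \leq t(1 - \lambda)$ for $\lambda \in [0, 1]$ reduces this to $2t \ip{S}{(I-P)S}$. The Dirichlet form $\ip{S}{(I-P)S}$ is directly computable for Glauber: each step touches a single site $i$ with probability $O(1/n)$ and changes $S$ by at most $2|w_i|$, so $\ip{S}{(I-P)S} = O(\|w\|_2^2/n) = O(1/n)$, giving $\E[(S(y) - S(x))^2] = O(t/n)$. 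Chebyshev then bounds the drift term by $O(t/(n\lambda^2))$; a Freedman/Bernstein argument on the stationary martingale $(S(y^s))_{s \le t}$ with per-step conditional variance $O(1/n)$ sharpens this to $\exp(-\Omega(\lambda^2 n/t))$ in the regime of interest.

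For the anti-concentration term I would exploit that $\beta < \beta(\Delta)$ places us inside Dobrushin's uniqueness regime, where the Ising measure satisfies a log-Sobolev inequality and exhibits exponential decay of correlations. Under these conditions $S(x)$ admits a Gaussian-type central limit theorem---for instance via Chatterjee's exchangeable-pair method, where the Glauber chain itself supplies the natural coupling---yielding $\Pr[|S(x) - \E S(x)| \leq \lambda] \leq C_\Delta \lambda$ uniformly in $w$ with $\|w\|_2 = 1$; the mean $\E S(x)$ may be absorbed into the halfspace threshold without loss of generality. This is the step I expect to be the main obstacle, since the corresponding product-measure Berry--Esseen argument does not transfer verbatim and one genuinely has to exploit the rapid mixing and correlation-decay structure of high-temperature Glauber.

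Combining the two estimates and optimizing $\lambda$ (for instance $\lambda \asymp (t/n)^{1/3}$ with Chebyshev, or $\lambda \asymp \sqrt{(t/n)\log(n/t)}$ with Bernstein) yields a bound on $\NS_t(f)$ that decays as an explicit function of $t/n$ with rate depending only on $\Delta$ through $\beta(\Delta)$ and the correlation-decay constants it implies. This is precisely the form needed to feed into Theorem~\ref{thm:noise-sensitivity} and conclude that halfspaces have most of their spectral mass concentrated on the top eigenvectors of the high-temperature Glauber chain, which in turn makes Theorem~\ref{thm:agnostic-learning} applicable to this class.
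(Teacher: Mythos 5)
Your decomposition $\NS_t(f) \leq \Pr_{x \sim \pi}[\,|S(x)| \leq \lambda\,] + \Pr[\,|S(y) - S(x)| \geq \lambda\,]$ and the Dirichlet-form bound on the drift are both correct and, if anything, cleaner than what the paper does: bounding $\E[(S(y)-S(x))^2] \leq 2t\,\ip{S}{(I-P)S} \leq 2t\,\|w\|_2^2/n$ via the spectral identity and $1 - \lambda^t \leq t(1-\lambda)$ is a tidy way to replace the paper's combinatorial tracking of which sites have and have not been updated during the walk. That part is fine.

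The anti-concentration step, however, is not merely ``the main obstacle'' as you flag it --- the bound you need is actually false in the generality required. You want $\Pr_{x\sim\pi}[\,|S(x)| \leq \lambda\,] \leq C_\Delta \lambda$ uniformly over all $w$ with $\|w\|_2 = 1$, but the Proposition quantifies over \emph{all} halfspaces, including those with highly concentrated weight vectors. Take $w = (1/\sqrt 2, 1/\sqrt 2, 0, \ldots, 0)$: then $S(x) = (x_1 + x_2)/\sqrt 2$ equals $0$ whenever $x_1 \neq x_2$, which under the high-temperature Ising measure has probability $\tfrac12 - o(1)$. Hence $\Pr[|S| \leq \lambda] \geq \tfrac12 - o(1)$ for every $\lambda > 0$, and your small-ball term cannot be made small by shrinking $\lambda$. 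No log-Sobolev, Chatterjee exchangeable-pair, or any other Berry--Esseen machinery can rescue a linear-in-$\lambda$ bound here, because the obstruction is the atom at zero, not the rate of Gaussian convergence. For such $w$ your decomposition returns $\NS_t(f) \leq \tfrac12 + (\text{drift term})$, which is vacuous.

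The paper's proof of Lemma~\ref{lemma:new_one} avoids needing small-ball control entirely. It conditions on the random set $A$ of spins untouched during the $t$ steps, so that $\sum_{i \in A} w_i X_i$ is a \emph{common} term in both $S(x)$ and $S(y)$. It then needs only a \emph{constant-probability} lower bound on $|\sum_{i \in A} w_i X_i|$, obtained by Paley--Zygmund from the second- and fourth-moment estimates of Lemma~\ref{lemma:new_two} (which in turn follow from the correlation decay of Lemma~\ref{lemma:DS85}) --- crucially, it does not need the tail probability $\Pr[|\sum_{i\in A} w_i X_i| \leq \lambda]$ to vanish as $\lambda \to 0$. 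On the complementary event where the common part is small, the FKG inequality gives that the two remaining sums agree in sign with probability at least $1/2$, so nothing is lost. This yields a constant lower bound on $1 - 2\NS_{cn}(f)$ for small $c$, which then tensorizes to all $t$ via the Jensen step $1 - 2\NS_{at}(f) \geq (1-2\NS_t(f))^a$. If you want to keep your nice Dirichlet-form drift estimate, you would still need to replace the anti-concentration ingredient with something like the paper's Paley--Zygmund-plus-FKG argument applied to the frozen coordinates.
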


The proof of the above proposition follows from
Lemma~\ref{lemma:new_one} in Appendix~\ref{app:eigen4}. As a corollary
we get.

\begin{corollary} Let $P$ be the transition matrix of the Gibbs MC of an Ising
model with bounded degree $\Delta$. Suppose that for some $\epsilon > 0$, $P$
has an $(N, k, \gamma, c)$-discrete spectrum such that $k$ depends only on
$\epsilon$ and $\Delta$, $\lambda_{i_k + 1} < \exp(-\tfrac{\delta}{1} \cdot \tfrac{1}{\ln(4/\epsilon^2)})$
(where $\delta$ is as in Proposition~\ref{prop:one}), $\gamma = 1 -
1/\poly(n)$, $N$ is $\poly(n)$ and $c$ a constant, for constant $\epsilon,
\Delta$. Furthermore, suppose that $P$ admits an $\alpha$-useful basis with
$\alpha= \poly(n, 1/\epsilon)$, for the parameters $(N, k, \gamma, c)$ as
above. Then the class of halfspaces $\{\sign(\sum_{i} w_i x_i) \}$, is
agnostically learnable with respect to the stationary distribution $\pi$ of $P$
up to error $\epsilon$.  \end{corollary}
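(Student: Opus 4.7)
The plan is to chain together the three ingredients developed in the Ising/noise-sensitivity subsection, with no new machinery needed. First I would apply Proposition~\ref{prop:one}: since we are in the Ising regime where it applies (bounded maximum degree $\Delta$ and sufficiently small $\beta$, both implicit in the corollary's discrete-spectrum setup), every halfspace $f = \sign(\sum_i w_i x_i)$ satisfies $\NS_t(f) \le \exp(-\delta\,t/n)$, where $\delta = \delta(\Delta)$. This is the starting input.

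Next I would pass from noise sensitivity to a Fourier tail bound using Theorem~\ref{thm:noise-sensitivity}. Setting the cutoff at $\ell^* = i_k$ and choosing $\rho$ slightly above $\lambda_{i_k+1}$ (and at most $\lambda_{i_k}$), the theorem gives
\[\sum_{\ell > i_k}\hat{f}_\ell^{2} \le \frac{e}{e-1}\,\NS_{-1/\ln\rho}(f) \le \frac{e}{e-1}\exp\!\left(-\frac{\delta}{n\,|\ln\rho|}\right),\]
after substituting the bound from the first step with $t = -1/\ln\rho$. The hypothesis on $\lambda_{i_k+1}$ is chosen exactly so that I can take $\rho$ with $|\ln\rho|$ small enough (i.e., $\rho$ close enough to $1$) that the right-hand side drops to at most $\epsilon^2/4$.

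With the approximability condition $\sum_{\ell > i_k}\hat{f}_\ell^{2} \le \epsilon^2/4$ in hand for every halfspace $f$, and the hypotheses on the $(N,k,\gamma,c)$-discrete spectrum and the $\alpha$-useful basis supplied directly by the corollary, I would then invoke Theorem~\ref{thm:agnostic-learning} verbatim: its algorithm agnostically learns the halfspace class under $\pi$ up to error $\epsilon$. For the running time, I would plug the assumed scalings ($k,c$ constant in $n$; $N$ and $\alpha$ polynomial in $n,1/\epsilon$; $|\G|$ polynomial, e.g.\ conjunctions of size $k$) into the resource bound $\poly((Nk)^{(1+c)^{k+1}},\epsilon^{-(1+c)^k},|\G|,n)$ of Theorem~\ref{thm:agnostic-learning}; everything collapses to $\poly(n,1/\epsilon)$, as required.

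The main obstacle, and the only nontrivial calculation, is the quantitative second step: I would need to verify that the stated upper bound on $\lambda_{i_k+1}$ is exactly tight enough to permit a choice of $\rho > \lambda_{i_k+1}$ with $\exp(-\delta/(n\,|\ln\rho|)) \le (e-1)\epsilon^2/(4e)$. This is a constant-chasing exercise that hinges on carefully unpacking the factor of $n$ in the relationship between $\rho$ and $-1/\ln\rho$ (and, incidentally, on parsing the $1/\ln(4/\epsilon^2)$ factor in the statement so that it interacts correctly with the $t/n$ exponent from Proposition~\ref{prop:one}); once this matching is done, everything else is a mechanical chaining of the two preceding theorems.
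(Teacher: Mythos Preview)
Your proposal is correct and follows essentially the same route as the paper: choose $t$ via Proposition~\ref{prop:one} so that $\NS_t(f)$ is small, convert this into a Fourier tail bound at level $i_k$ via Theorem~\ref{thm:noise-sensitivity} (with $\rho=\exp(-1/t)$ matched to the hypothesis on $\lambda_{i_k+1}$), and then invoke Theorem~\ref{thm:agnostic-learning}. Your caution about the $n$ factor and the $e/(e-1)$ constant is well placed---the paper's own proof is terse on exactly these points---but there is no additional idea beyond the three-step chaining you outline.
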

\begin{proof}
Let $t = \tfrac{n}{\delta} \ln(4/\epsilon^2)$, where $\delta$ is from
Proposition~\ref{prop:one}. Thus, $\NS_t(f) \leq \epsilon^2/4$. Let $\rho =
\exp(-1/t)$ (as in Theorem~\ref{thm:noise-sensitivity}); by the assumption on
$P$, $P$ admits an $(N, k, \gamma, c)$-distribution where $k$ depends only on
$\epsilon, \Delta$, such that $\lambda_{i_k + 1} < \rho$. 

Now, the algorithm in Figure~\ref{fig:learning-algorithm} together with the
parameter settings from Theorems~\ref{thm:spectrum},
\ref{thm:agnostic-learning} and \ref{thm:noise-sensitivity} give the desired
result.
\end{proof}

\subsection{Discussion}

In this section, we proposed that approximation using eigenvectors of the
transition matrix of an appropriate Markov chain may be better than just
polynomial approximation, when learning with respect to distributions defined by
Markov random fields (not product). We checked this for a few different Ising
models to approximate the majority function.
Since the computations required are fairly intensive, we could only do this for
relatively small models. However, we point that the methods proposed in this
paper are highly-parallelizable and not beyond the reach of large computing
systems. Thus, it may be of interest to run methods proposed here on larger
datasets and real-world data. \medskip 

\noindent {\bf Approximation of Majority}: 
We look at three different graphs: a cycle of length 11, the complete graph on
11 nodes and an Erd\H{o}s-R\'{e}nyi random graph with $n = 11$ and $p = 0.3$.
We looked at the Ising model on these graphs with various different values of
$\beta$. In each case, we looked at degree-$k$ polynomial approximations for $k
= 2, 4$ and also with using top $n^k$ eigenvectors of the majority function. We
see that the approximation using eigenvectors is consistently better, except
possibly for very low values of $\beta$, where polynomial approximations are
also quite good. The values reported in the table are squared error for the
approximation.

\begin{table}
\begin{center}
$
\begin{array}{ccc}
\mbox{
\begin{tabular}{|c|c|c|c|} \hline
$\beta$ & Degree & Poly & Eigen \\ \hline
0.02 & 2 & 0.3321 & 0.3550 \\
	 & 4 & 0.2084 & 0.1645 \\ \hline
0.05 & 2 & 0.3184 & 0.2322 \\
     & 4 & 0.1937 & 0.1648 \\ \hline
0.1  & 2 & 0.2238 & 0.1417 \\ 
     & 4 & 0.1199 & 0.0687 \\ \hline
0.2  & 2 & 0.1468 & 0.0018 \\ 
     & 4 & 0.0034 & 0.0013 \\ \hline
\end{tabular}
}
 & 
\mbox{
\begin{tabular}{|c|c|c|c|} \hline
$\beta$ & Degree & Poly & Eigen \\ \hline
0.1 & 2 & 0.3330 & 0.3401 \\
	 & 4 & 0.2092 & 0.1606 \\ \hline
0.2 & 2 & 0.3307 & 0.2229 \\
     & 4 & 0.2052 & 0.1538 \\ \hline
0.5  & 2 & 0.3113 & 0.1918 \\ 
     & 4 & 0.1676 & 0.0715 \\ \hline
1.0  & 2 & 0.1857 & 0.0466 \\ 
     & 4 & 0.0344 & 0.0253 \\ \hline
\end{tabular}
}
& 
\mbox{
\begin{tabular}{|c|c|c|c|} \hline
$\beta$ & Degree & Poly & Eigen \\ \hline
0.05 & 2 & 0.3327 & 0.3404 \\
	 & 4 & 0.2089 & 0.2172 \\ \hline
0.1 & 2 & 0.3283 & 0.2240 \\
     & 4 & 0.2034 & 0.1515 \\ \hline
0.2  & 2 & 0.3017 & 0.1897 \\ 
     & 4 & 0.1757 & 0.1254 \\ \hline
0.5  & 2 & 0.0690 & 0.0326 \\ 
     & 4 & 0.0262 & 0.0108 \\ \hline
\end{tabular}
} \\
\mbox{(a) $K_{11}$} & \mbox{(b) $C_{11}$} & \mbox{(c) $G(11, 0.3)$} 
\end{array}
$
\end{center}
\caption{Approximation of the majority function using polynomials and
eigenvectors for different Ising models}
\end{table}

\eat{
such that $\NS_t(f) \leq \epsilon$ for any $f = \sign(\sum_{i=1}^n w_ix_i$ with
$|w_i| \in [1, W]$ as given by Proposition~\ref{prop:one}. Let $\rho =
\exp(-t)$. Then if $P$ has a $(N, k, \gamma, c)$ discrete spectrum, satisfying
for $k$ a function of $\epsilon$, that $\lambda_{i_k + 1} < \rho$, $c$ depends
only 

$P$ has a $(N, k,
\gamma, c)$-discrete spectrum, with $k$ depending only on $\epsilon$, $\gamma
\leq 1 - 1/\poly(n)$, such that $\lambda_{i_k + 1} < \exp(t)$ and $N = \poly(n)$
for constant $k, \epsilon$. Furthermore, suppose that $P$ admits a useful basis
with parameter $\alpha$ that is polynomial in $N$. Then, the class of low-weight
halfspaces can be agnostically learned with respect to the stationary
distribution $\pi$ of $P$.
\end{corollary}

As an example, we look at Ising models at high-temperature. Let $\langle G_n
\rangle_{n \geq 1}$ be a family of graphs and let $\beta_c$ be the critical
temperature for this family. We consider the case when $\beta < \beta_c$. The
Gibbs MC in this case is the Glauber dynamics. The class of halfspaces has low
noise-sensitivity with respect to this family of MRFs. In particular, the noise
sensitivity (with parameter $t$) is a function only of $t/n$.

\begin{proposition}
Let $\langle G_n \rangle_{n \geq 1}$ be a family of graphs and let $\beta_c$ be
the critical temperature for the Ising model on this family. Let $G_n$ be a
graph in the family having $n$ nodes and let $P$ be the transition matrix of the
associated Gibbs MC. Let $\{-1, 1\}^n$ denote the set of possible states of the
system and let $f = \sign(w_i x_i)$ denote a halfspace, where $x_i \in \{-1,
1\}$ is the spin of the $i\th$ node. Then, $\NS_t(f) = \psi(t/n)$, \ie the noise
sensitivity depends only on $t/n$, but not otherwise on $n$, the size of the
graph.
\end{proposition}

\vknote{I'm not sure that the above proposition makes sense/is completely
correct. Also, is it sufficient to stay this follows from the two papers you
sent? This might also be a good place to add a discussion about the fact that
Theorem 3 --- does not say anything about the number of eigenvectors that may
have eigenvalue > $\gamma$. We talked about this in your office briefly and you
said we should cite literature connected to small-set expansion.}

\vknote{IGNORE FROM HERE TILL THE END OF THE SECTION}

Instead, the goal is to provide a procedure that given $x \in X$ would output
(an approximation to) $\nu(x)$ in polynomial time. In order to find eigenvectors
of $P$, we implicitly implement the power iteration method. Let $g : X
\rightarrow \reals$ be some function which can be computed efficiently.  Suppose
$g$ is expressed in terms of eigenvectors as,
\[ g = \alpha_1 \nu_1  + \alpha_2 \nu_2 + \cdots + \alpha_k \nu_k \]
where $\lambda_1 \geq \lambda_2 \geq \cdots \geq \lambda_k$ are eigenvalues of
$P$ corresponding to the eigenvectors $\nu_1, \ldots, \nu_k$ respectively and
$\alpha_i = \langle g, \nu_i \rangle_{\pi}$.
Let $\one_x$ denote the $|X|$-dimensional vector that has $1$ in the position
corresponding to $x \in X$ and $0$ elsewhere. Then denote by $P^t(x, \cdot)$ the
distribution over states at time $t$ starting from state $x$ at time $0$.  By
the assumption that $P(x,x) \geq 0$ it follows that all eigenvalues of $P$ are
non negative. Observe that,
\begin{align}
P^t g &= \alpha_1 \lambda_1^t \nu_1 + \alpha_2 \lambda_2^t \nu_2 + \cdots +
\alpha_k \lambda_k^t \nu_k \label{eqn:find_eigen0}
\end{align}
Depending on the gap $\lambda_1 - \lambda_2$, for a suitably chosen value of
$t$, it will be the case that $P^t g \approx \alpha_1 \lambda_1^t \nu_1$. 
In Appendix~\ref{app:find_eigen}, we characterize
certain conditions on $g$ under which we can provably extract eigenvectors from
$g$. 

Now, it is easy to see how given access to a function $g : X \rightarrow \reals$
and some $x \in X$, we can efficiently approximate $\nu_1(x)$, where $\nu_1$ is
the top eigenvector of $g$. Rearranging terms of~(\ref{eqn:find_eigen0}), we
have
\begin{align}
\alpha_1 \lambda_1^t \nu_1(x) &=  g(x) - \sum_{i=1}^k \alpha_i \lambda_i^t \nu_i(x) 
\label{eqn:find_eigen1}
%
%
\end{align}
Let $\nu_1^\prime$ denote the function $\alpha_1 \lambda_1^t \nu_1$. Then,
$\nu_1 = \nu_1^\prime/\pinorm{\nu_1^\prime}$. Thus, if we can (approximately)
compute $\nu_1^\prime(x)$ and estimate the norm $\pinorm{\nu_1^\prime}$, then we
can approximate the value $\nu_1(x)$.

Recall that $\one_x^{\transpose} P^t$ is simply the distribution $P^t(x, \cdot)$
over states at time $t$ when starting from state $x$ at time $0$.  Thus,
$\one_x^{\transpose} P^t g = \E_{x^\prime \sim P^t(x, \cdot)}[g(x^\prime)]$.
With access to the one-step oracle, $\OS(\cdot)$, we can easily sample from
the distribution $P^t(x, \cdot)$. Let $\hat{\E}^t[g]$ denote the estimate of
$\E_{x^\prime \sim P^t(x, \cdot)}[g(x^\prime)]$ obtained by sampling. We can
obtain successive estimates each time increasing value of $t$ by one, until the
ratio $\hat{\E}^{t+1}[g]/\hat{\E}^t[g]$ converges.
Figure~\ref{alg:find_eigen} describes an algorithm that given input $x \in
X$, black-box access to $g$, and access to oracle $\OS(\cdot)$, outputs the
approximate value of $\nu_1(x)$, where $\nu_1$ is the top eigenvector of $g$.

In order to extract more than one eigenvector from $g$, we consider $g^\prime =
g - \alpha_1 \nu_1$. Of course, we can only obtain approximate values for
$\alpha_1$ and $\nu_1(x)$ and hence may introduce errors.
Appendix~\ref{app:find_eigen} gives conditions under which one can extract
multiple eigenvectors from a single function $g$ without the estimation errors
blowing up. For the actual learning algorithm, we will use several functions to
extract eigenvectors. For two functions, $g_1, g_2$, suppose we extract
eigenvectors $\nu_1, \nu_2$ with eigenvalues $\lambda_1$, $\lambda_2$
respectively. If the difference $|\lambda_1 - \lambda_2|$ is small, they may in
fact be the same eigenvalue. We explicitly orthogonalize $\nu_1$ and $\nu_2$ to
account for this possibility.


The learning algorithm we propose extracts as many eigenvectors as possible from
a collection of functions, $\scG = \{ g_1, \ldots, g_{\ell} \}$. Suppose, $\Nu =
\{\nu_1, \ldots, \nu_k\}$ is the resulting set of eigenvectors. Since, the
eigenvectors $\Nu$ are orthonormal, for any function $f : X \rightarrow \reals$,
the best approximation using $\Nu$ is given by,
\[
\tilde{f}(x) = \sum_{i = 1}^k \langle f, \nu_i \rangle_{\pi} \nu_i(x).
\]
A key point to note is that to estimate $\langle f, \nu \rangle_{\pi}$, we only
need access to i.i.d. labeled examples from $\EX(f, \pi_M)$.
Figure~\ref{alg:learning} describes the learning algorithm which is an
analogue of the low-degree algorithm of \citet{LMN:1993} under the
uniform distribution. \smallskip

\noindent {\bf Identifying Useful Functions to Extract Eigenvectors}: 
A key feature of our approach is the use of auxiliary functions from which we
extract the eigenvectors. 
In general, domain knowledge may be helpful to choose such functions. One
candidate class of functions, which we use in our experiments is the class of
\emph{local functions}. Suppose $X = \scA^n$, then for a subset $S \subseteq
[n]$, let $b : S \rightarrow \scA$ be an assignment of values to the variables
in $S$. Then, the \emph{local} function, $g_{S, b} : X \rightarrow \reals$ is
defined as,
\[ g_{S, b}(x) =  \prod (\one (x_i = b(i)) - \Pr(x_i = b(i))) 
\]
In the case of the boolean cube, these local functions are simply conjunctions
of a fixed size, and the algorithm would exactly recover the low-degree parity
functions as eigenvectors. 

From the point of view of learning, the most useful part of the spectrum is the
eigenvectors corresponding to large eigenvalues. These are usually more stable
and hence most likely to correspond to the \emph{signal} in the target function.
The eigenvectors corresponding to lower eigenvalues are more unstable and their
component in the labels may often be noise. The choice of auxiliary functions
above, which depend on a small number of variables, may facilitate finding such
stable eigenvectors.
%

We can summarize the discussion in this section as:

\begin{theorem} Let $M = \langle \scA^n, P \rangle$ be a Markov chain with
transition matrix $P$. Let $\pi_M$ be the stationary distribution and $\tau_M$
the mixing time of $M$ Let $\scG$ be a set of bounded functions from $\scA^n
\rightarrow \reals$ that satisfy the condition of Lemma~\ref{lem:app_cond} in
Appendix~\ref{app:find_eigen}. Then, \smallskip \\
$\mbox{~~~}$(i)~There is an efficient algorithm (Fig.~\ref{alg:find_eigen}))
that with access to the one-step oracle $\OS(\cdot)$ and black-box access to
$g$, extract from each $g \in \scG$ a constant number of eigenvectors.  \smallskip \\
$\mbox{~~~}$(ii)~Let $\Nu$ be the collection eigenvectors obtained from step
(i).  Then for any bounded $f : \scA^n \rightarrow \reals$ and $\epsilon > 0$,
there is an efficient algorithm (Fig.~\ref{alg:learning}) (with access to
$\EX(f, \pi_M)$ oracle) that outputs $\tilde{f}
\in \mathrm{span}(\Nu)$ such that, $\E_{\pi_M}[(\tilde{f} - f)^2] \leq \min_{h
\in \mathrm{span}(\Nu)} \E_{\pi_M}[(h - f)^2] + \epsilon$. \smallskip \\
$\mbox{~~~}$(iii)~The running time of both algorithms is polynomial in $n$, $\tau_M$,
$1/\epsilon$.
\end{theorem}

\subsection{Experiments}

We report results from an elementary experimental study of our proposed
techniques. We consider two models: (i) Ising model on a 4x4 2-D grid (ii) Graph
coloring on 2x3 2-D grid with 4 colors. The function classes we used are (a)
decision trees (b) linear separators (c) arbitrary functions of a small number
of linear separators and (d) DNF expressions. For a data point $(x, y)$, there
are two choices for features: (i) the component variables $x_1, \ldots, x_n$, or
(ii) eigenfeatures $\nu_1(x), \ldots, \nu_k(x)$ for a collection of
eigenvectors. In both cases, we perform degree 1 and degree 2
regression.\footnote{In the case of regression using eigenfeatures, estimating
the inner products with the target function suffices, since these are
orthonormal. However, to minimize sampling errors one may wish to implement
regularized regression in both cases.} The results are shown as bar-charts in
Figures~\ref{fig:ising-se}-\ref{fig:colouring-ce}.  The four methods we use are
the following: (i) Linear regression on variables (red), (ii) Degree two
regression on the variables (gold) (iii) Regression using eigenvectors (blue)
(iv) Degree 2 regression using eigenvectors (green). For each function class, we
chose three random functions and used these algorithms for learning. We report
both the root-mean square error (RMSE) for approximating the target, and the
classification error (CE) of the hypothesis obtained by thresholding the best
approximation.  The results reported are as follows: (i)
Figure~\ref{fig:ising-se} -- RMSE (4x4 2-D Ising model) (ii)
Figure~\ref{fig:ising-ce} -- CE (4x4 2-D Ising model) (iii)
Figure~\ref{fig:colouring-se} -- RMSE (colourings of graph on 2x3 2-D grid) (iv)
Figure~\ref{fig:colouring-ce} -- CE (colourings of graph on 2x3 2-D grid).

Regression using eigenfeatures seems to outperform linear regression. However,
degree 2 regression using eigenfeatures seems by far the best. In general for
eigenfunctions $\nu_1$, $\nu_2$ of transition matrix $P$, the product function
$\nu_1 \nu_2$ is not an eigenfunction. However, if these are localized in the
sense that they essentially depend on a small number of variables, then such a
product is most likely close to being an eigenfunction. The rapid mixing
condition of the Gibbs sampler together with our choice of auxiliary functions
make it likely that the eigenvectors we find are indeed localized. We considered
products of all pairs of eigenfunctions extracted by our algorithm and a
significant fraction of these products were very close to being eigenvectors.
We expect that further theoretical and experimental work will elucidate this
interesting direction.

Working with small graphical models allowed us high-precision computation.
Otherwise, obtaining numerically stable eigenvectors would require better
algorithms than the ones we are currently using.  Even with the current
algorithms, working with larger models is possible using clusters or GPUs. Most
importantly the time required to obtain a single sample typically scales very
benignly with the size of the graphical model. Also, the sampling can be easily
parallelized since these are essentially independent runs of the Markov chain.

\begin{figure}
\begin{center}
$
\begin{array}{cc}
\includegraphics[width=0.4\columnwidth]{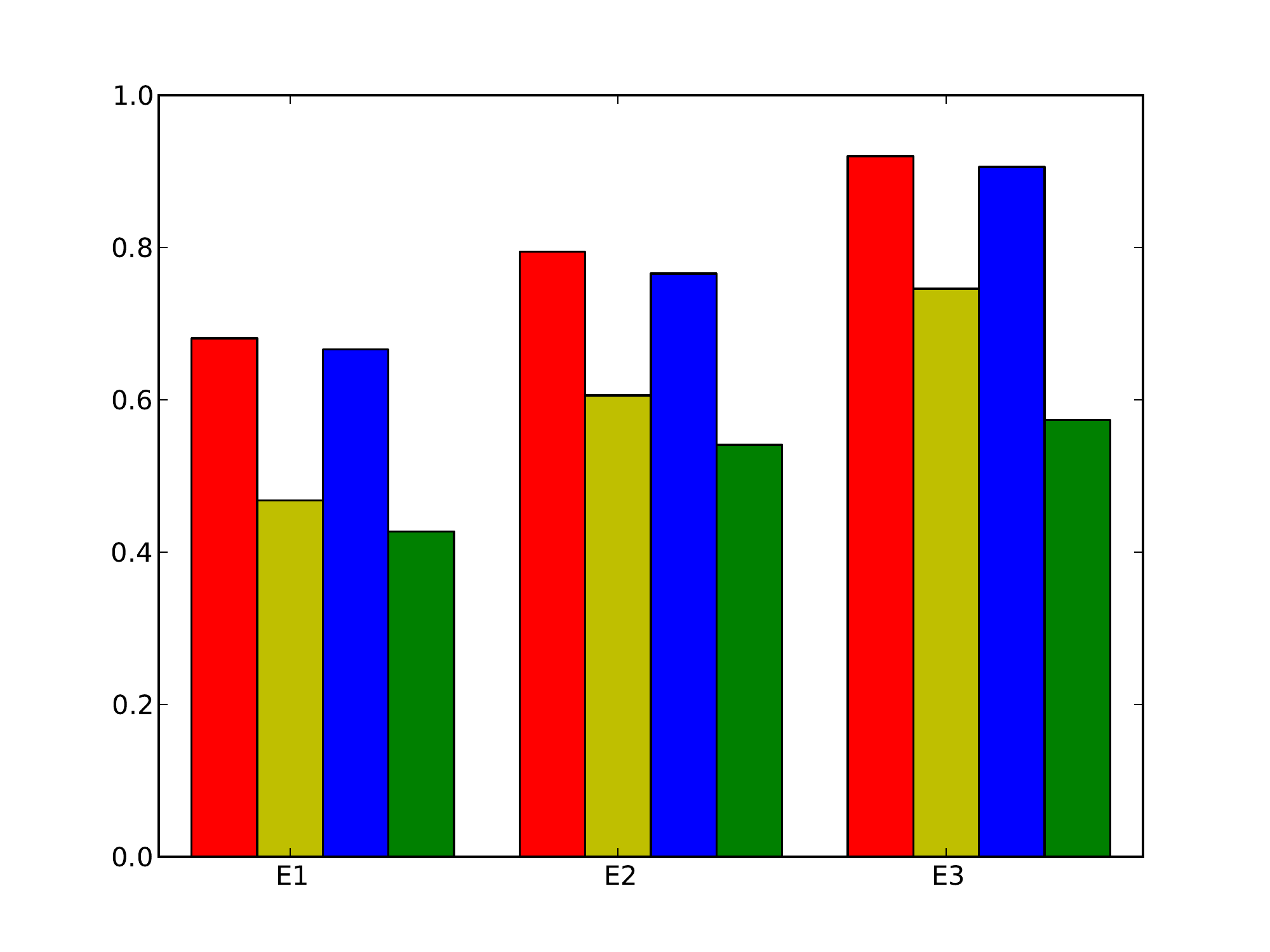} & 
\includegraphics[width=0.4\columnwidth]{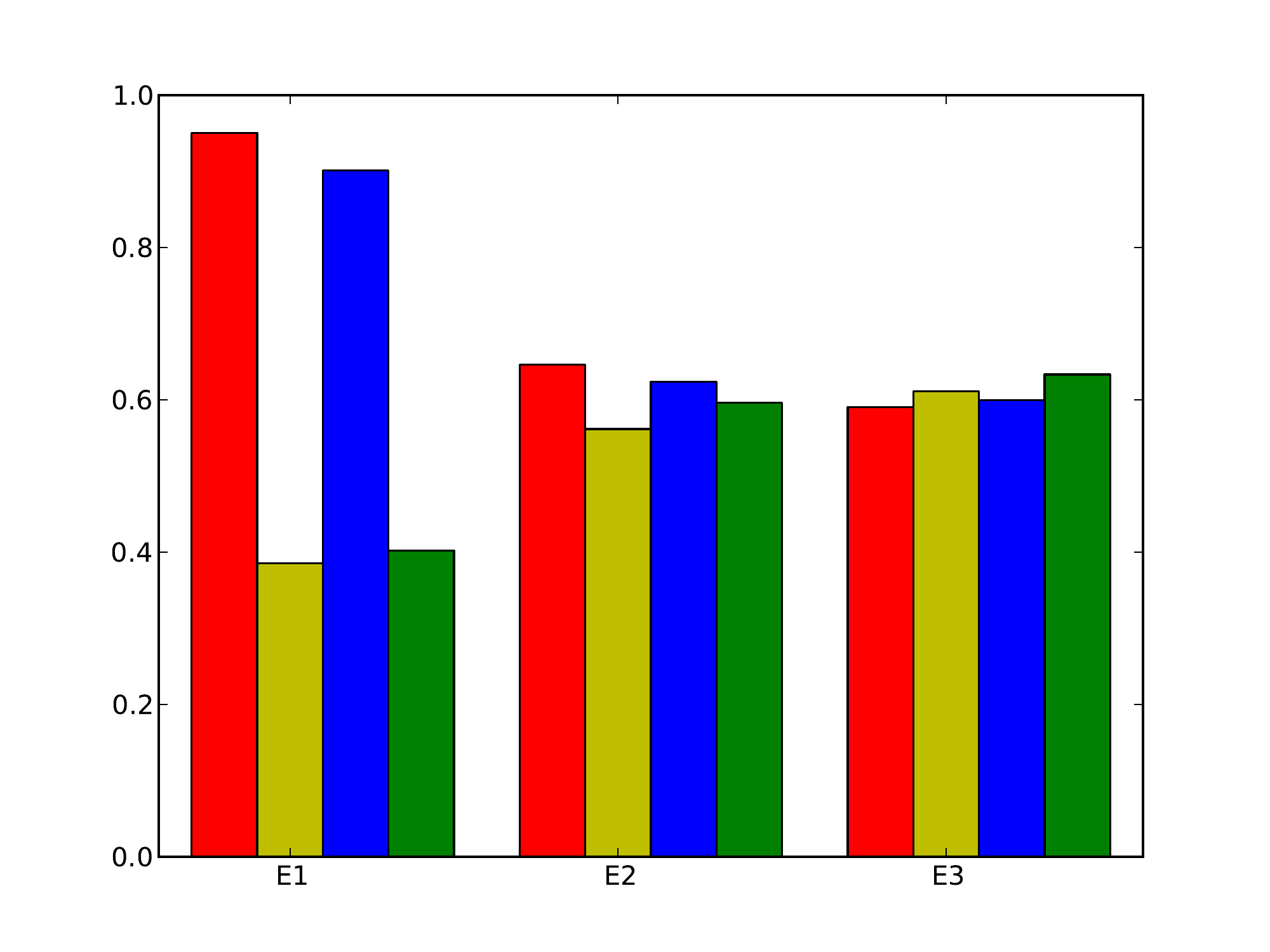} \\
\mbox{Decision Trees} & \mbox{Linear Separators} \\
\includegraphics[width=0.4\columnwidth]{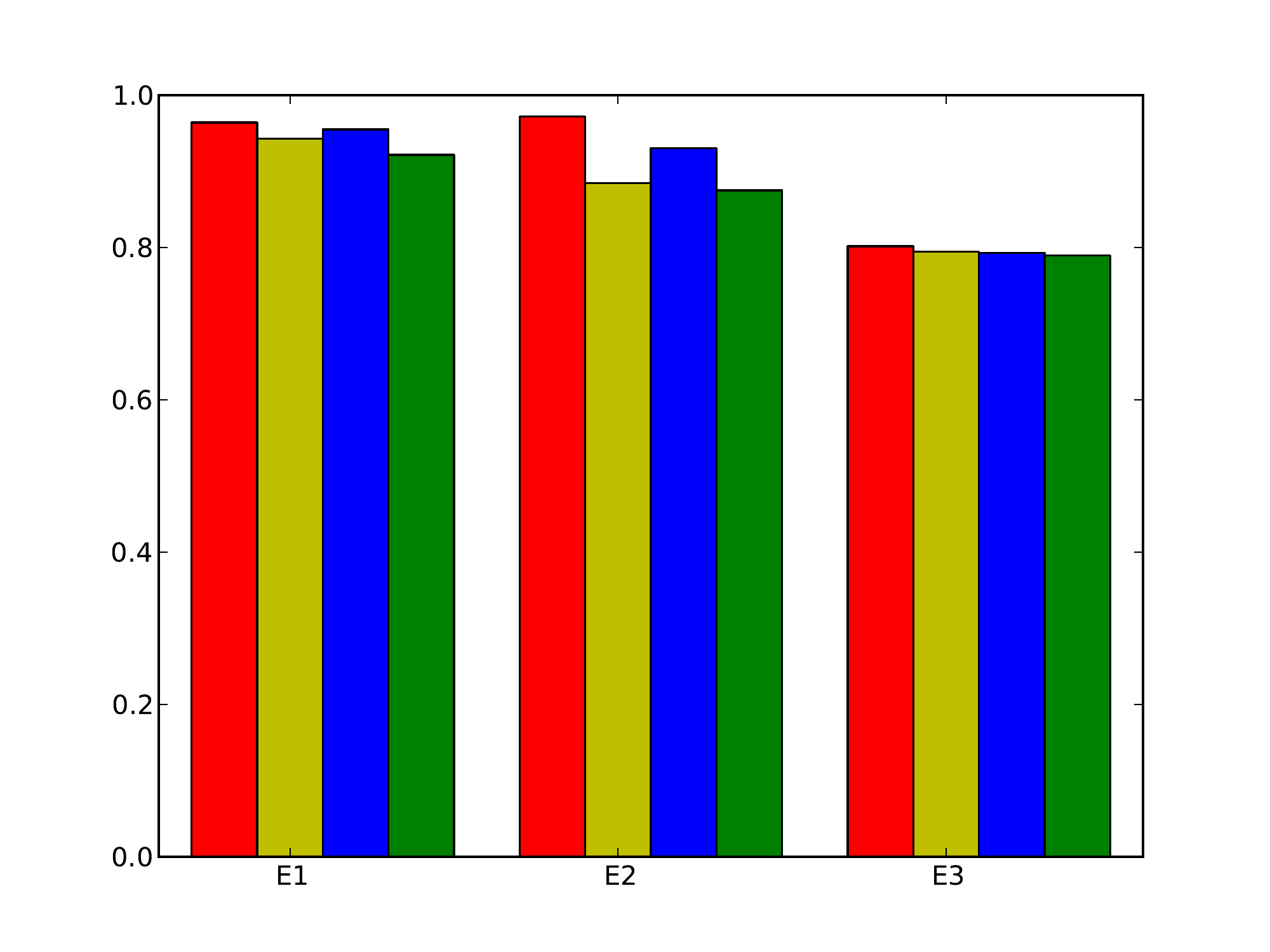} & 
\includegraphics[width=0.4\columnwidth]{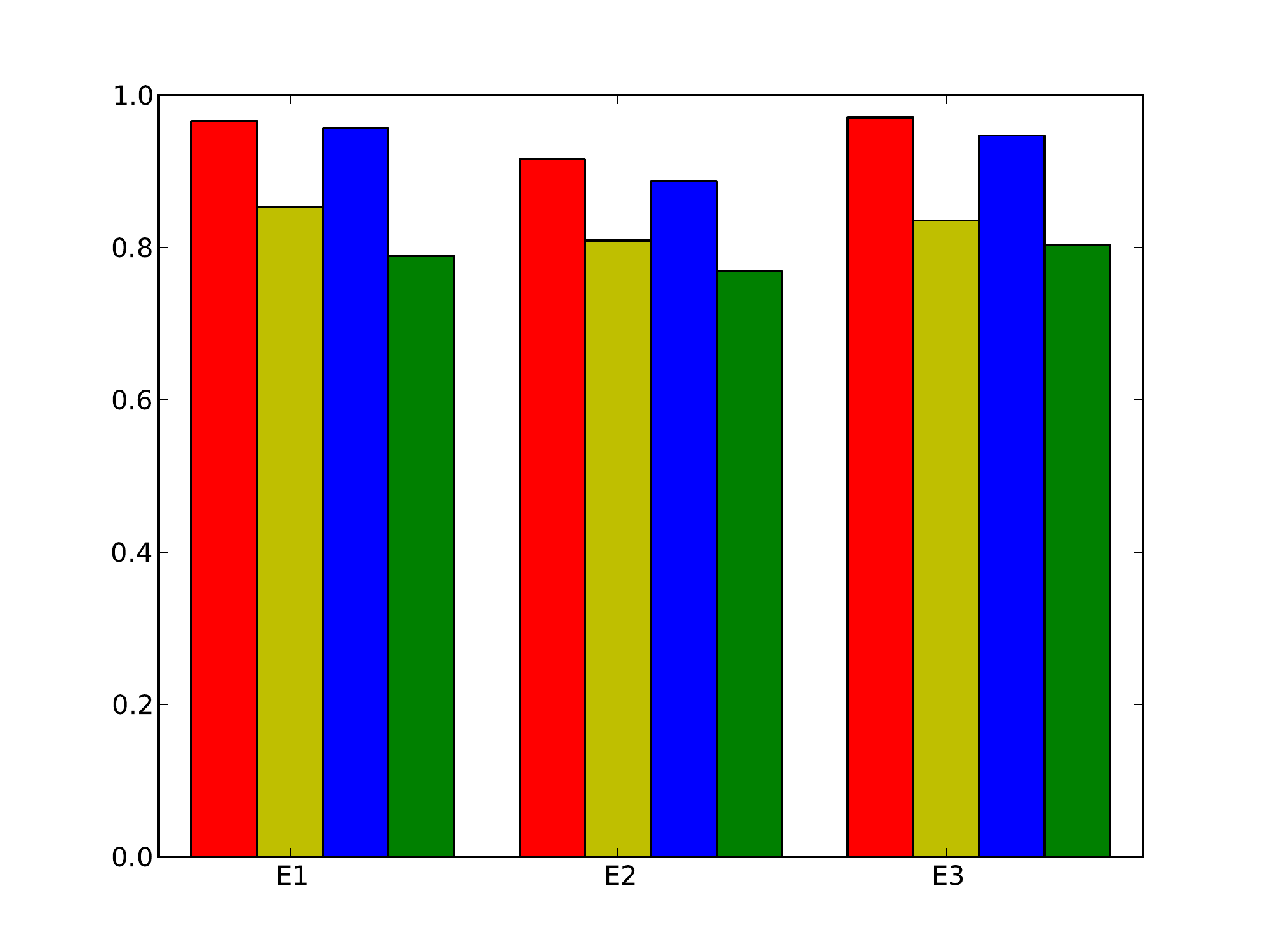} \\
\mbox{Arbitrary Functions of Linear Separators} & \mbox{DNF Expressions} \\
\end{array}
$
\caption{\label{fig:ising-se} Bar chart showing RMSE errors for approximating
four classes of functions. The groups correspond to three random functions
chosen from each class. (a) Red : Linear (Degree 1) Regression on variables (b)
Gold : Degree 2 Regression on variables (c) Blue : Regression (Degree 1) on
eigenfeatures (d) Green : Degree 2 Regression on eigenfeatures. The graphical
model is the Ising model on a 4x4 2-D grid.}
\end{center}
\end{figure}

\begin{figure}
\begin{center}
$
\begin{array}{cc}
\includegraphics[width=0.4\columnwidth]{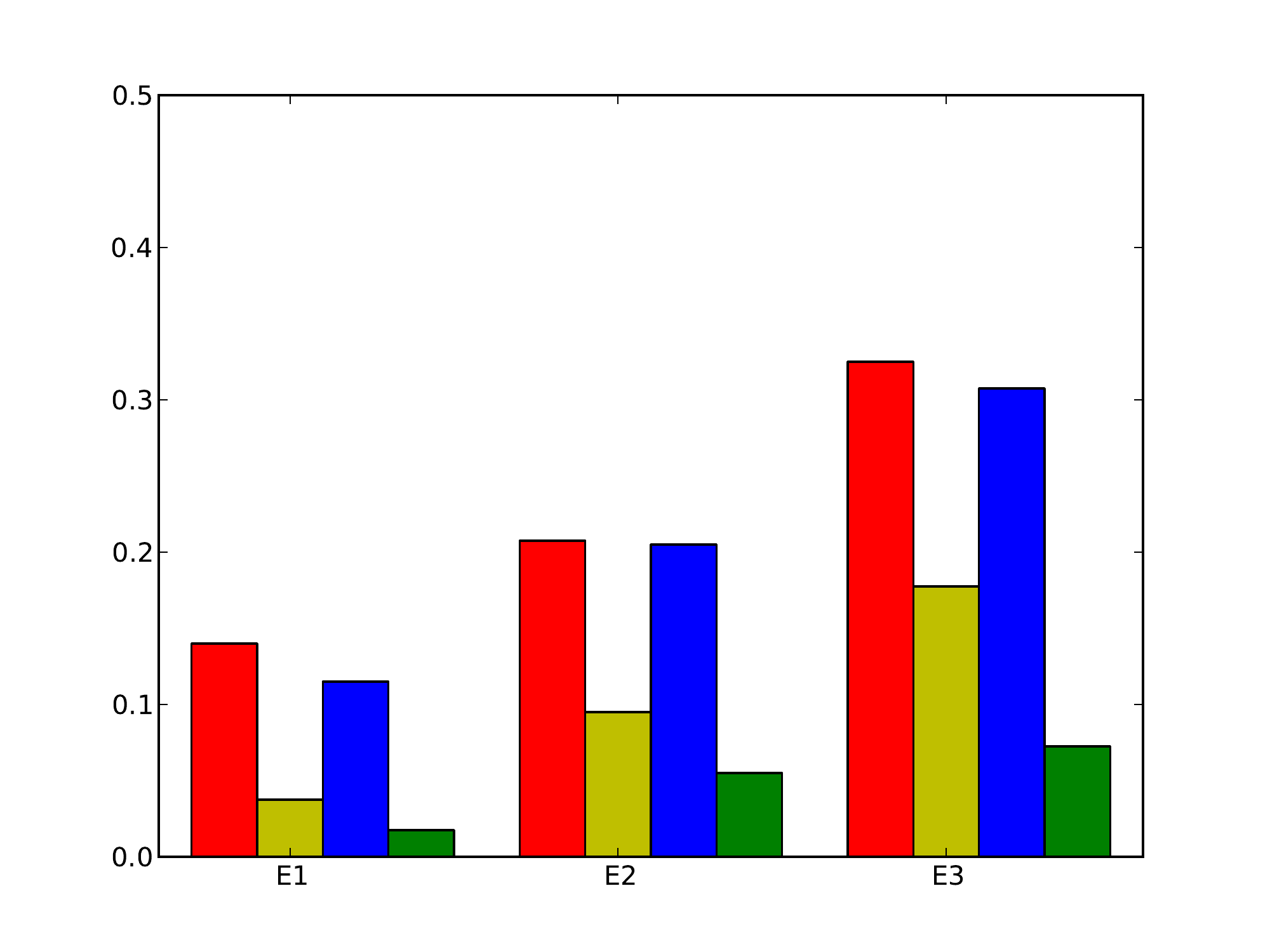} & 
\includegraphics[width=0.4\columnwidth]{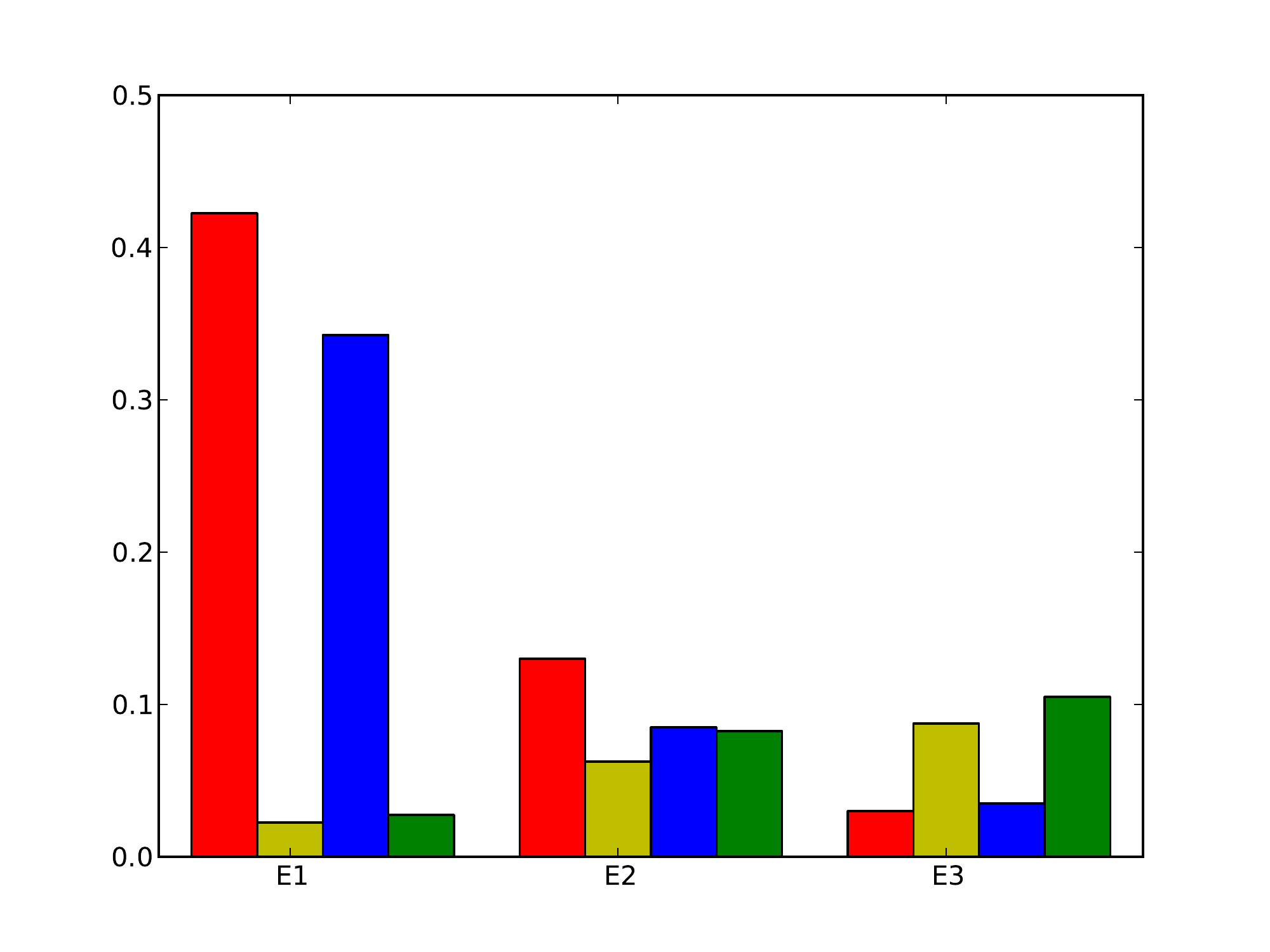} \\
\mbox{Decision Trees} & \mbox{Linear Separators} \\
\includegraphics[width=0.4\columnwidth]{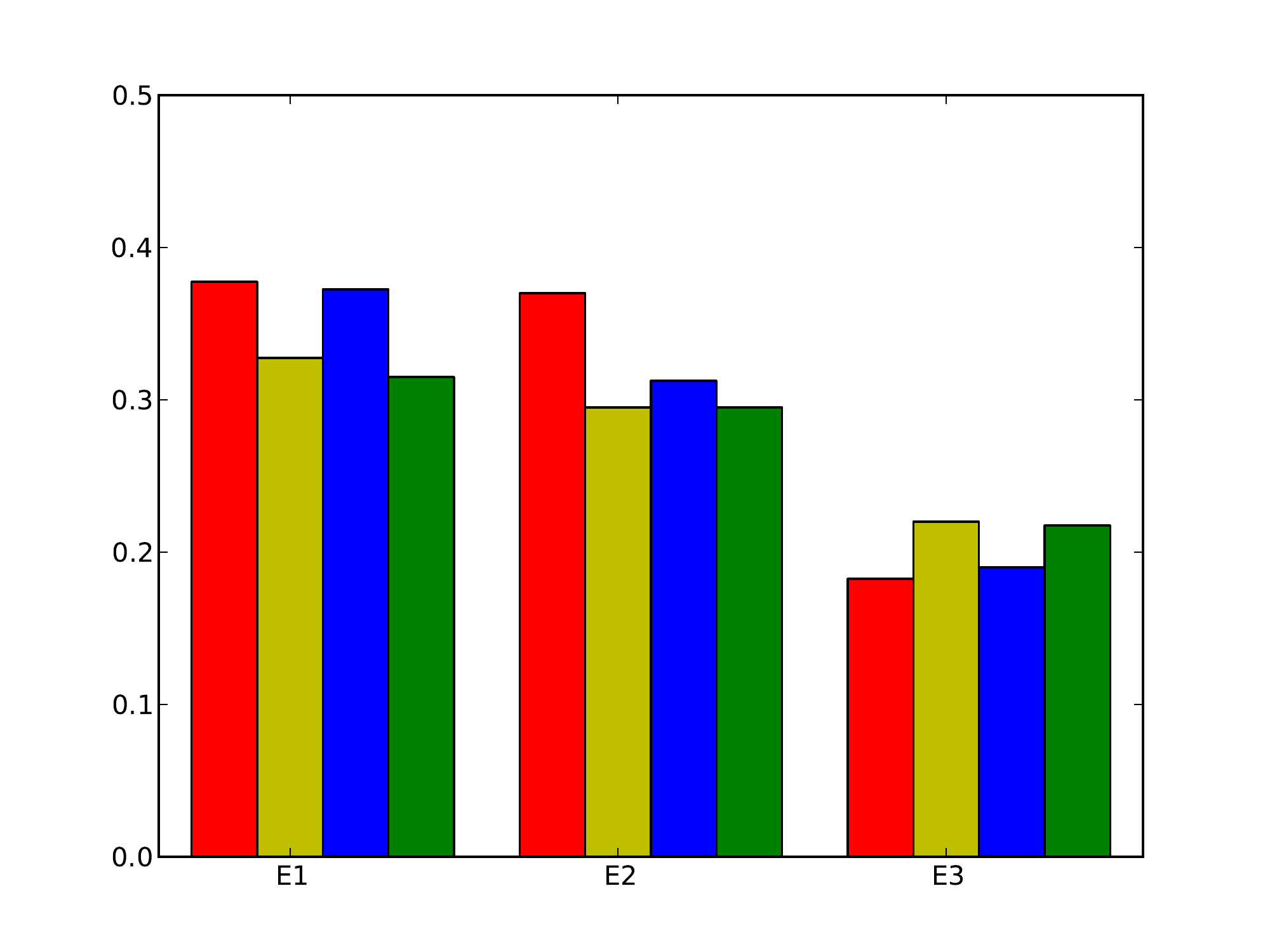} & 
\includegraphics[width=0.4\columnwidth]{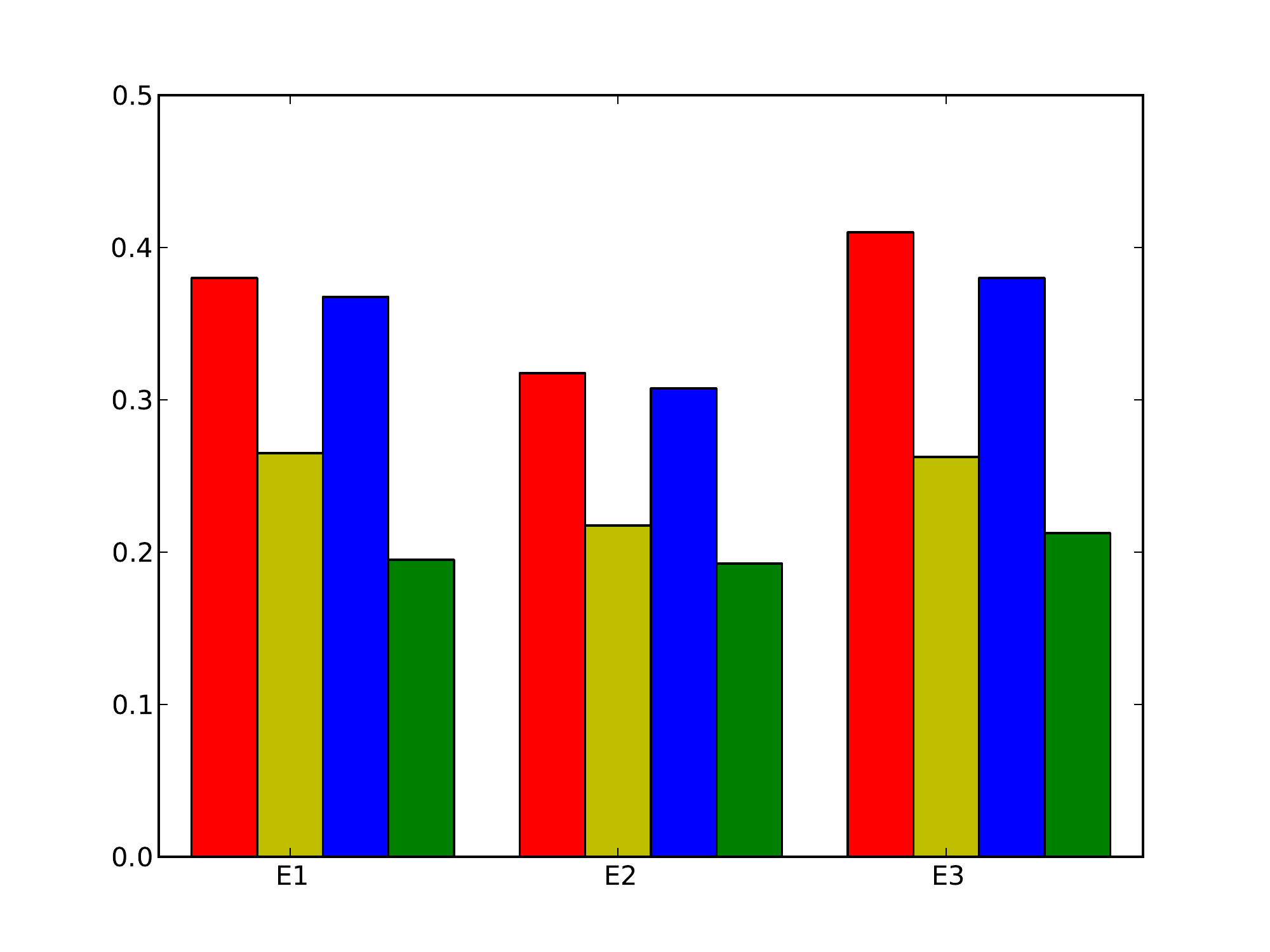} \\
\mbox{Arbitrary Functions of Linear Separators} & \mbox{DNF Expressions} \\
\end{array}
$
\caption{\label{fig:ising-ce} Bar chart showing classification errors for
learning four classes of functions. The groups correspond to three random
functions chosen from each class. (a) Red : Linear (Degree 1) Regression on
variables (b) Gold : Degree 2 Regression on variables (c) Blue : Regression
(Degree 1) on eigenfeatures (d) Green : Degree 2 Regression on eigenfeatures.
The graphical model is the Ising model on a 4x4 2-D grid.} 
\end{center}
\end{figure}

\begin{figure}
\begin{center}
$
\begin{array}{cc}
\includegraphics[width=0.4\columnwidth]{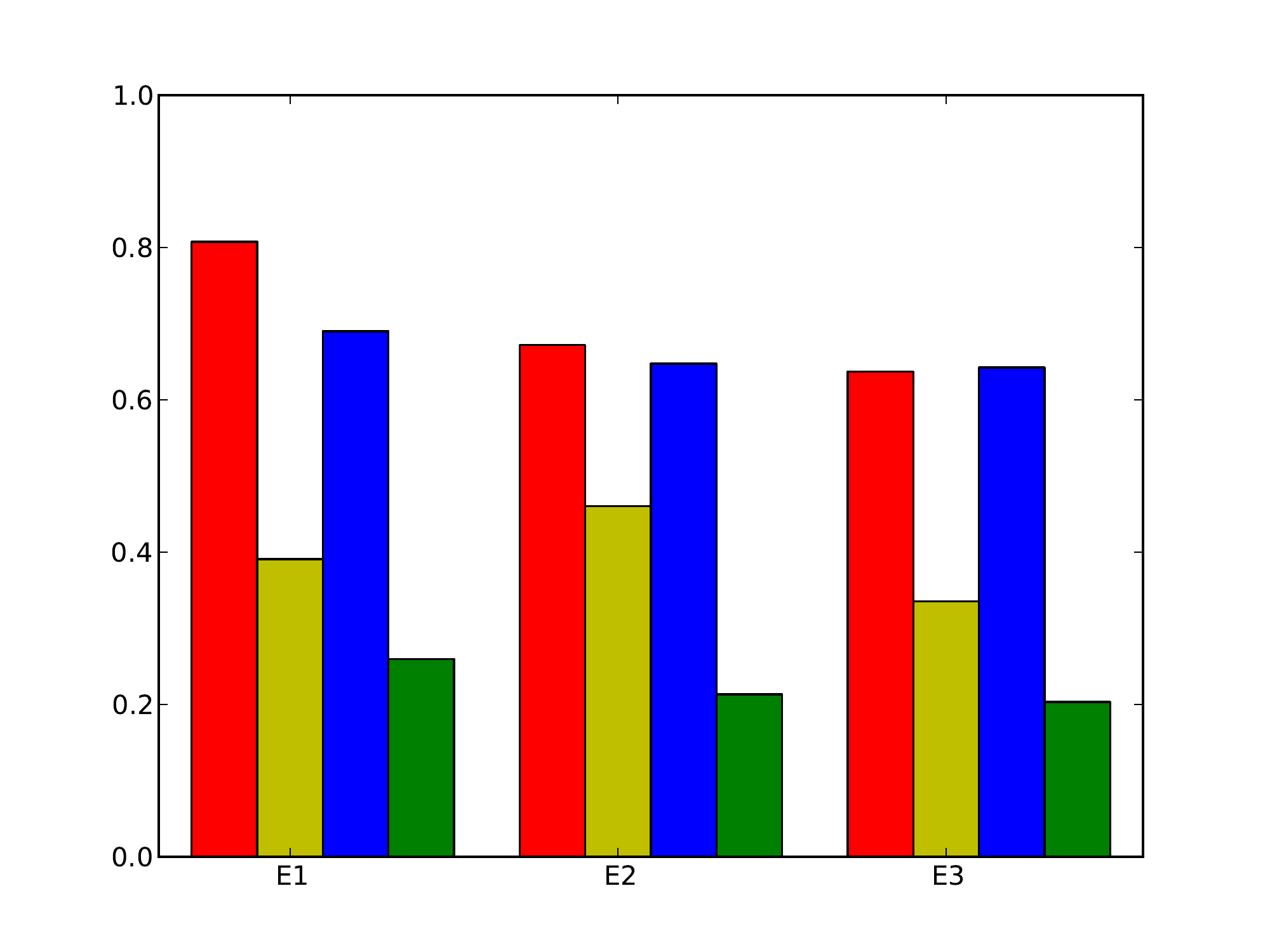} & 
\includegraphics[width=0.4\columnwidth]{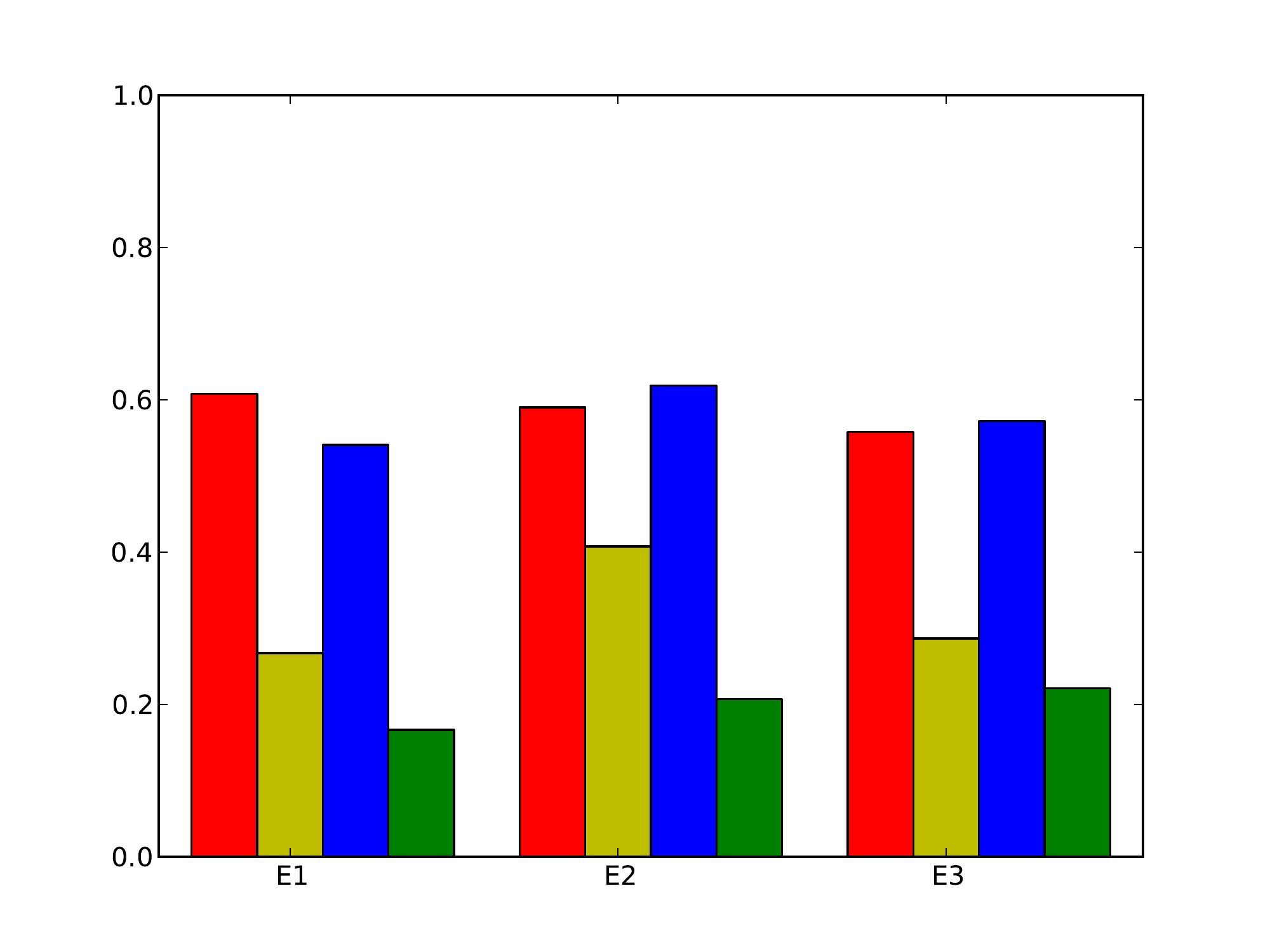} \\
\mbox{Decision Trees} & \mbox{Linear Separators} \\
\includegraphics[width=0.4\columnwidth]{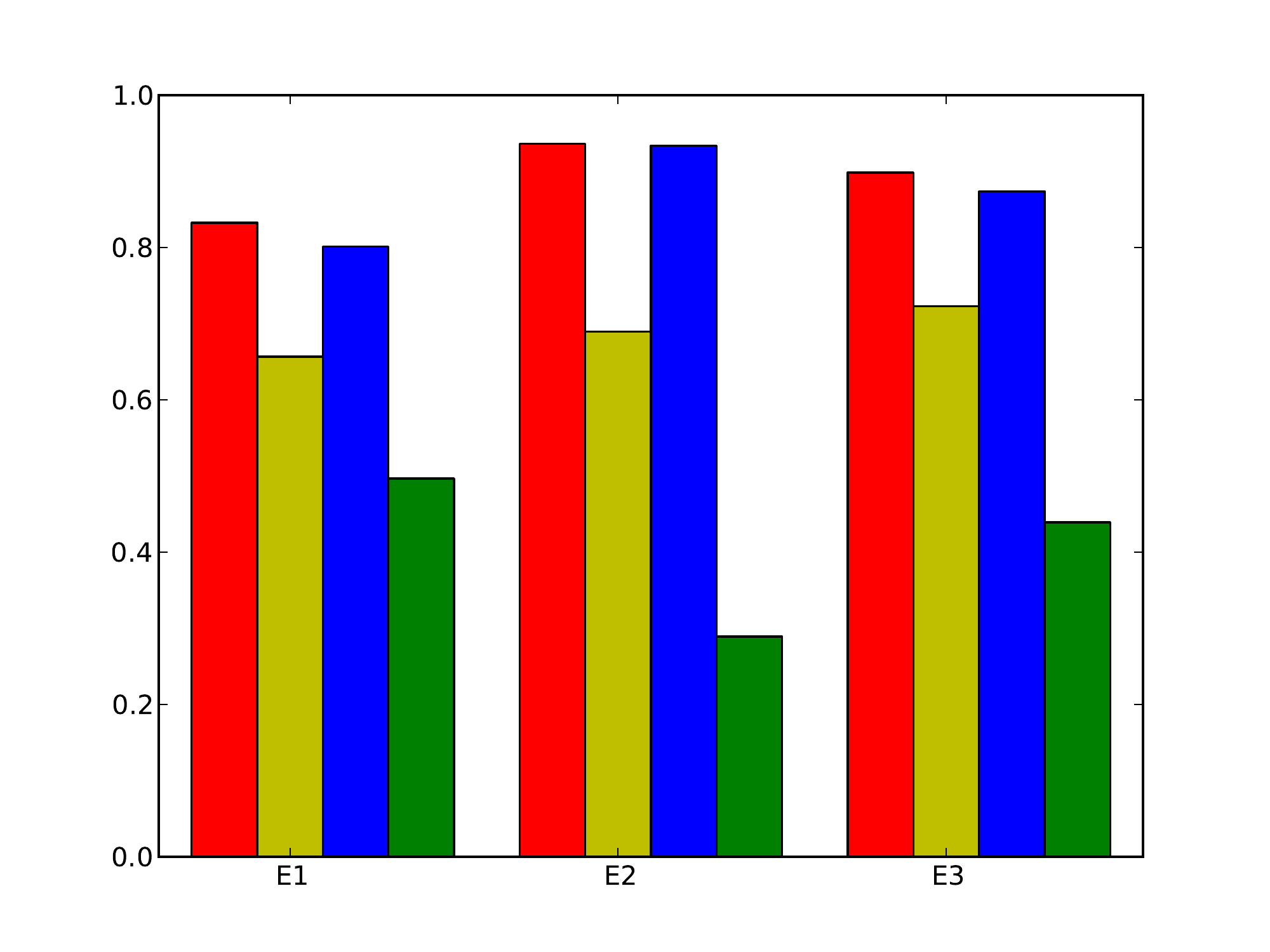} & 
\includegraphics[width=0.4\columnwidth]{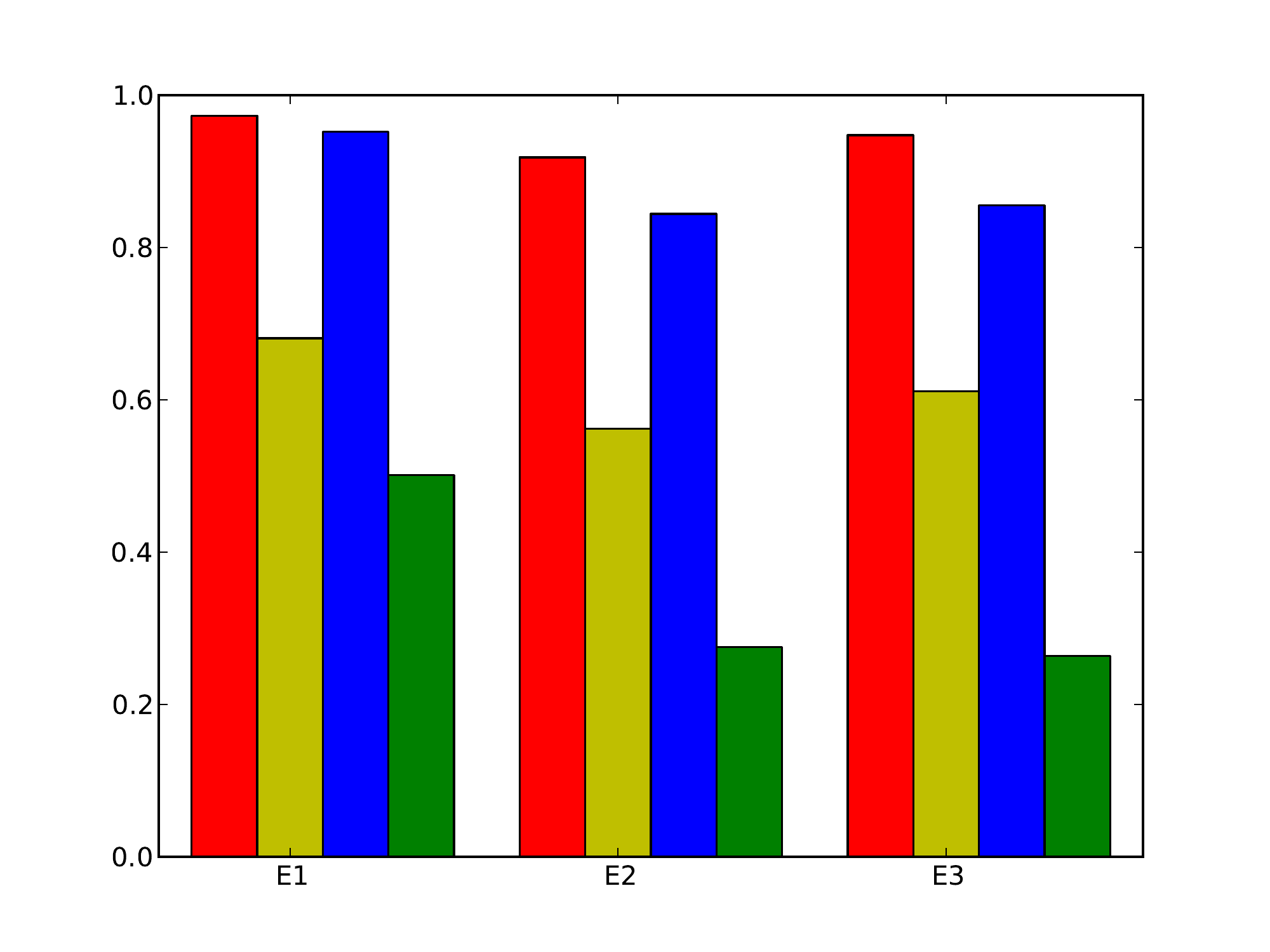} \\
\mbox{Arbitrary Functions of Linear Separators} & \mbox{DNF Expressions} \\
\end{array}
$
\caption{\label{fig:colouring-se} Bar chart showing RMSE errors for approximating
four classes of functions. The groups correspond to three random functions
chosen from each class. (a) Red : Linear (Degree 1) Regression on variables (b)
Gold : Degree 2 Regression on variables (c) Blue : Regression (Degree 1) on
eigenfeatures (d) Green : Degree 2 Regression on eigenfeatures. The graphical model is 4-colourings of the 2x3 2-D grid.}
\end{center}
\end{figure}

\begin{figure}
\begin{center}
$
\begin{array}{cc}
\includegraphics[width=0.4\columnwidth]{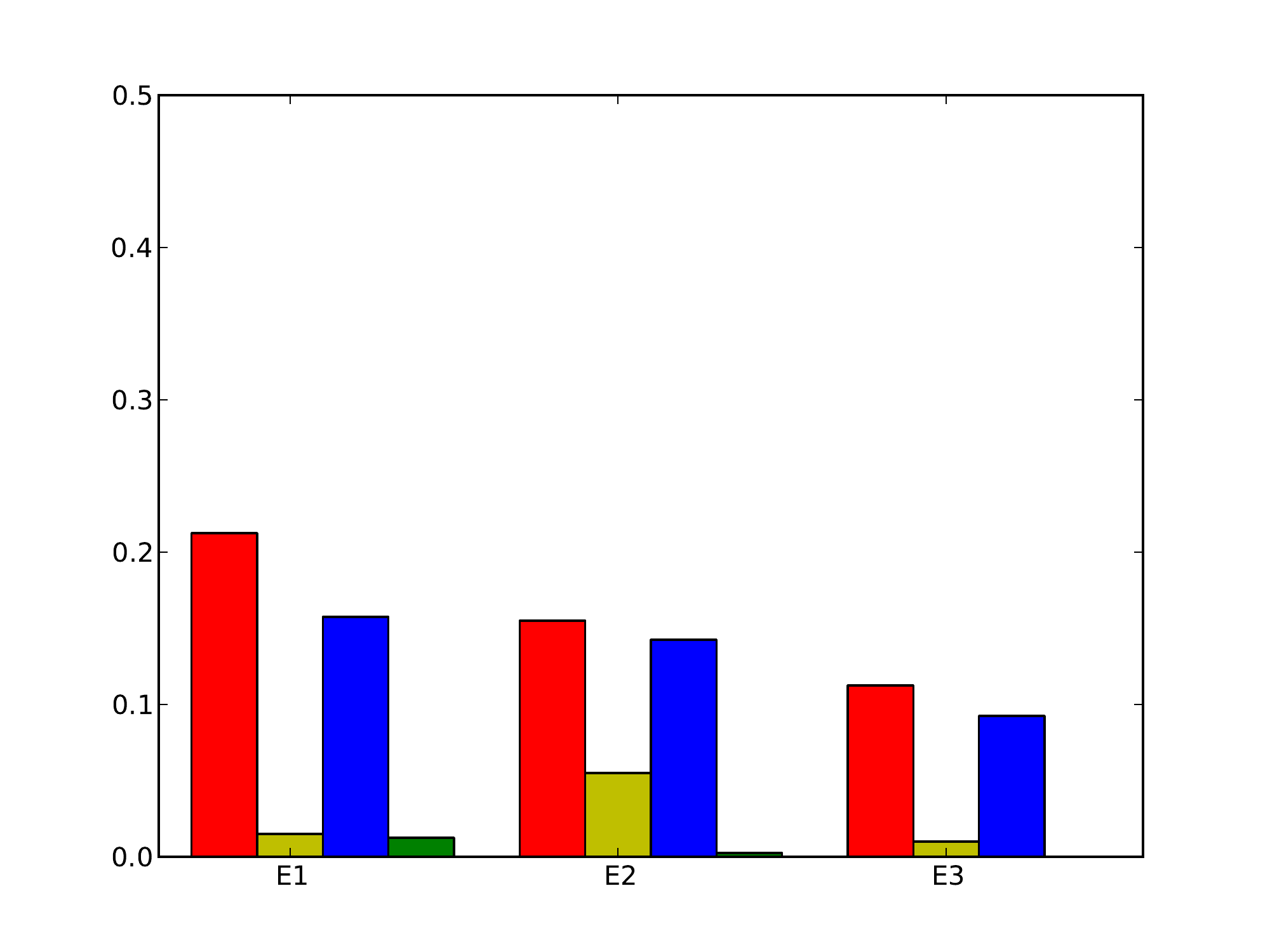} & 
\includegraphics[width=0.4\columnwidth]{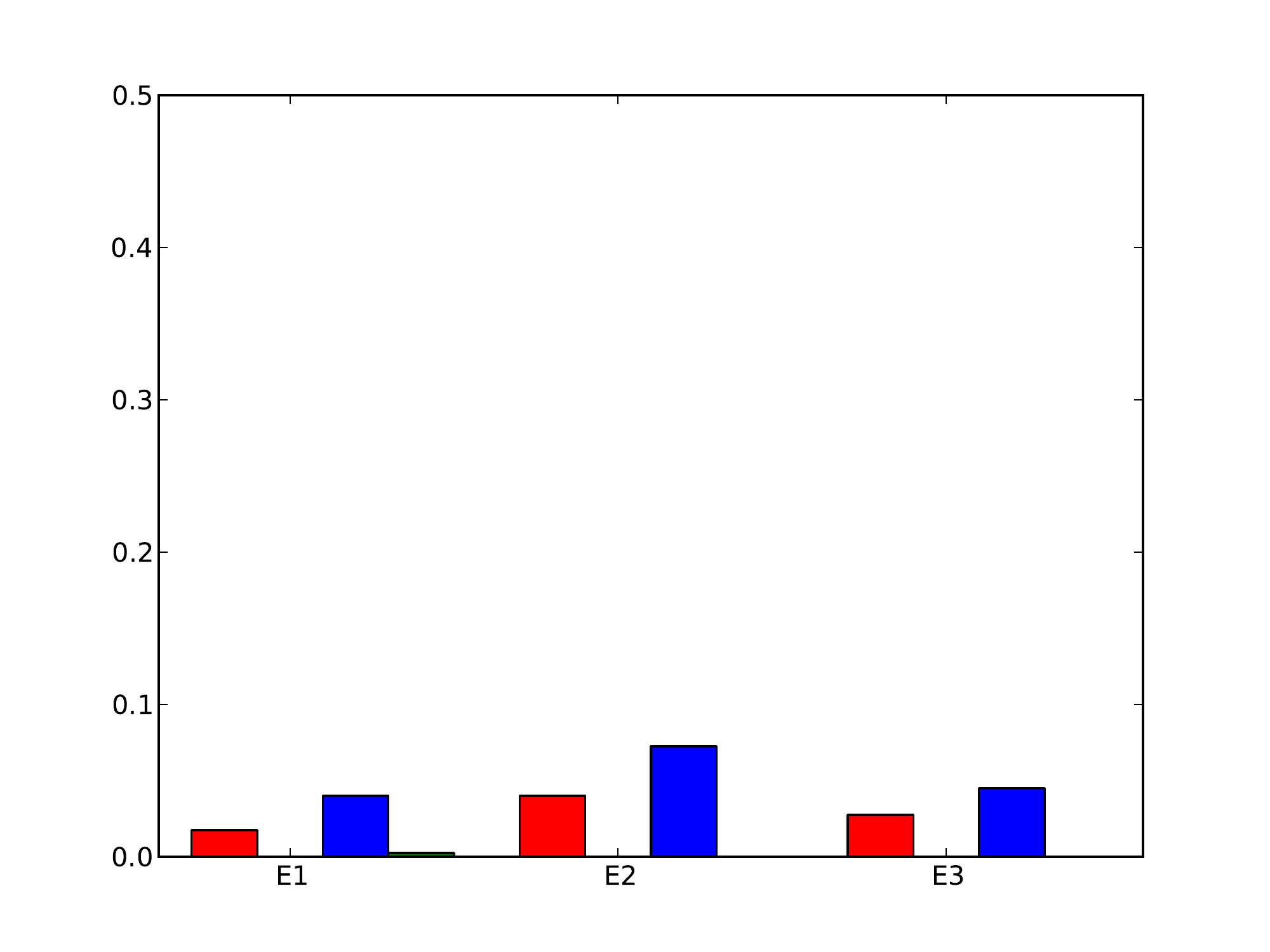} \\
\mbox{Decision Trees} & \mbox{Linear Separators} \\
\includegraphics[width=0.4\columnwidth]{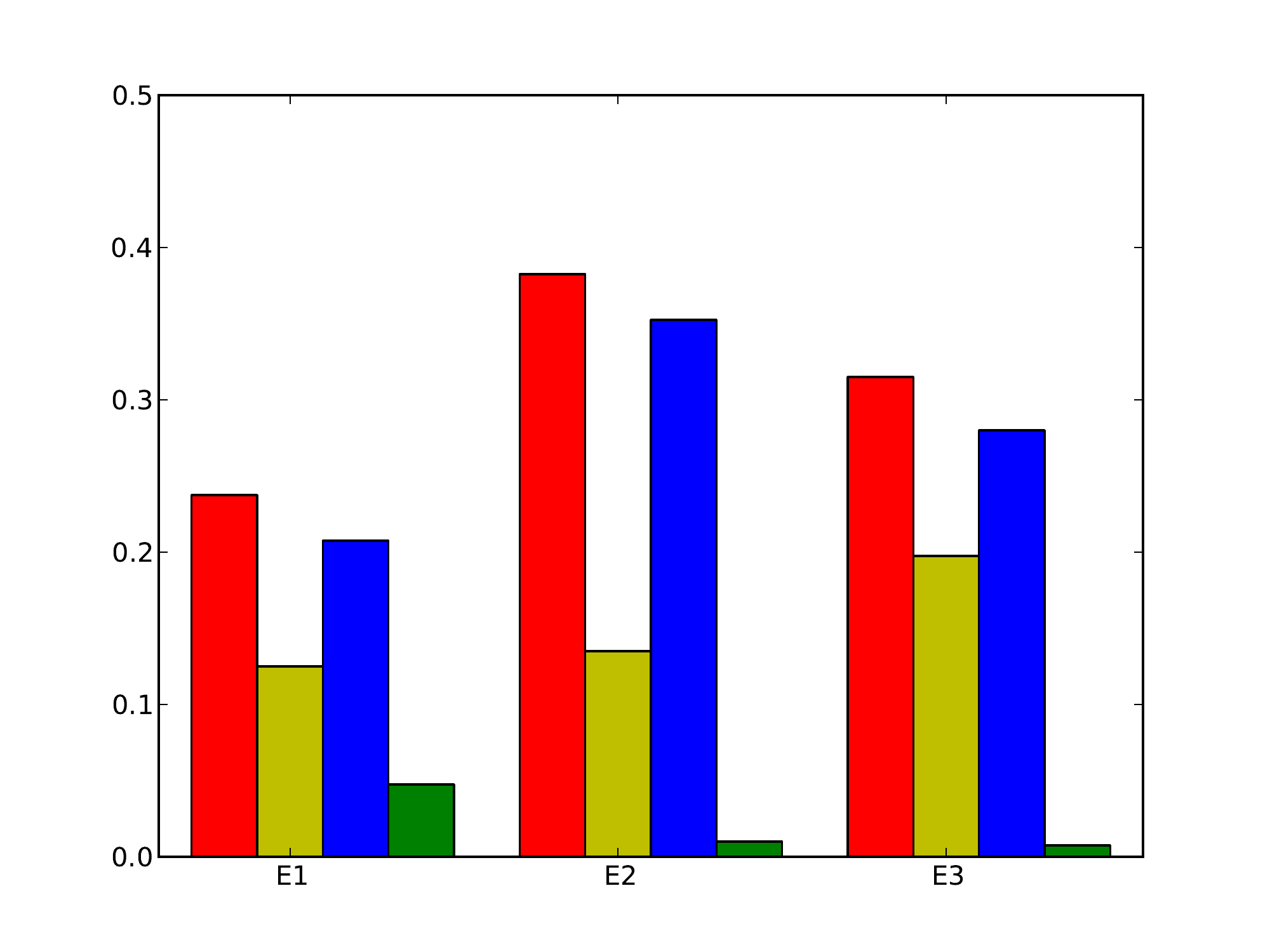} & 
\includegraphics[width=0.4\columnwidth]{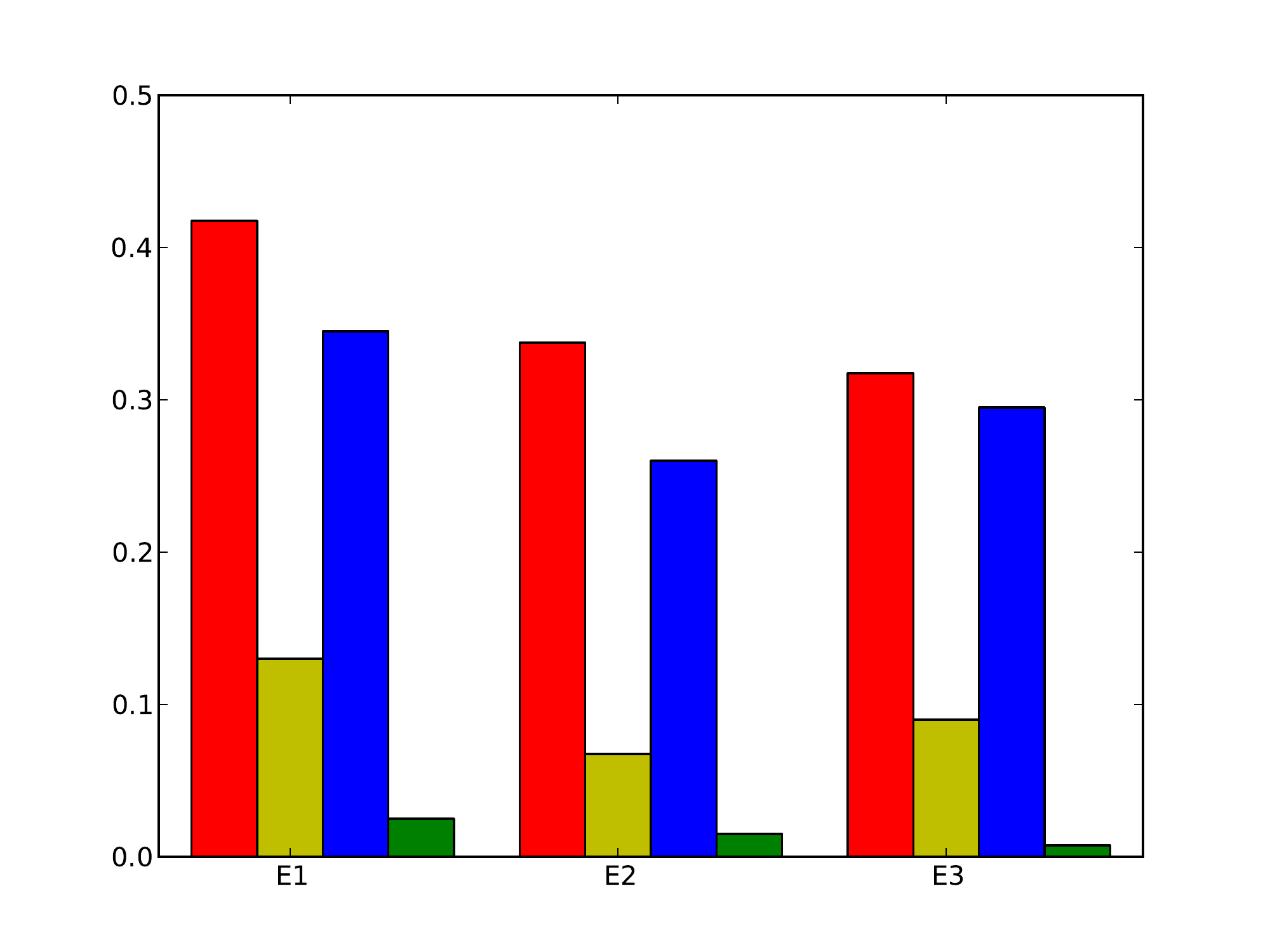} \\
\mbox{Arbitrary Functions of Linear Separators} & \mbox{DNF Expressions} \\
\end{array}
$
\caption{\label{fig:colouring-ce} Bar chart showing classification errors for
learning four classes of functions. The groups correspond to three random
functions chosen from each class. (a) Red : Linear (Degree 1) Regression on
variables (b) Gold : Degree 2 Regression on variables (c) Blue : Regression
(Degree 1) on eigenfeatures (d) Green : Degree 2 Regression on eigenfeatures.
The graphical model is 4-colourings of the 2x3 2-D grid.}
\end{center}
\end{figure}

\subsection{Discussion.}

It would be interesting to study the performance of our techniques on larger
models and a wider class of functions. A more theoretical question of interest
is -- can one show under further suitable assumptions that some classes of
functions are well approximated by stable eigenvectors with respect to (certain
classes of) MRF distributions?

In this work, we assume that the underlying MRF is known and the learner has
access to the one-step oracle $\OS(\cdot)$. There is a large body of work
devoted to estimating parameters of MRFs from observed data. Our methods could
be used in conjunction with these -- first learn the parameters of the
graphical model, then use that model to simulate the $\OS(\cdot)$ oracle. It
would be interesting to compare how such an approach would fare compared to
methods such as PCA, ICA etc. }



\section{Learning Juntas}
\label{sec:juntas}
In this section, we consider the problem of learning the class of $k$-juntas.
Suppose $X = \scA^n$ is the instance space. A $k$-\emph{junta} is a boolean
function that depends on only $k$ out of the $n$ possible co-ordinates of $x \in
X$. In this section, we consider the model in which we receive labeled examples
from a random walk of a Markov chain (see
Section~\ref{sec:model}.2).\footnote{In the model where labeled examples are
received from the only from stationary distribution, it seems unlikely that any
learning algorithm can benefit from access to the $\OS(\cdot)$ oracle. The
problem of learning juntas in time $n^{o(k)}$ is a long-standing open problem
even when the distribution is uniform over the Boolean cube, where the
$\OS(\cdot)$ oracle can easily be simulated by the learner itself.} In this case
the learning algorithm can identify the $k$ relevant variables by keeping track
of which variables caused the function to change its value. 

For a subset, $S \subseteq [n]$ of the variables and a function $b_S : S
\rightarrow \scA$, let $x_S = b_S$ denote the event, $\bigwedge_{i \in S} x_i =
b_S(x_i)$, \ie it fixes the assignment on the variables in $S$ as given by the
function $b_S$. A set $S$ is the \emph{junta} of function $f$, if the variables
in $S$ completely determine the value of $f$. In this case, for $b_S : S
\rightarrow \scA$, every $x$ satisfying $x_S = b_S$ has the same value $f(x)$
and by slight abuse of notation we denote this common value by $f(b_S)$.

Figure~\ref{alg:learn-juntas} describes the simple algorithm for learning
juntas.  Theorem~\ref{thm:junta} gives conditions under which
Algorithm~\ref{alg:learn-juntas} is guaranteed to succeed. Later, we show that
the Ising model and graph coloring satisfy these conditions.
%
%
\begin{figure}[!t]
\begin{center}
\fbox{
\begin{minipage}{0.95 \textwidth}
{\bf Inputs}: Access to labeled examples $(x, f(x))$ from Markov Chain $M$ \medskip 

{\bf Identifying Relevant Variables}
\begin{enumerate}
\item ${\mathcal J} = \emptyset$
\item Consider a random walk, $\langle (x^1, f(x^1)), \ldots, (x^T,
f(x^T) \rangle$. 
\item For every, $i$, such that $f(x^i) \neq f(x^{i+1})$, if $j$ is the variable
such that $x^i_j \neq x^{i+1}_j$, add $j$ to ${\mathcal J}$.
\end{enumerate}
 
{\bf Learning $f$}
\begin{enumerate}
\item Consider each of the $|\scA|^{|{\mathcal J}|}$ possible assignments
$b_{{\mathcal J}} \rightarrow \scA$. We will construct a truth table for a
function $h : \scA^{\junta} \rightarrow \yy$. 
\item For a fixed $b_{\junta}$, let $h(b_{\junta})$ be the plurality label among
the $x^i$ in the random walk above for which $x^i_j = b_{\junta}(j)$ for all $j
\in \junta$. 
\end{enumerate} \medskip

{\bf Output}: Hypothesis $h$
\end{minipage}
}
\end{center}
\caption{Algorithm: Exact Learning $k$-juntas \label{alg:learn-juntas}}
\end{figure}

\begin{theorem} 
\label{thm:junta} Let $X = \scA^n$ and let $M = \langle X, P \rangle$ be a
time-reversible rapidly mixing MC. Let $\pi$ denote the stationary distribution
of $M$ and $\tau_M$ its mixing time. Furthermore, suppose that $M$ has
\emph{single-site} dynamics, \ie $P(x, x^\prime) = 0$ if $d_H(x,x^\prime) > 1$
and that the following conditions hold:  \smallskip \\
$\mbox{~~~}$(i)~For any $S \subseteq [n]$, $b_S : S \rightarrow \scA$ either
$\pi(x_S = b_S) = 0$ or $\pi(x_S = b_S) \geq 1/(c|\scA|)^{|S|}$, where $c$ is a
constant. \smallskip \\
$\mbox{~~~}$(ii)~For any $x, x^\prime$ such that $\pi(x) \neq 0$, $\pi(x^\prime)
\neq 0$ and $d_H(x, x^\prime) = 1$, $P(x, x^\prime) \geq \beta$. \smallskip \\
Then Algorithm~\ref{alg:learn-juntas} \emph{exactly} learns the class of
$k$-junta functions with probability at least $1 - \delta$ and the running time
is polynomial in $n, |\scA|^k, \tau_M, 1/\beta, \log(1/ \delta)$. 
\end{theorem}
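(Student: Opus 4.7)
The plan is to analyze the two phases of the algorithm separately and combine them by a union bound. Let $J^* \subseteq [n]$ denote the minimal relevant-variable set of $f$, so $|J^*| \leq k$. I would show that (a) in $T = O(\tau_M (c|\scA|)^k \beta^{-1} \log(|\scA|^k/\delta))$ steps the set $\junta$ recovered in Phase 1 equals $J^*$ with probability at least $1-\delta/2$, and (b) conditional on this, the table $h$ produced in Phase 2 agrees with $f$ on every $b_\junta$ in the support of the $J^*$-marginal of $\pi$ with probability at least $1-\delta/2$. The theorem then follows by a final union bound.

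For Phase 1 the inclusion $\junta \subseteq J^*$ is deterministic: by single-site dynamics any consecutive pair $(x^i, x^{i+1})$ differs in at most one coordinate, and if $f(x^i) \neq f(x^{i+1})$ the coordinate that changed must influence $f$, hence lies in $J^*$. The content is the reverse inclusion. For each $j \in J^*$, minimality of $J^*$ combined with ergodicity of the chain on the $\pi$-support yields $\pi$-positive states $x, x'$ differing only in coordinate $j$ with $f(x) \neq f(x')$; writing $b, b'$ for the induced $J^*$-projections, assumption (i) gives $\pi(x_{J^*} = b), \pi(x_{J^*} = b') \geq (c|\scA|)^{-k}$, and assumption (ii) gives $P(x,x') \geq \beta$. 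Hence under $\pi$ the per-step probability of a Phase 1 witness for $j$ (the event that $x^i_{J^*} = b$ and $x^{i+1}$ is obtained by a $j$-flip landing at a $\pi$-positive state with $J^*$-projection $b'$) is at least $\beta(c|\scA|)^{-k}$. Applying the standard mixing-time chunking, after a burn-in of $O(\tau_M \log(|\scA|^k))$ steps the chain is within TV distance well below $(c|\scA|)^{-k}$ of $\pi$, and chunking the remainder into blocks of length $\tau_M$ lets one couple block-level witness indicators to independent Bernoullis of parameter $\Omega(\beta(c|\scA|)^{-k})$. A union bound over $j \in J^*$ then makes the stated $T$ suffice for $\junta = J^*$ with probability at least $1 - \delta/2$.

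For Phase 2, once $\junta = J^*$, every $x^i$ with $x^i_\junta = b_\junta$ automatically has label $f(b_\junta)$, so the ``plurality'' in the algorithm is in fact unanimous and it is enough to observe at least one matching $x^i$ for each $b_\junta$ in the support of the $J^*$-marginal. Assumption (i) again lower bounds each such marginal by $(c|\scA|)^{-k}$, so the same chunk-by-mixing-time argument applied to the singleton event $\{x^i_\junta = b_\junta\}$, union-bounded over the at most $|\scA|^k$ valid assignments, succeeds within the same budget $T$ with probability at least $1 - \delta/2$; values of $h$ on assignments outside the marginal support do not affect the stationary error. The main obstacle throughout is the standard one of Markov-chain concentration: consecutive samples are strongly correlated, so Chernoff cannot be applied directly. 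This is handled by the chunking above together with the mixing bound $\dtv{P^{\tau_M}(x_0,\cdot)-\pi} \leq 1/4$, iterated enough times to drive the TV distance below the per-step witness probability, and a coupling that makes distinct blocks near-independent. All other ingredients---the single-site implication in Phase 1, the witness probability lower bound combining (i) and (ii), and the unanimity in Phase 2---are immediate from the structural hypotheses.
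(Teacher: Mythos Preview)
Your proposal is correct and follows essentially the same route as the paper: bound the per-step witness probability for each relevant variable by $\beta(c|\scA|)^{-k}$ via conditions~(i) and~(ii), handle the Markov dependence by mixing-time arguments, and then observe that Phase~2 succeeds because every supported $b_\junta$ is already visited. The only cosmetic difference is that the paper uses a single burn-in of length $\tau_M\ln(2/\alpha)$ and then argues the chain stays close to $\pi$ thereafter, whereas you chunk into blocks of length $\tau_M$ and couple; these are interchangeable standard devices and yield the same running-time bound.
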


\begin{proof}
Let $f$ be the unknown target $k$-junta function. Let $S$ be the set of
variables that influence $f$, $|S| \leq k$. The set $S$ is called the
\emph{junta} for $f$.  Note that a variable $i$ is in the junta for $f$, if and
only if there exist $x, x^\prime \in \scA^n$ such that $\pi(x) \neq 0$,
$\pi(x^\prime) \neq 0$, $x$, $x^\prime$ differ only at co-ordinate $i$ and $f(x)
\neq f(x^\prime)$. Otherwise, $i$ can have no influence in determining the value
of $f$ (under the distribution $\pi$).

We claim that Algorithm~\ref{alg:learn-juntas} identifies every variable in the
junta $S$ of $f$. Let $b_S : S \rightarrow \scA$, be any assignment of values to
variables in $S$. Since $S$ is the \emph{junta} for $f$, any $x \in X$ that
satisfies $x_i = b_S(i)$ for all $i \in S$, has the same value $f(x)$. By slight
abuse of notation, we denote this common value by $f(b_S)$.

The fact that $i \in S$ implies that there exist assignments, $b^1_{S}$,
$b^2_{S}$, such that $b^1_S(i) \neq b^2_S(i)$, $\forall j \in S$, such that $j
\neq i$, $b^1_S(j) = b^2_S(j)$ and which satisfy the following: $\pi(x_S =
b^1_S) \neq 0$, $\pi(x_S, b^2_S) \neq 0$. Consider the following event: $x$ is
drawn from $\pi$, $x^\prime$ is the state after exactly one transition, $x$
satisfies the event $x_S = b^1_S$ and $x^\prime$ satisfies the event $x^\prime_S
= b^2_S$. By our assumptions, the probability of this event is at least
$\beta/(c|\scA|)^{|S|}$.  Let $\alpha = \beta/(c|\scA|)^{|S|}$. Then, if we draw
$x$ from the distribution $P^t(x_0, \cdot)$ for $t = \tau_M \ln(2/\alpha)$,
instead of the \emph{true} stationary distribution $\pi$, the probability of the
above event is still at least $\alpha/2$. This is because when $t = \tau_M
\ln(2/\alpha)$, the $\dtv{P^t(x_0, \cdot) - \pi} \leq \alpha/2$.  Thus, by
observing a long enough random walk, \ie one with $2 \tau_M
\ln(1/\alpha)\log(k/\delta)/\alpha$ transitions, except with probability
$\delta/k$, the variable $i$ will be identified as a member of the junta. Since
there are at most $k$ such variables, by a union bound all of $S$ will be
identified. Once the set $S$ has been identified, the unknown function can be
learned \emph{exactly} by observing an example of each possible assignments to
the variables in $S$. The above argument shows that all such assignments with
non-zero measure under $\pi$ already exist in the observed random walk.
\end{proof}

\begin{remark}
We observe that the condition that the MC be \emph{rapidly mixing} alone is
sufficient to identify at least one variable of the junta. However, unlike in
the case of learning from i.i.d. examples, in this learning model, identifying
one variable of the junta is not equivalent to learning the unknown junta
function. In fact, it is quite easy to construct rapidly mixing Markov chains
where the \emph{influence} of some variables on the target function can be
hidden, by making sure that the transitions that cause the function to change
value happen only on a subset of the variables of the junta.
\end{remark}

We now show that the Ising model and graph coloring satisfy the conditions of
Theorem~\ref{thm:junta} as long as the underlying graphs have constant degree.  \medskip

\noindent{\bf Ising Model}: Recall that the state space is $X = \moo^n$. Let
$\beta(\Delta)$ be the inverse critical temperature, which is a constant
independent of $n$ as long as $\Delta$, the maximal degree, is constant. Let $S
\subseteq [n]$ and let $b^1_S  : S \rightarrow \moo$ and $b^2_S : S \rightarrow
\moo$ be two distinct assignments to variables in $S$. Let $\sigma^1, \sigma^2$
be two configurations of the Ising system such that for all $i \in S$,
$\sigma^1_i = b^1_S(i)$, $\sigma^2_i = b^2_S(i)$ and for $i \not\in S$,
$\sigma^1_i = \sigma^2_i$. Let $d^1 = \sum_{(i, j) \in E: \sigma^1_i \neq
\sigma^1_j} \beta_{ij}$ and $d^2 = \sum_{(i, j) \in E: \sigma^2_i \neq
\sigma^2_j} \beta_{ij}$. Then, since the maximum degree of the graph $\Delta$ is
constant and each $\beta_{ij}$ is also bounded by some constant, $|d^1 - d^2|
\leq c |S| \Delta$. Then, by definition (see Section~\ref{sec:prelim}), $\exp(-c
\beta \Delta |S|) \leq \pi(\sigma^1)/\pi(\sigma^2) \leq \exp(c \beta \Delta
|S|)$. By summing over possible pairs $\sigma^1, \sigma^2$ that satisfy the
constraints, we have $\exp(-\beta \Delta |S|) \leq \pi(x_S = b^1_S)/\pi(x_S =
b^2_S) \leq \exp(\beta \Delta |S|)$. But, since there are only $2^{|S|}$
possible assignments of variables in $S$, the first assumption of
Theorem~\ref{thm:junta} follows immediately. The second assumption follows from
the definition of the transition rate matrix, \ie each non-zero entry in the
transition rate matrix is at least $\exp(-\beta \Delta)/2n$.  \medskip

\noindent{\bf Graph Coloring}: Let $q$ be the number of colors. The state
space is $[q]^n$ and \emph{invalid} colorings have $0$ mass under the
stationary distribution. We assume that $q \geq 3 \Delta$, where $\Delta$ is the
maximum degree in the graph. This is also the assumption that ensures rapid
mixing. Let $S \subseteq [n]$ be an subset of nodes. Let $C^1_S$ and $C^2_S$ be
two assignments of colors to the nodes in $S$. Let $D_1$ and $D_2$ be the set
of valid colorings such that for each $x \in D_1$, $i \in S$, $x_i = C^1_S(i)$
and for each $x \in D_2$, $i \in S$, $x_i = C^2_S(i)$. We define a map from
$D_1$ to $D_2$ as follows: 
\begin{enumerate}
\item Starting from $x \in D_1$, first for all $i \in S$, set $x_i = C^2_S(i)$.
This may in fact result in an \emph{invalid} coloring.
\item The invalid coloring is switched to a valid coloring by only modifying
neighbors of nodes in $S$. The condition that $q \geq 3 \Delta$ ensures that
this can always be done.
\end{enumerate}

The above map has the following properties. Let $N(S) = \{ j ~|~ (i, j) \in E, i
\in S\}$. Then, the nodes that are not in $S \cup N(S)$ do not change the
color. Thus, even though the map may be a many to one map, at most $q^{|S| +
|N(S)|}$ elements in $D_1$ may be mapped to a single element in $D_2$. Note that
$|S| + |N(S)| \leq (\Delta+ 1) |S|$. Thus, we have $\pi(D_1)/\pi(D_2) =
|D_1|/|D_2| \leq q^{(\Delta+1)|S|}$. This implies the first condition of
Theorem~\ref{thm:junta}. The second condition follows from the definition of the
transition matrix, each non-zero entry is at least $1/(2qn)$.

\bibliography{all,all-refs}
\bibliographystyle{plainnat}

\newpage 

\appendix

\section{Proofs from Section~\ref{sec:eigen}}

\label{app:eigen}
\subsection{Proof of Theorem~\ref{thm:spectrum}}
\label{app:eigen1}

\begin{proof}
We divide the spectrum of $P$ into blocks. Let $k$ and $i_1, \ldots, i_k$ be as
in Definition~\ref{defn:discrete-spectrum}; furthermore define $i_0 = 0$ for
notational convenience. For $j = 1, \ldots, k$, let $S_j = \{ i_{j- 1} + 1,
\ldots, i_j \}$. Throughout this proof we use the letter $\ell$ to index
eigenvectors of $P$---so $\nu_{\ell}$ is an eigenvector with eigenvalue
$\lambda_{\ell}$. We want to find $\beta^{\ell}_{t,m}$ in order to
(approximately) represent the eigenvector $\nu_{\ell}$ as
\begin{align}
	\nu_\ell &= \sum_{t, m} \beta^{\ell}_{t,m} P^t g_m + \eta_\ell
	\label{eqn:eigvec-rep}
	\intertext{Also, we use the notation,}
	\bar{\nu}_{\ell} &= \sum_{t, m } \beta^{\ell}_{t,m} P^t g_m,
\end{align}
We will show that such representations exist block by block. To begin
define 
\begin{align}
\epsilon_1 &= \left( \frac{\epsilon}{(2 \alpha N)^{\frac{1+c}{c}}
(Nk)^{\frac{1}{2c}}} \right)^{(1 + c)^{k-1}} \label{eqn:epsilon1-define}
\intertext{ and define $\epsilon_j$ according to the following recurrence,}
\epsilon_j &= 2 \alpha N (Nk)^{\frac{1}{2(1 + c)}} \epsilon_{j - 1}^{\frac{1}{1 + c}}
\label{eqn:epsilon-recurrence}
\intertext{It is an easy calculation to verify that the solution for
$\epsilon_j$ is given by}
\epsilon_j &= \left(2 \alpha N (Nk)^{\frac{1}{2(1 + c)}}\right)^{\frac{1 +
c}{c} \left(1 - \frac{1}{(1 + c)^{j-1}} \right)}
\epsilon_1^{\frac{1}{(1 + c)^{j-1}}} \label{eqn:epsilon-value}
\intertext{Also, define}
B_1 &= (N\alpha)^{c + 1} \epsilon_1^{-c} \label{eqn:B1-define} 
\intertext{and let $B_j$ be defined according the following recurrence:}
B_j &= 2  \alpha N (Nk)^{\frac{1}{2(1 + c)}} (\epsilon_{j-1})^{-\frac{c}{1 +
c}} B_{j-1} \label{eqn:B-recurrence}
\intertext{It is an easy calculation to verify that the solution for $B_j$ is
given by}
B_j &= \left(2 N \alpha (Nk)^{\frac{1}{2(1 + c)}} \right)^{j-1} \cdot \left(
\prod_{j^\prime = 1}^{j-1} \epsilon_{j^\prime} \right)^{-\frac{c}{1 + c}}
B_1 \label{eqn:B-value}
\end{align}

It can be verified that $\epsilon_j$ and $B_j$ are increasing as a function of
$j$ as long as all $\epsilon_j$ remain smaller than $1$ (which can be verified by
checking that $\epsilon_k < 1$).  We show by induction on $j$ that for any $\ell
\in S_j$, $\sum_{t, m} |\beta^{\ell}_{t,m}| \leq B_j$ and $\twonorm{\eta_{\ell}}
\leq \epsilon_j$ (recall that the norm here is with respect to the distribution
$\pi$). 

Consider some $j$ and suppose that $|S_j| = N_j$. Denote by $S_{<j} =
\displaystyle\bigcup_{j^\prime < j} S_{j^\prime}$, all the indices that precede
those in $S_j$ and $S_{>j} = \{\ell^\prime ~|~ \ell^\prime > i_j\}$. According
to Definition~\ref{defn:useful-basis}, there exist $g_1, \ldots, g_{N_j} \in
\G$, such that if $A$ is the $N_j \times N_j$ matrix given by $a_{m, \ell} =
\ip{g_m}{\nu_\ell}$ for $\ell \in S_j$ and $1 \leq m \leq N_j$, then
$\operatornorm{A^{-1}} \leq \alpha$.  Let $\bar{a}_{\ell, m}$ denote the
element in position $(l, m)$ in $A^{-1}$ and let $\G_j = \{g_1, \ldots, g_{N_j}\}$
be these specific $N_j$ functions in $\G$.  Also, observe that by
Definition~\ref{defn:discrete-spectrum}, $N_j \leq N$.

Let $g_m \in \G_j$ and for any $\ell^\prime$, let $a_{m, \ell^\prime} =
\ip{g_m}{\nu_{\ell^\prime}}$. Then, define
\begin{align}
	\tilde{g}_m &= g_m - \sum_{\ell^\prime \in S_{<j}} a_{m, \ell^\prime}
	\bar{\nu}_{\ell^\prime} \label{eqn:g-modify}
\intertext{Thus, $\tilde{g}_m$ is obtained from $g_m$ by (approximately)
removing contributions of eigenvectors corresponding to blocks that precede the
$j\th$ block. Thus, we may write $\tilde{g}_m$ as follows:}
\tilde{g}_m &= \sum_{\ell \in S_{j}} a_{m, \ell} \nu_{\ell} + \sum_{\ell^\prime
\in S_{<j}} a_{m, \ell^\prime} (\nu_{\ell^\prime} - \bar{\nu}_{\ell^\prime}) +
\sum_{\ell^\prime \in S_{>j}} a_{m, \ell^\prime} \nu_{\ell^\prime}
\nonumber \\
&= \sum_{\ell \in S_j} a_{m, \ell} \nu_{\ell} + \sum_{\ell^\prime \in S_{<j}}
a_{m, \ell^\prime} \eta_{\ell^\prime} + \sum_{\ell^\prime \in S_{>j}}
a_{m, \ell^\prime} \nu_{\ell^\prime} \nonumber
\intertext{To further simplify the above equation, define $v^<_m =
\sum_{l^\prime \in S_{<j}} a_{m, \ell^\prime} \eta_{\ell^\prime}$ and $v^>_m =
\sum_{l^\prime \in S_{>j}} a_{m, \ell^\prime} \nu_{\ell^\prime}$. Then, we have}
\tilde{g}_m &= \sum_{\ell \in S_j} a_{\ell, m} \nu_{\ell} + v^<_m + v^>_m \label{eqn:rep}
\end{align}
In the case of $v^<_m$, a crude bound can be established on its norm
$\twonorm{v^<_m}$ as follows: for any $\ell^\prime \in S_{<j}$,
$\twonorm{\eta_{\ell^\prime}} \leq \epsilon_{j-1}$ (induction hypothesis).
Using the facts that $\sum_{\ell^\prime} (a_{m, \ell^\prime})^2 \leq 1$, and 
that $|S_{<j}| \leq N(j-1) \leq Nk$, by applying the Cauchy-Schwarz inequality we
get $\twonorm{v^<_m} \leq \epsilon_{j-1} \sqrt{Nk}$.

For $v^>_m$, we note that $\twonorm{ P v^>_m} \leq \lambda_{i_j +1}
\twonorm{v^>_m}$, since it only contains components corresponding to
eigenvectors with eigenvalues at most $\lambda_{i_j + 1}$. Also, note that
$\twonorm{v^>_m}^2 \leq \sum_{\ell^\prime \in S_{>j}} (a_{m, \ell^\prime})^2
\leq 1$.

We now complete the proof by induction. For, $j^\prime = 1, \ldots, j-1$,
suppose that all the eigenvectors corresponding to indices in $S_{j^\prime}$
have representations of the form in Equation~(\ref{eqn:eigvec-rep}) with
parameters $B_{j^\prime}$ and $\epsilon_{j^\prime}$ respectively.  Recall that
$\bar{a}_{\ell, m}$ is the element in position $(\ell, m)$ of $A^{-1}$, where
$A$ is the matrix defined as $a_{m, \ell} = \ip{g_m}{\nu_{\ell}}$ for $g_m \in
\G_j$ and $\ell \in S_j$. Now for any $\ell \in S_j$, we can define
$\bar{\nu}_{\ell}$ as follows (for the value $\tau_j$ to be specified later):
\begin{align}
\bar{\nu}_{\ell} &= \lambda_{\ell}^{-\tau_j} \sum_{m = 1}^{N_j} \bar{a}_{\ell, m}
	P^{\tau_j} \tilde{g}_m \label{eqn:nubar-define}
\intertext{Using Equation~(\ref{eqn:rep}) in the above equation, we get}
\bar{\nu}_\ell &= \lambda_{\ell}^{-\tau_j} \sum_{m = 1}^{N_j} \bar{a}_{\ell, m}
\sum_{\ell^\prime \in S_j} a_{m, \ell^\prime} P^{\tau_j} \nu_{\ell^\prime} +
\lambda_{\ell}^{-\tau_j} \sum_{m=1}^{N_j} P^{\tau_j} v^<_m  + \lambda_{\ell}^{-\tau}
\sum_{m=1}^{N_j} \bar{a}_{\ell, m} P^{\tau_j} v^>_m \nonumber \\
&= \lambda_{\ell}^{-\tau_j}  \sum_{\ell^\prime \in S_j}
\lambda_{\ell^\prime}^{\tau_j} \nu_{\ell^\prime} \sum_{m=1}^{N_j} \bar{a}_{\ell, m}
a_{m, \ell^\prime} + \lambda_{\ell}^{-\tau_j} \sum_{m=1}^{N_j} \bar{a}_{\ell, m}
P^{\tau_j} v^<_m + \lambda_{\ell}^{-\tau_j} \sum_{m=1}^{N_j} \bar{a}_{\ell, m}
P^{\tau_j} v^>_m \nonumber
\intertext{In the first term, we use the fact that $\sum_{m} \bar{a}_{\ell, m}
a_{m, \ell^\prime} = \delta_{\ell, \ell^\prime}$ by definition. Thus, the
first term reduces to $\nu_{\ell}$. We apply the triangle inequality to
get} 
\twonorm{\eta_{\ell}} = \twonorm{\nu_{\ell} - \bar{\nu}_{\ell}} &\leq
\lambda_{\ell}^{-\tau_j} \left\Vert \sum_{m = 1}^{N_j} \bar{a}_{\ell, m}
P^{\tau_j} v^<_m \right\Vert_2 + \lambda_{\ell}^{-\tau_j} \left\Vert
\sum_{m = 1}^{N_j} \bar{a}_{\ell, m} P^{\tau_j} v^>_m \right\Vert_2 \nonumber \\
&\leq \lambda_{\ell}^{-\tau_j} \sqrt{\sum_{m=1}^{N_j} (\bar{a}_{\ell, m})^2}
\cdot \sqrt{\sum_{m=1}^{N_j} \twonorm{P^{\tau_j} v^<_m}^2} + \lambda^{-\tau_j}
\sqrt{\sum_{m=1}^{N_j} (\bar{a}_{\ell, m})^2}  \cdot \sqrt{\sum_{m=1}^{N_j}
\twonorm{P^{\tau_j} v^>_m}^2} \label{eqn:nurep1}
\intertext{We use the fact that $\displaystyle \sqrt{\sum_{i=1}^m
(\bar{a}_{\ell, m})^2} \leq \frobenius{A^{-1}}$ and that $N_j \leq N$,
$\twonorm{v^<_m}^2 \leq Nk(\epsilon_{j-1})^2$ to simplify the above expression.
Furthermore, since $P$ has largest eigenvalue $1$, $\twonorm{P^{\tau_j}v} \leq
\twonorm{v}$ for any $v$. In the case of $v^>_m$, since the $\twonorm{v^>_m}
\leq 1$ and the largest eigenvalue in it is $\lambda_{i_j + 1}$,
$\twonorm{P^{\tau_j} v^<_m} \leq \lambda_{i_j + 1}^{\tau_j}$. Putting all these
together and simplifying the above expression we get}
\twonorm{\eta_{\ell}} &\leq \frobenius{A^{-1}} \sqrt{N}
\left(\lambda_{\ell}^{-\tau_j} \epsilon_{j-1} \sqrt{N k} +
\left(\frac{\lambda_{i_j + 1}}{\lambda_{\ell}}\right)^{\tau_j} \right)
\nonumber
\intertext{Finally, using the fact that $\lambda_{\ell} \geq \lambda_{i_{j}}$
(since $\ell \in S_j$), we have that $\lambda_{i_j  + 1}/\lambda_{\ell} \leq
\gamma$ and that $1/\lambda_{\ell} \leq \gamma^{-c}$. We also use the fact that
$\frobenius{A^{-1}} \leq \sqrt{N} \operatornorm{A^{-1}} \leq \sqrt{N}\alpha$.
Thus, we get}
\twonorm{\eta_{\ell}} &\leq \alpha N \left( \gamma^{-c \tau_j} \epsilon_{j-1}
\sqrt{Nk} + \gamma^{\tau_j} \right) \label{eqn:main-epsilon-step}
\end{align}

At this point we will deal with the base case $j = 1$ separately. In
Equation~(\ref{eqn:rep}) when $g_m \in \G_1$, $v^<_m = 0$, since the set $S_{<1}$
is empty. Thus, in Equation~(\ref{eqn:nurep1}), the first term is absent if we
are dealing with the case when $\ell \in S_1$, since all the $v^<_m$ in this
case are $0$. Thus, for $\ell \in S_1$, Equation~(\ref{eqn:main-epsilon-step})
reduces to:
\begin{align}
\twonorm{\eta_{\ell}} &\leq \alpha N \gamma^{\tau_1}
\end{align}
Thus, by choosing $\tau_1 = -\frac{\ln(N\alpha/\epsilon_1)}{\ln(\gamma)}$, we get
that for all $\ell \in S_1$, $\twonorm{\eta_{\ell}} \leq \epsilon_1$. Now, for
$j > 1$, we can find $\tau_j$ that minimizes the RHS of
Equation~(\ref{eqn:main-epsilon-step}) and this is given by $\tau_j = \frac{1}{1
+ c} \frac{\ln(\epsilon_{j-1} \sqrt{Nk})}{\ln(\gamma)}$. It is not hard to
calculate that in this case the RHS of Equation~\ref{eqn:main-epsilon-step}
exactly evaluates to $\epsilon_j$.

We now prove a bound on $B_j$. Again, we look at the base case separately, when
$j = 1$, $S_{<j} = \emptyset$ and so for the functions $g_m \in \G_1$ as in
Equation~(\ref{eqn:g-modify}), $\tilde{g}_m = g_m$. Thus, for $\ell \in S_1$, by looking at Equation~(\ref{eqn:nubar-define}), we can define: $\beta^{\ell}_{\tau_1, m} = \lambda_{\ell}^{-\tau_1} \bar{a}_{\ell, m}$ for $m \in \G_1$ and the remaining $\beta^{\ell}_{t, m}$ values are set to $0$. Thus,
\begin{align}
\sum_{t, m} |\beta^{\ell}_{t, m}| &\leq \lambda_{\ell}^{-\tau_1} \sum_{m =
1}^{N_1} |\bar{a}_{\ell, m}| \leq \gamma^{-c \tau_1} N \alpha
\end{align}
Above we used the fact that $\lambda_{\ell} \geq \lambda_{i_k} \geq \gamma^{c}$ and that $|\bar{a}_{\ell, m}| \leq \operatornorm{A^{-1}}$. But, the RHS above is exactly the quantity $B_1$ we defined earlier. 

Next, we consider the case of $j > 1$ and we start from Equation~(\ref{eqn:nubar-define}).
\begin{align}
	\bar{\nu}_{\ell} &= \lambda_{\ell}^{-\tau_j} \sum_{m = 1}^{N_j} \bar{a}_{\ell, m}
	P^{\tau_j} \tilde{g}_m \nonumber \\
	&= \lambda_{\ell}^{-\tau_j} \sum_{m = 1}^{N_j} \bar{a}_{\ell, m} P^{\tau_j} \left( 
	g_m - \sum_{\ell^\prime \in S_{<j}} a_{m, \ell^\prime}
	\bar{\nu}_{\ell^\prime} \right) \nonumber \\
	&= \lambda_{\ell}^{-\tau_j} \sum_{m = 1}^{N_j} \bar{a}_{\ell, m} P^{\tau_j} \left( g_m - \sum_{\ell^\prime \in S_{< j}} a_{m, \ell^\prime} \sum_{t, m^\prime} \beta^{\ell^\prime}_{t, m^\prime} P^t g_{m^\prime} \right) \nonumber \\
	&= \lambda_{\ell}^{-\tau_j} \sum_{m = 1}^{N_j} \bar{a}_{\ell, m} \left( P^{\tau_j} g_m - \sum_{\ell^\prime \in S_{< j}} a_{m, \ell^\prime} \sum_{t, m^\prime} \beta^{\ell^\prime}_{t, m^\prime} P^{t + \tau_j} g_{m^\prime} \right) \label{eqn:expanded-expression}
	\intertext{If the above, expression is re-written to be of the form,}
	\bar{\nu}_{\ell} = \sum_{t, m} \beta^{\ell}_{t, m} P^t g_m, \nonumber
	\intertext{we can get a bound on $\sum_{t, m} |\beta^{\ell}_{t, m}|$ as follows:}
	\sum_{t, m} |\beta^{\ell}_{t, m}| &\leq \gamma^{-c \tau_j} \left(\sum_{m=1}^{N_j} |\bar{a}_{\ell, m}| \right) \cdot \left(1 + B_{j-1} \sum_{\ell^\prime \in S_{<j}} |a_{m, \ell^\prime}|\right) \nonumber 
	\intertext{Above, we use the fact that for $\ell^\prime \in S_{<j}$, $\sum_{t, m} |\beta^{\ell^\prime}_{t, m}| \leq B_{j-1}$. Also, note that $\sum_{m=1}^{N_j} |\bar{a}_{\ell, m}| \leq N \alpha$ and $\sum_{\ell^\prime \in S_{<j}} |a_{m, \ell^\prime}| \leq \sqrt{Nk}$ (since $\sum_{\ell^\prime} (a_{m, \ell^\prime})^2 \leq 1$ for all $m$), so we have}
	\sum_{t, m} |\beta^{\ell}_{t, m}| &\leq (\epsilon_{j-1} \sqrt{Nk})^{-\frac{c}{1 + c}} N \alpha (1 + \sqrt{Nk} B_{j-1}) \nonumber  \\
	&\leq 2 \sqrt{Nk} N\alpha B_{j-1} (\epsilon_{j-1} \sqrt{Nk})^{-\frac{c}{1 +
	c}}  \nonumber
\end{align}

We observe that the expression on the RHS above is exactly the value $B_j$ given by the recurrence relation in Equation~(\ref{eqn:B-recurrence}). 

Finally, by observing the RHS of Equation~(\ref{eqn:nubar-define}) we notice
that the maximum power $t$, for which $\beta^{\ell}_{t, m}$ is non-zero for any
$\ell, m$ is $\sum_{i=1}^k \tau_i$. Thus, the proof is complete by setting
$\tau_{\max} = \sum_{j = 1}^k \tau_j$.

\end{proof}

\subsection{Proof of Theorem~\ref{thm:agnostic-learning}}
\label{app:eigen2}

\begin{proof}
Let $f \in F$ be the target function and for any $\ell$, let $\hat{f}_{\ell} =
\ip{f}{\nu_{\ell}}$ denote the \emph{Fourier} coefficients of $f$. Then the
condition in Theorem~\ref{thm:agnostic-learning} states that $\sum_{\ell >
\ell^*(\epsilon)} \hat{f}_{\ell}^2 \leq \epsilon^2/4$. 

First, we appeal to Theorem~\ref{thm:spectrum}. In the rest of this proof, we
assume that for all $\ell \leq \ell^*$, there exist $\beta^{\ell}_{t, m}$ such that
\begin{align}
	\nu_{\ell} = \sum_{t, m} \beta^{\ell}_{t, m} P^t g_m + \eta_{\ell}, \nonumber
\end{align}
where $g_m \in \G$, $\twonorm{\eta_{\ell}} \leq \epsilon_1$. Furthermore, let
$B$ and $\taumax$ be as given by the statement of the theorem.

We first look closely at $P^t g_m$, since $P$ is an $|X| \times |X|$ matrix and
$g_m : X \rightarrow \reals$ a function, $P^t g_m$ is also a function from $X
\rightarrow \reals$. For $x \in X$, let $\one_x$ denote the indicator function
of the point $x$ (it may be viewed as a vector that is $0$ everywhere, except
in position $x$ where it has value $1$). Then, we have 
\begin{align}
	(P^t g_m)(x) &= \one_x^{\transpose} P^t g_m = \E_{y \sim P^t(x, \cdot)} [g_m(y)] \label{eqn:sampling}
\end{align}
Notice that the quantity on the RHS above can be estimated by sampling. Thus,
with black-box access to the oracle $\OS(\cdot)$ and $g_m$, we can estimate
$(P^t g_m)(x)$. This is exactly what is done in~(\ref{eqn:phi-define}) in the
algorithm in Figure~\ref{fig:learning-algorithm}. Also, since $\infinitynorm{g}
\leq 1$, it is also the case that $\infinitynorm{P^t g_m} \leq 1$. Thus, by a
standard Chernoff-Hoeffding bound, if we set the input parameter $T = \log(
\taumax \cdot |X| \cdot |\G|/ \delta)/\epsilon_2^2$, with probability at least $1
- \delta$, it holds for every $x \in X$, for every $t < \taumax$ and every $g_m
\in \G$, that $|\phi_{t, m}(x) -  (P^t g_m)(x)| \leq \epsilon_2$. For the rest
of this proof, we will treat the functions $\phi_{t,m}(x)$ as deterministic
(rather than randomized) for simplicity. (This can be easily arranged by taking
a sufficiently long random string used to simulate the Markov chain and treating
it as advice.)

Now, consider the following:
\begin{align}
&\E\left[\left(f(x) - \sum_{\ell \leq \ell^*} \hat{f}_{\ell} \sum_{t, m}
\beta^{\ell}_{t, m} \phi_{t, m}(x)\right)^2\right]  \nonumber \\
&~~~~~~\leq 2 \E\left[\left(f(x) - \sum_{\ell \leq \ell^*} \hat{f}_{\ell}
\nu_{\ell}(x)\right)^2\right] + 2 \E\left[\left(\sum_{\ell} \hat{f}_{\ell}
\left(\nu_{\ell}(x) - \sum_{t, m} \beta^{\ell}_{t, m} \phi_{t, m}(x)\right)
\right)^2\right] \label{eqn:learnproof1}
\intertext{Note that the first term above is at most $\epsilon$. We will now
bound the second term. (Below $\bar{\nu}_{\ell}$ is as defined in
Equation~(\ref{eqn:nubar-define}).)}
&\E\left[\left(\sum_{\ell} \hat{f}_{\ell} \left(\nu_{\ell}(x) - \sum_{t, m}
\beta^{\ell}_{t, m} \phi_{t, m}(x)\right)\right)^2\right] \nonumber \\ 
&~~~~~~\leq 2 \E\left[\left(\sum_{\ell \leq \ell^*} \hat{f}_{\ell}( \nu(x) -
\bar{\nu}(x))\right)^2\right] + 2\E\left[\left(\sum_{\ell} \hat{f}_{\ell}
\left(\sum_{t, m} \beta^{\ell}_{t, m} ((P^t g_m)(x) - \phi_{t,m}(x))\right)
\right)^2 \right] \nonumber  \\
&~~~~~~\leq 2 \sqrt{\sum_{\ell \leq \ell^*} (\hat{f}_{\ell})^2} \cdot \sqrt{\sum_{\ell
\leq \ell^*} \twonorm{\eta_{\ell}}^2} + 2 \sqrt{\sum_{\ell \leq \ell^*}
(\hat{f}_{\ell})^2} \cdot \sqrt{\sum_{\ell < \ell^*} \left\Vert \sum_{t, m}
\beta^{\ell}_{t, m} (P^tg_m - \phi_{t, m}) \right\Vert^2_2 } \nonumber
\intertext{Next we use the following facts, $\sum_{\ell \leq \ell^*}
(\hat{f}_\ell)^2 \leq 1$, $\ell^* \leq Nk$ and $\twonorm{\eta_{\ell}} \leq
\epsilon_1$. Also for the very last term, the fact that $\sum_{t, m}
|\beta^{\ell}_{t, m}| \leq B$ and $\forall x, |(P^tg_m)(x) - \phi_{t, m}(x)| \leq
\epsilon_2$, imply that $\Vert \sum_{t, m} \beta^{\ell}_{t, m} (P^tg_m -
\phi_{t, m}) \Vert_2 \leq B \epsilon_2$. Putting everything together we get}
&\E\left[\left(\sum_{\ell} \hat{f}_{\ell} \left(\nu_{\ell}(x) - \sum_{t, m}
\beta^{\ell}_{t, m} \phi_{t, m}(x)\right)\right)^2\right] \leq 2 (\sqrt{Nk
\epsilon_1} + \sqrt{BNk \epsilon_2}) \label{eqn:learnproof2}
\intertext{Finally, substituting~(\ref{eqn:learnproof2}) back into~(\ref{eqn:learnproof1}), we get}
&\E\left[\left(f(x) - \sum_{\ell \leq \ell^*} \hat{f}_{\ell} \sum_{t, m}
\beta^{\ell}_{t, m} \phi_{t, m}(x)\right)^2\right] \leq 2(\frac{\epsilon^2}{4} + \sqrt{Nk \epsilon_1} + \sqrt{BNk \epsilon_2}) \\
\intertext{By choosing $\epsilon_1 = \epsilon^2/(64 Nk)$ and $\epsilon_2 =
\epsilon^2/(64 BNk)$ we get that the quantity is in fact at most
$\epsilon^2$. Thus, we get that}
&\E\left[\left|f(x) - \sum_{\ell \leq \ell^*} \hat{f}_{\ell} \sum_{t, m}
\beta^{\ell}_{t, m} \phi_{t, m}(x)\right|\right] = \epsilon \label{eqn:learnproof3}
\end{align}

Thus, we have essentially shown that $\{ \phi_{t, m} \}$ can be used as a
suitable feature space and there is a linear form in this feature space that is
a good $L_1$ approximation to $f$. This is sufficient for agnostic learning as
was shown by~\cite{KKMS:2005}. Note that the sum of coefficients on the features
is bounded by $B \sqrt{Nk}$ (since $B$ is a bound on $\sum_{t, m}
|\beta^{\ell}_{t,m }|$ and $\sum_{\ell < \ell^*} |\hat{f}_{\ell}| \leq \sqrt{Nk}$).
Thus, in the algorithm in Figure~\ref{fig:learning-algorithm}, we may set the
parameters $\taumax$ (as given by Theorem~\ref{thm:spectrum}), $W = B \sqrt{Nk}$
and $T = \log(\taumax \cdot |X| \cdot |\G|/\delta)/\epsilon_2^2$. The sample complexity is
polynomial in $W$, $1/\epsilon$ as follows from standard generalization bounds
(see for example~\cite{KST:2008}) and the running time of the algorithm is
polynomial in $|\G|, T, \tau_{\max}, W, \frac{1}{\epsilon}$. The bounds given in
the statement of the theorem follow from observing the values of the above
quantity in the statement of Theorem~\ref{thm:spectrum}.
\end{proof}

\subsection{Proof of Theorem~\ref{thm:noise-sensitivity}}

\label{app:eigen3}

\begin{proof} The proof  follows the standard proofs of these types of results.
Let $t = -\frac{1}{\ln(\rho)}$
\begin{align}
\NS_t(f) &= \frac{1}{2} - \frac{1}{2} \ip{f}{P^t f} \nonumber  \\
&= \frac{1}{2} - \frac{1}{2} \left(\sum_{\ell} \lambda_{\ell}^t
\hat{f}^2_{\ell}\right) \nonumber \\
&\geq \frac{1}{2} - \frac{1}{2} \left(\sum_{\ell \leq \ell^*} \hat{f}^2_{\ell} +
\sum_{\ell > \ell^*} \rho^t \hat{f}^2_{\ell}\right) \nonumber
\intertext{Using the fact that $\sum_{\ell \leq \ell^*} \hat{f}^2_\ell = 1 -
\sum_{\ell > \ell^*} \hat{f}^2_\ell$ (since $f$ is boolean) and rearranging terms,
we get}
\sum_{\ell > \ell^*} \hat{f}^2_{\ell} &\leq \frac{1}{1 - \rho^t}  \NS_t(f)
\end{align}
Then substituting the value for $t$ completes the proofs. 
\end{proof}

\subsection{Proof of Proposition~\ref{prop:one}}

\label{app:eigen4}

\eat{

\begin{lemma} \label{lem:noise-sensitivity}
For any $d$, there exists $\beta(d)$ such that for all graph $G$ of max degree
bounded by $d$ and for all Ising models where with $\beta < \beta(d)$ the
following holds.  If $f = \sign(\sum w_i x_i)$ where $w_i \in [1,W]$ or $w_i =
0$ and the number of non-zero terms if $m(n) \to \infty$ as $n \to \infty$ then
for all $t \geq n$ it holds that 
\[ 1 - 2 \NS_t(f) \geq \delta^{t/n} \]
for some fixed $\delta$ depending only on $d$ and $\beta(d)$. 
\end{lemma}

\begin{proof}[Proof Sketch]
From the Fourier expression for noise sensitivity and Jensen's inequality it is
clear that if $a > 1$ then 
\begin{align*}
1 - 2 \NS_{at}(f) &\geq (1 - 2 NS_{t}(f))^a 
\end{align*}
Therefore it suffices to prove the claim when $t=c n$ for some small constant
$c$ (which may depend on $d$). Our goal is therefore to show that  
\begin{align*}
1 - 2 \NS_{cn}(f)  &\geq \delta > 0
\end{align*}
for some constant $\delta$. 

To prove this let $X_1,\ldots,X_n$ be the system at time $0$ and
$Y_1,\ldots,Y_n$ be the system at time $t=c n$.  Let $A \subset [n]$ be the
random subset of spins that haven't been updated from time $0$ to time $t$. Then
the noise sensitivity is:
\begin{align}
\NS_t(f) &= \Pr[\sign(\sum_{i \in A} w_i X_i + \sum_{i \notin A} w_i X_i) \neq  
\sign(\sum_{i \in A} w_i X_i + \sum_{i \notin A} w_i Y_i)] \nonumber \\
&\leq 2 \Pr[\sign(\sum_{i \in A} w_i X_i) \neq \sign(\sum_{i=1}^n w_i X_i)],
\nonumber
\end{align}
where the last inequality uses the fact that $X_i, i \notin A$ and $Y_i, i \notin A$ are identically distributed conditioned on $A$ and $X_i, i \in A$. 
It is easy to see that if $c$ is small enough then with probability at least $0.99$ (over the random choice of $A$) we have: 
\begin{equation} \label{eq:largewgt}
\sum_{i \in A} w_i^2 \geq 10^6 \sum_{i \notin A} w_i^2.\nonumber
\end{equation}
Moreover if $\beta$ is sufficiently small then the covariance of spins decays exponentially with a large constant $C$ in the distance and therefore: 
\[
\E[(\sum_{i \in A} w_i x_i)^2] \in [1,2] \sum_{i \in A} w_i^2
\]
and similarly for $A^c$. Similar arguments hold for the $4$th moment which imply in turn 
that conditioned on the event in~(\ref{eq:largewgt}) with probability at least $0.99$ we have that 
\[
|\sum_{i \in A} w_i x_i| \geq 2 |\sum_{i \notin A} w_i x_i| 
\]
which completes the proof.

\end{proof} 

\subsection{Proof of Proposition~\ref{prop:one}}
}

\begin{lemma} \label{lemma:new_one} For any positive integer $\Delta$, there exists
$\beta(\Delta)$, such that for all graphs $G$ of maximum degree bounded by
$\Delta$, and for all ferromagnetic Ising models with $\beta < \beta(\Delta)$,
the following holds. If $f = \sign(\sum_{i} w_i x_i)$, then for all $t \geq n$
it holds that, 
\[ 1 - 2 \NS_t(f) \geq \delta^{t/n} \]
for some fixed $\delta > 0$ depending only on $\Delta$ and $\beta(\Delta)$.
\end{lemma}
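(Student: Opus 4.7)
The plan follows the sketch preceding the lemma: reduce to a constant-fraction time scale via Jensen, then prove a constant lower bound on $1 - 2\NS_{cn}(f)$ via a second/fourth moment argument in the high-temperature regime. For the Jensen step, use~(\ref{eqn:noise-sensitivity}) to write $1 - 2\NS_t(f) = \sum_\ell \lambda_\ell^t \hat f_\ell^{\,2}$; since $f$ is $\{-1,+1\}$-valued, $\sum_\ell \hat f_\ell^{\,2} = 1$, so the right-hand side is an expectation of $\lambda_\ell^t$ against the probability weights $\hat f_\ell^{\,2}$. Applying Jensen to the convex map $u \mapsto u^{a}$, $a \geq 1$, yields
\[
1 - 2\NS_{at}(f) \;\geq\; \bigl(1 - 2\NS_t(f)\bigr)^{a}.
\]
It therefore suffices to exhibit constants $c = c(\Delta) > 0$ and $\delta_0 = \delta_0(\Delta,\beta) > 0$ with $1 - 2\NS_{cn}(f) \geq \delta_0$; the general bound then follows with $\delta = \delta_0^{1/c}$.

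For the main estimate, I would couple $X = X_0$ and $Y = X_{cn}$ under the Glauber dynamics from stationarity, and let $A \subseteq [n]$ be the (random) set of sites never chosen for update during $[0,cn]$. The set $A$ is independent of the spins, and each site lies in $A$ with marginal probability $\rho := (1 - 1/n)^{cn} \to e^{-c}$, arbitrarily close to $1$ for small $c$. Since $X_i = Y_i$ on $A$,
\[
f(X) = \sign(S_A + T_X), \qquad f(Y) = \sign(S_A + T_Y),
\]
with $S_A = \sum_{i \in A} w_i X_i$, $T_X = \sum_{i \notin A} w_i X_i$, $T_Y = \sum_{i \notin A} w_i Y_i$. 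A union bound and the symmetry $\mathrm{law}(X_0,A) = \mathrm{law}(X_{cn},A)$ then give $\NS_{cn}(f) \leq 2\,\Pr[|T_X| > |S_A|]$, so it suffices to show $|S_A| > |T_X|$ with constant probability.

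For $\beta < \beta(\Delta)$, bounded-degree Ising sits in the Dobrushin / strong spatial mixing regime, so truncated spin correlations of all orders decay exponentially in graph distance at a rate depending only on $\Delta$ and $\beta$. This yields, uniformly in a (possibly random) set $B \subseteq [n]$ and in the weights,
\[
c_1 \sum_{i \in B} w_i^2 \;\leq\; \E\Bigl[\bigl(\textstyle\sum_{i \in B} w_i X_i\bigr)^{2}\,\Big|\,B\Bigr] \;\leq\; c_2 \sum_{i \in B} w_i^2,
\]
together with analogous fourth-moment bounds. Choosing $c$ small so that $\rho/(1-\rho)$ is large, the inclusion indicators force $\sum_{i \in A} w_i^2 \geq 10^4 \sum_{i \notin A} w_i^2$ with constant probability over $A$ (by a first-moment argument on the independent inclusions and Markov). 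Conditioning on this event, Paley--Zygmund applied to $S_A$ (using its second and fourth moments) combined with Chebyshev applied to $T_X$ yield $|S_A| > |T_X|$ with constant probability $\delta_0 > 0$, as required.

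The hard part is the uniform Paley--Zygmund lower tail for $S_A$: the constant must be uniform in the graph, in the weights $(w_i) \subset \reals$, and (effectively) in the conditioning on $A$. High-temperature correlation decay supplies the requisite second- and fourth-order cumulant bounds, but verifying that these estimates transfer cleanly to the conditional Ising measure on $G[A]$ -- where the complementary spins act as an arbitrary boundary and $A$ itself is random -- is the step that needs to be spelled out carefully, and is the one place where standard Dobrushin-regime machinery has to be invoked with some care.
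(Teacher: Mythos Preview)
Your overall architecture matches the paper almost step for step: the Jensen reduction via~(\ref{eqn:noise-sensitivity}), the coupling through the non-updated set $A$, the union/symmetry bound $\NS_{cn}(f)\le 2\Pr[\sign(S_A+T_X)\neq\sign(S_A)]$, and second/fourth moment control of $S_A$ and $T_X$ from high-temperature correlation decay. Two points, however, deserve correction.

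\textbf{The real gap: the factor of $2$ kills a bare Paley--Zygmund argument.} You write that it suffices to show $|S_A|>|T_X|$ ``with constant probability $\delta_0>0$''. But your own inequality is $\NS_{cn}(f)\le 2\Pr[|T_X|\ge|S_A|]\le 2(1-\delta_0)$, so you need $\delta_0>3/4$ to get $1-2\NS_{cn}(f)>0$. Paley--Zygmund, fed only a fourth-to-second moment ratio that is $\Theta(1)$, produces a \emph{small} constant (the paper gets $1/40$), not one close to $1$; Chebyshev on $T_X$ cannot rescue this, because the obstacle is the anti-concentration of $S_A$, not the concentration of $T_X$. The paper closes exactly this gap with an additional FKG step that uses ferromagnetism: conditionally on $A$ and on $X_A$, the pair $(X_{\bar A},Y_{\bar A})$ is positively associated under Glauber dynamics, and both marginals equal the conditional Gibbs law, so $\Pr[f(X)\neq f(Y)\mid A,X_A]\le 2p_A(1-p_A)\le 1/2$ always. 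This guarantees $\NS_{cn}(f)\le 1/2$ on the ``bad'' part of the space, and then the Paley--Zygmund event (which has some fixed positive probability) strictly improves the average below $1/2$. Without this FKG ingredient your argument does not conclude; it is also where the ferromagnetic hypothesis in the lemma is actually used.

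\textbf{A misplaced worry.} Your final paragraph flags as the hard step the transfer of moment bounds to the ``conditional Ising measure on $G[A]$ with boundary $X_{\bar A}$''. This is not needed. The set $A$ depends only on which sites are selected for update and is independent of the spins; conditioning on $A$ leaves $X$ distributed according to the full stationary measure. The second/fourth moment bounds (the paper's Lemma~\ref{lemma:new_two}) are applied to $\sum_{i\in A} w_i X_i$ under the \emph{unconditioned} Gibbs law, simply with weights zeroed outside $A$. No boundary-conditioned estimate enters. The genuinely delicate step is the FKG one above.
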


Note that the above lemma only proves that majorities are somewhat noise stable.
While one expects that if $t$ is a very small fraction on $n$, majorities are
very noise stable, our proof is not strong enough to prove that. 

For the proof we will need to use the following well known result which goes
back to \citet{DobrushinShlosman:85}. The proof also follows easily from the random cluster representation of the Ising model.

\begin{lemma} \label{lemma:DS85} For every $\Delta$ and $\eta > 0$, there
exists a $\beta(\Delta, \eta) > 0$ such that for all graphs $G$ of maximum degree
bounded by $\Delta$ and for all Ising models where $\beta \leq \beta(\Delta,
\eta)$, it holds that under the stationary measure for any $i$ and any subset $S$ of nodes, 
\[
\E[x_i ~|~ x_S] \leq \eta^{d(i, S)} \]
In particular, for every $i$ and $j$,
	\[ \E[x_i x_j] \leq \eta^{d(i, j)}, \]
where $d(i, j)$ denotes the graph distance between $i$ and $j$.
\end{lemma}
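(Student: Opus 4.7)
The plan is to prove Lemma~\ref{lemma:DS85} via the Edwards–Sokal random cluster representation of the ferromagnetic Ising model. This couples the Ising measure $\mu$ with the FK measure at edge parameter $p = 1 - e^{-2\beta}$ and cluster weight $q = 2$: conditionally on a spin configuration $\sigma$, each non-frustrated edge (with $\sigma_u = \sigma_v$) is opened independently with probability $p$ while every frustrated edge is closed; conditionally on an edge configuration $\omega$, the spins are constant on every cluster, with an independent uniform sign assigned to each cluster.

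\textbf{Step 1 (reduction to connectivity).} Fix $S$, a boundary pattern $s_S$, and $i \notin S$. Using the coupling,
\[
\E_\mu[x_i \mid x_S = s_S] \;=\; \E\bigl[\,\E[x_i \mid \omega, x_S = s_S] \,\bigm|\, x_S = s_S\bigr].
\]
Given $(\omega, x_S = s_S)$, the spin $x_i$ equals $s_j$ whenever $i$'s cluster meets $S$ only at vertices carrying a common prescribed sign $s_j$, and is an independent uniform $\pm 1$ otherwise. Therefore
\[
\bigl|\E_\mu[x_i \mid x_S]\bigr| \;\leq\; \Pr\bigl[i \leftrightarrow S \text{ in } \omega \,\bigm|\, x_S = s_S\bigr].
\]

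\textbf{Step 2 (stochastic domination).} Conditional on any full spin configuration $\sigma$, the edge variables are mutually independent, each Bernoulli$(p)$ on non-frustrated edges and deterministically $0$ on frustrated edges, and are therefore pointwise dominated by the product Bernoulli$(p)$ measure on $E$. Since the event $\{i \leftrightarrow S\}$ is increasing in $\omega$, averaging this domination over $\sigma_{-S}$ under the conditional law $\sigma_{-S} \mid x_S = s_S$ gives
\[
\Pr[i \leftrightarrow S \mid x_S] \;\leq\; \Pr_{\mathrm{Bern}(p)}[i \leftrightarrow S].
\]

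\textbf{Step 3 (subcritical percolation bound).} Any open connection from $i$ to $S$ contains a self-avoiding path of length at least $d(i,S)$, and since $G$ has maximum degree $\Delta$ there are at most $\Delta^\ell$ self-avoiding paths of length $\ell$ from $i$, each open with probability $p^\ell$. Hence
\[
\Pr_{\mathrm{Bern}(p)}[i \leftrightarrow S] \;\leq\; \sum_{\ell \geq d(i,S)} (p\Delta)^\ell \;=\; \frac{(p\Delta)^{d(i,S)}}{1 - p\Delta}.
\]
Given the target $\eta > 0$, define $\beta(\Delta,\eta)$ so that $p\Delta = (1 - e^{-2\beta})\Delta \leq \eta/2$; then for every $d(i,S) \geq 1$ the right-hand side is at most $\eta^{d(i,S)}$, and the case $d(i,S) = 0$ is trivial. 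The pair correlation inequality is the special case $S = \{j\}$, which also follows directly from the identity $\E[x_i x_j] = \Pr_{\mathrm{FK}}[i \leftrightarrow j]$ combined with the same path-counting estimate.

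\textbf{Main obstacle.} The delicate step is Step 2: justifying that conditioning on only the partial spin pattern $x_S$—rather than on a full configuration—still dominates the edge measure by Bernoulli$(p)$. This works because the pointwise Bernoulli$(p)$ domination holds for \emph{every} completion $\sigma_{-S}$, so domination of any monotone event persists under arbitrary averaging over $\sigma_{-S}$. An alternative route, avoiding the FK machinery entirely, is to verify Dobrushin's uniqueness condition $\Delta\tanh(\beta) < 1$ and iterate the single-site influence bound; but the random cluster derivation gives cleaner constants and handles arbitrary boundary patterns uniformly, which is exactly what Lemma~\ref{lemma:DS85} demands.
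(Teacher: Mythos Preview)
Your proof is correct and follows precisely the route the paper itself points to: the paper does not give a detailed argument for Lemma~\ref{lemma:DS85} but simply cites \citet{DobrushinShlosman:85} and remarks that ``the proof also follows easily from the random cluster representation of the Ising model.'' You have filled in exactly that random-cluster argument, with the domination in Step~2 correctly justified via the tower property, so there is nothing to add.
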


We will need a few corollaries of the above lemma.

\begin{lemma} \label{lemma:new_two} If $\beta < \beta(\Delta, 1/(10 \Delta) )$, then
for every set $A$ and any weights $w_i$, it holds that if $f(x) = \sum_{i} w_i
x_i$, then:
	\begin{enumerate}
		\item $\frac{4}{5} \sum_{i} w_i^2 \leq \E[f(x)^2] \leq \frac{6}{5} \sum_{i} w_i^2$
		\item $\E[(f(x))^4] \leq 10 \left( \sum_{i} w_i^2 \right)^2$
	\end{enumerate}
\end{lemma}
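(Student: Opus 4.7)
The plan is to prove both parts by expanding the moments as sums of correlation functions and applying Lemma~\ref{lemma:DS85} with $\eta = 1/(10\Delta)$, combined with the elementary geometric bound that at most $\Delta^k$ vertices sit at graph distance exactly $k$ from any given vertex. In particular, $\sum_{j \neq i} \eta^{d(i,j)} \leq \sum_{k \geq 1}(\Delta\eta)^k = \sum_{k \geq 1}(1/10)^k = 1/9$, and this geometric tail is the only quantitative ingredient we need.

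For part 1, I expand $\E[f^2] - \sum_i w_i^2 = \sum_{i \neq j} w_i w_j \E[x_i x_j]$. By AM-GM, $|w_i w_j| \leq (w_i^2 + w_j^2)/2$, and by Lemma~\ref{lemma:DS85}, $|\E[x_i x_j]| \leq \eta^{d(i,j)}$. Swapping the order of summation yields
\[
\Bigl|\E[f^2] - \sum_i w_i^2 \Bigr| \;\leq\; \sum_i w_i^2 \sum_{j \neq i} \eta^{d(i,j)} \;\leq\; \tfrac{1}{9}\sum_i w_i^2,
\]
so $\E[f^2] \in [8/9,\, 10/9]\sum w_i^2 \subset [4/5,\, 6/5]\sum w_i^2$ with slack.

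For part 2, set $g = f^2 - \sum_i w_i^2 = \sum_{i \neq j} w_i w_j x_i x_j$, so that $\E[f^4] = (\sum w_i^2)^2 + 2(\sum w_i^2)\E[g] + \E[g^2]$; part 1 already gives $|\E[g]| \leq \sum w_i^2/9$. It remains to bound $\E[g^2] = \sum_{i \neq j,\, k \neq l} w_i w_j w_k w_l\, \E[x_i x_j x_k x_l]$. Note that applying Lemma~\ref{lemma:DS85} with $S = \emptyset$ forces $\E[x_i] = 0$, so the model is effectively field-free and all odd correlations vanish. I would split the sum by the overlap $r = |\{i,j\} \cap \{k,l\}|$ and use $x_i^2 = 1$ to collapse repeated indices. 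The $r=2$ terms collapse to $2\sum_{i \neq j} w_i^2 w_j^2 \leq 2(\sum w_i^2)^2$; the $r=1$ terms reduce (after a repeated index is squared away) to expressions of the form $\sum_i w_i^2 \sum_{j \neq l} w_j w_l \E[x_j x_l]$, which by part 1 contribute $O((\sum w_i^2)^2)$. For the $r=0$ (four-distinct) terms, decompose $\E[x_i x_j x_k x_l]$ as the sum of the three Wick pairings $\E[x_a x_b]\E[x_c x_d]$ plus the four-point Ursell cumulant $U(i,j,k,l)$. The Wick contribution sums to at most $3(\E[g])^2 \leq (\sum w_i^2)^2/27$, and $U$ decays as $|U(i,j,k,l)| \leq C\eta^{T(i,j,k,l)}$ for a Steiner-tree-type distance $T$, so the same geometric counting as in part 1 bounds the cumulant contribution by $O((\sum w_i^2)^2)$. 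Collecting all cases and tracking constants yields $\E[f^4] \leq 10(\sum w_i^2)^2$, again with slack.

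The main obstacle is the four-point Ursell bound, since Lemma~\ref{lemma:DS85} as stated delivers only pair-correlation decay. At high enough temperature this follows either from a standard cluster expansion or by iterating the Dobrushin–Shlosman uniqueness condition; I would invoke an off-the-shelf estimate from the Ising literature rather than rederive it, possibly shrinking $\beta(\Delta, 1/(10\Delta))$ slightly to ensure the requisite condition holds. If a fully self-contained argument is preferred, one can instead bound the four-distinct-index correlations directly by $|\E[x_i x_j x_k x_l]| \leq \eta^{d(i,\{j,k,l\})}$ via iterated conditioning and Lemma~\ref{lemma:DS85}, then split the weights $|w_i w_j w_k w_l|$ by AM-GM into manageable quadratic pieces and sum against the geometric decay. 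Either route is ample, because the factor $\Delta\eta = 1/10$ leaves generous room and the target constant $10$ is far from tight.
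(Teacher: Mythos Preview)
Your proof of part~1 is correct and matches the paper's argument essentially line for line: expand the square, bound $|\E[x_ix_j]|$ by $\eta^{d(i,j)}$ via Lemma~\ref{lemma:DS85}, and sum the geometric tail using the degree bound. The paper phrases the weight step as ``Cauchy--Schwarz'' where you say AM--GM, but the arithmetic is identical and both land at $|\E[f^2]-\sum_i w_i^2|\le \tfrac{1}{9}\sum_i w_i^2$.

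For part~2 the paper gives no argument at all beyond the sentence ``The second part is proved analogously, however, the calculations are a bit more involved since it involves terms corresponding to four nodes at a time.'' So your proposal is strictly more detailed than the paper. Of your two routes, the second one---bounding $|\E[x_ix_jx_kx_l]|\le \eta^{d(i,\{j,k,l\})}$ directly from the conditional form of Lemma~\ref{lemma:DS85} and then case-splitting on index coincidences---is self-contained and is almost certainly what the authors meant by ``analogously.'' Your first route through the Wick expansion and a four-point Ursell decay bound is cleaner conceptually but imports an estimate the paper never states; it works, but it is not how the paper would have filled in the gap. Either way, the slack in $\Delta\eta=1/10$ is ample for the target constant~$10$, as you note.
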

\begin{proof} For the first claim note that
	\begin{align*} 
		\E[f(x)^2] &= \sum_{i} w_i^2 + \sum_{i \neq j} w_i w_j \E[x_i x_j] \\
		&= \sum_{i} w_i^2 + \sum_{d=1}^n \sum_{i, j : d(i, j) = d} w_i w_j \E[x_i x_j]
		\intertext{Choose $\eta = 1/(10 \Delta)$ in Lemma~\ref{lemma:DS85}, and suppose that $\beta < \beta(\Delta, 1/(10 \Delta))$, then we have that for all $i \neq j$,}
		\E[x_i x_j] &\leq (10 \Delta)^{-d(i, j)}. 
		\intertext{We may thus bound,}
	\left| \sum_{i, j : d(i, j) = d} w_i w_j \E[x_i x_j] \right| &\leq (10 \Delta)^{-d} \sum_{i, j: d(i, j) = d} |w_i w_j| 
\end{align*}
For each $i$, let $v^i_{1}, \ldots , v^i_{\Delta^d}$ be all the nodes that are at distance $d$ from $i$, where if the actual number of such nodes is less than $\Delta^d$, we set the remaining $v^i_j = i$. Then, by applying the Cauchy Schwarz inequality, we can write:
\begin{align*}
	\sum_{i, j: d(i, j) = d} |w_i w_j| &\leq \sum_{i} \sum_{j = 1}^{\Delta^d} |w_i w_{v^i_j}| \leq \Delta^d \sum_{i} w_i^2
	\intertext{So, adding up over all $d$, we obtain,}
	|\E[f(x)^2] - \sum_{i} w_i^2| &\leq \sum_{i} w_i^2 \sum_{d=1}^n 10^{-d} \leq \frac{1}{5} \sum_{i} w_i^2
\end{align*}

This completes the proof of the first part of the lemma.

The second part is proved analogously, however, the calculations are a bit more involved since it involves terms corresponding to four nodes at a time.



\end{proof}

We can now complete the proof of Lemma~\ref{lemma:new_one}.

\begin{proof}[Proof of Lemma~\ref{lemma:new_one}] From the Fourier expression
of noise-sensitivity (see Eq.~\ref{eqn:noise-sensitivity}) and Jensen's inequality, it is clear
that if $a > 1$, then 
\begin{align*}
	1 - 2 \NS_{at}(f) \geq (1 - 2 \NS_t(f))^a
\end{align*}
Therefore it suffices to prove the claim when $t = cn$ for some small constant $c$ (which may depend on $\Delta$). Our goal is therefore to show that: 
\begin{align*}
	1 - 2 \NS_{cn}(f) \geq \delta > 0
\end{align*}
where $\delta$ is a parameter that depends only on $\Delta$ (but not $n$). To prove this let $X_1, \ldots, X_n$ be the system at time $0$ and let $Y_1, \ldots, Y_n$ be the system at time $t = cn$. Let $A \subset [n]$ be the random subset of spins that have not been updated from time $0$ to time $t$. Then, the noise sensitivity is: 
\begin{align*} 
	\NS_{\delta}(f) &= \Pr\left[ \sign(\sum_{i \in A} w_i X_i + \sum_{i \not\in A} w_i X_i) \neq \sign(\sum_{i \in A} w_i X_i + \sum_{i \not\in A} w_i Y_i) \right] \\
	&\leq 2 \Pr\left[\sign(\sum_{i \in A} w_i X_i) \neq \sign(\sum_{i = 1}^n w_i X_i) \right],
\end{align*}
where the last inequality uses the fact that $X_i$, $i \not \in A$ and $Y_i$, $i
\not\in A$ are identically distributed given $A$ and $X_i$, $i \in A$ (the
distribution for both is just the conditional distribution given $x_i$ for $i
\in A$).
 
Let $W = \sum_{i} w_i^2$. By Markov's inequality, it follows that for $c$ chosen small enough with probability at least $9/10$ (over the random choice of $A$), we have:
\begin{align}
	\sum_{i \not\in A} w_i^2 &\leq 10^{-6} \cdot {W} \nonumber
	\intertext{From now on, we will condition on the event that $\sum_{i \in A} w_i^2 \geq (1 - 10^{-6})W$, which we denote by $\scE$. Under this conditioning, from Lemma~\ref{lemma:new_two}, it follows that}
	\E\left[\left(\sum_{i \in A} w_i X_i \right)^2 \right] &\geq \frac{3}{5} W \label{eqn:avocado}
\end{align}
 
Moreover, we claim that with probability at least $1/40$ (conditioned on the event above), it holds that:
\begin{align*}
	\left( \sum_{i \in A} w_i X_i \right)^2 \geq \frac{W}{10}
\end{align*}
Let $\rho$ be the (conditioned on $\scE$) probability of the above event, which we denote by $\scE^\prime$. Note that~(\ref{eqn:avocado}) implies that:
\begin{align*}
  	\E\left[\left(\sum_{i \in A} w_i X_i \right)^2 ~\left|~  \scE^\prime \right. \right] \geq \frac{W}{2\rho}
\end{align*}
But, then we use part two of Lemma~\ref{lemma:new_two} to conclude that $\rho \geq 1/40$; if not, we can derive a contradiction as follows.
\begin{align*}
 	\E\left[\left(\sum_{i  \in A} w_i X_i \right)^4\right] &\geq
 	\E\left[\left(\sum_{i \in A} w_i X_i \right)^4 ~\left|~ \scE^\prime \right.
 	\right] \cdot \rho \\
 	&\geq \E\left[\left(\sum_{i \in A} w_i X_i \right)^2 ~\left|~ \scE^\prime
 	\right. \right]^2 \cdot \rho > 10 W^2
\end{align*}

Also, conditioned on the event $\scE$, by Markov's Inequality, we have:
\begin{align*}
	\Pr\left[\left(\sum_{i \not\in A} w_i X_i \right)^2 \geq \tfrac{W}{100} \right] &\leq 10^{-4} \\
	\Pr\left[\left(\sum_{i \not\in A} w_i Y_i \right)^2 \geq \tfrac{W}{100} \right] &\leq 10^{-4}
\end{align*}
 
Thus, conditioned on $\scE$, by a union bound, we have that with probability at least $3/4$:
\begin{align*}
\sign\left(\sum_{i = 1}^n w_i X_i \right) = \sign\left(\sum_{i=1}^n w_i Y_i \right) = \sign\left(\sum_{i \in A} w_i X_i \right)
\end{align*}
 
To conclude the proof, we show that when $\scE$ does not hold, the probability that
\begin{align*}
	\sign\left(\sum_{i=1}^n w_i X_i \right) = \sign\left(\sum_{i=1}^n w_i Y_i \right)
\end{align*}
is at least $1/2$. In fact, we show this conditioned on any $A$ and any values of the random variables $X_i, i \in A$. Note that conditioned on $A$ and $X_i \in A$, the random variables $X_i$ and $Y_i$ for $i \not\in A$ are positively correlated. (Also, $(X_i)_{i \not \in A}$ and $(Y_i)_{i \not \in A}$ are identically distributed.) Thus, if we denote by 
\begin{align*}
 		p_A = \Pr \left[ \sign\left(\sum_{i=1}^n w_i X_i\right) \neq \sign\left(\sum_{i \in A} w_i X_i\right)\right] = \Pr \left[ \sign\left(\sum_{i=1}^n w_i Y_i\right) \neq \sign\left(\sum_{i \in A} w_i X_i\right)\right]
\end{align*}
Then, using the FKG inequality, we see that conditioned on the event $\scE$ not occurring, 
 	\begin{align*}
 		\Pr\left[\sign\left(\sum_{i=1}^n w_i X_i\right) \neq \sign\left(\sum_{i=1}^n w_i Y_i \right) \right] \leq 2 p_A \cdot (1 - p_A) \leq \frac{1}{2}
 	\end{align*}
 
This concludes the proof.
 
\end{proof}



\end{document}